\documentclass[journal]{IEEEtran}

\usepackage[utf8]{inputenc} 
\usepackage[T1]{fontenc}    
\usepackage[hidelinks]{hyperref}       
\usepackage{url}            
\usepackage{booktabs}       
\usepackage{amsfonts}       
\usepackage{nicefrac}       
\usepackage{microtype}      
\usepackage{times}
\usepackage{graphicx} 
\usepackage{subcaption}
\usepackage{tabularx}
\usepackage{bbm}
\usepackage{url}
\usepackage{amsmath}
\usepackage{bbm}
\usepackage{amsthm}
\usepackage{caption}
\usepackage{float}
\usepackage{amssymb}
\usepackage{grffile}
\usepackage{epstopdf}
\usepackage[colorinlistoftodos]{todonotes}
\usepackage{multirow}
\usepackage{lipsum}
\usepackage[linesnumbered,ruled,vlined]{algorithm2e}
\usepackage[inline]{enumitem}

\setlist[enumerate]{align=left,leftmargin=0mm,labelsep*=0pt}
\setlist[enumerate,1]{label=(\roman*), ref=(\roman*)}
\setlist[enumerate,2]{label=(\roman*), ref=(\roman*),leftmargin=0mm}

\newcommand{\argmin}[1]{\underset{#1}{\operatorname{argmin}}}

\newcommand{\R}{{\mathbb{R}}}
\newcommand{\E}{{\mathbb{E}}}

\newcommand{\F}{\cal{F}}

\renewcommand{\S}{\mathcal{S}}

\renewcommand{\S}{\mathcal{S}}
\newcommand{\W}{\mathcal{W}}
\newcommand{\Exp}{\mathbb{E}}

\newcommand{\var}{\operatorname{var}}
\newcommand{\Wb}{{\bar{\mathcal{W}}}}
\newcommand{\bs}{\bar{s}}
\newcommand{\cn}{\mathcal{C}}
\newcommand{\U}{\mathcal{U}}
\newcommand{\G}{\mathcal{G}}
\newcommand{\Q}{\mathcal{Q}}
\newcommand{\est}{\mathcal{E}}

\usepackage{color}

\usepackage[normalem]{ulem}

\newcommand{\hide}[1]{}

\newtheorem{theorem}{Theorem}
\newtheorem{lemma}{Lemma}
\newtheorem{proposition}{Proposition}
\newtheorem{corollary}{Corollary}
\newtheorem{claim}{Claim}

\theoremstyle{definition}
\newtheorem{remark}{Remark}

%
\ifCLASSOPTIONcompsoc
\usepackage[nocompress]{cite}
\else
\usepackage{cite}
\fi

\hyphenation{op-tical net-works semi-conduc-tor}

\begin{document}
	\title{Order Optimal Bounds for One-Shot Federated Learning over non-Convex Loss Functions}
	
	\author{Arsalan~Sharifnassab,~
		\thanks{A. Sharifnassab is with the  Computing Science Department, University of Alberta, Edmonton, Canada (e-mail: sharifna@ualberta.ca).}
		Saber~Salehkaleybar,~
        \thanks{
			S. Salehkaleybar is with the Leiden Institute of Advanced Computer Science (LIACS), Leiden University, Leiden, Netherlands (email:s.salehkaleybar@liacs.leidenuniv.nl).}
		S.~Jamaloddin~Golestani
        \thanks{
            S. J. Golestani  is with the Department
			of Electrical Engineering, Sharif University of Technology, Tehran, 
			Iran 
			(e-mails:golestani@sharif.edu). 
		}
	}

	\markboth{One-Shot Federated Learning for Non-Convex Loss Functions}%
	{Shell \MakeLowercase{\textit{et al.}}: Non-Convex One-Shot  Federated Learning }
	
	\IEEEtitleabstractindextext{%
		\begin{abstract}
			We consider the problem of federated learning in a one-shot setting in which there are $m$ machines, each observing $n$ sample functions from an unknown distribution on non-convex loss functions. 
			Let $F:[-1,1]^d\to\R$ be the expected loss function with respect to this unknown distribution. The goal is to find an estimate of the minimizer of $F$. 
			Based on its observations, each machine generates a signal of bounded length $B$ and sends it to a server. The server collects signals of all machines and outputs an estimate of the 
			minimizer of $F$.  
			We show that  the expected loss of any algorithm is lower bounded by  $\max\big(1/(\sqrt{n}(mB)^{1/d}), 1/\sqrt{mn}\big)$, up to a logarithmic factor. 
			We then prove that this lower bound is order optimal in $m$ and $n$  by presenting a distributed learning algorithm, called Multi-Resolution Estimator for Non-Convex loss function (MRE-NC), whose expected loss matches the lower bound for large $mn$ up to  polylogarithmic factors. 
			
		\end{abstract}
		
		\begin{IEEEkeywords}
			Federated learning, Distributed learning, Communication efficiency, non-Convex Optimization.
	\end{IEEEkeywords}}

	\maketitle

	\IEEEdisplaynontitleabstractindextext

	%
	\IEEEpeerreviewmaketitle

	\section{Introduction}\label{sec:introduction}
	
	\subsection{General Background}
	\IEEEPARstart{C}{onsider} a set of $m$ machines where each machine has access to $n$ samples drawn from an unknown distribution $P$. Based on its observed samples, each machine sends a single message of bounded length $B$ to a server. The server then collects messages from all machines and estimates values for model's parameters that minimize an expected loss function with respect to distribution $P$.
	
	The above one-shot setting, in which there is a single message transmission between machines and the server, is one of the scenarios in a machine learning paradigm known as ``Federated Learning''. With the advances in smart phones, these devices can collect unprecedented amount of data from interactions between users and mobile applications. This huge amount of data can be exploited to improve the performance of learned models running in smart devices. Due to the sensitive nature of the data and privacy concerns, federated learning paradigm suggests to keep users' data in the devices and train the parameters of the models by passing messages between the devices and a central server. Since mobile phones are often off-line or their connection speeds in uplink direction might be slow, it is desirable to train models with minimum number of message transmissions. 
	
	Several works have studied the problem of minimizing a convex loss function in the context of one shot distributed learning, and order optimal lower bounds and algorithms are available \cite{salehkaleybar2019one}. 
	However, error lower bounds in the more practical case where the loss function is non-convex, have not been well-studied. 
	
	\subsection{Our Contributions}
	In this paper, we first focus on a regime where $mB$ is large and propose a lower bound on the performance of all one-shot federated learning algorithms. We show that for sufficiently large number of machines $m$ and for any estimator $\hat\theta$, there exists a distribution $P$ and the corresponding loss function $F$ such that $\E\big[F(\hat\theta)-F(\theta^*)\big] \ge \max\big(1/(\sqrt{n} (mB)^{1/d} \ln mB), 1/\sqrt{mn}\big)$,  where $n$ is number of samples per machine, $d$ is dimension of model's parameters, $B$  is signal length in bits, and $\theta^*$ is the global minimizer of $F$.  
	Furthermore, we show that this lower bound is order optimal in  terms of $n$, $m$, and $B$.
	In particular, we propose an estimator, called Multi-Resolution Estimator for Non-Convex loss function (MRE-NC), and show that for large values of $mn$, the output $\hat\theta$ of the MRE-NC algorithm satisfies   $\E\big[F(\hat\theta)-F(\theta^*)\big] \simeq\sqrt{d}\max\big(1/(\sqrt{n}(mB)^{1/d}), 1/\sqrt{mn}\big)$. 
	We also study error-bounds under tiny communication budget, and show that if $B$ is a constant and $n=1$, the minimax error\footnote{The minimax error is defined as the smallest achievable error by an optimal estimator, considering the worst-case scenario across all possible distributions $P$.} does not go to zero even if $m$ approaches infinity and even if $d=1$. 
	
	We adopt an information-theoretic approach, focusing primarily on sample complexity rather than computational complexity, while assuming the availability of unlimited computational resources. Our results reveal a fundamental limitation of federated learning in the presence of a restricted communication budget. The lower bound in Theorem~\ref{th:lower bound} demonstrates a "curse of dimensionality" for scenarios in which $mB$ is sub-exponential in $d$. Specifically, in a centralized setting where all $nm$ data functions are accessible on the server, a computationally exhaustive algorithm can achieve a minimax error of $1/\sqrt{mn}$, significantly lower than our federated learning minimax lower bound $(1/\sqrt{n})\max(1/\sqrt{m}, 1/(mB)^{1/d})$ for a sub-exponential communication budget $B$ with respect to $d$. 
	Interestingly, the bound primarily depends on the total number of bits $mB$ received at the server. As a consequence, one can trade off the number of machines $m$ and the communication budget $B$ to maintain a fixed minimax error bound.


	Another contribution of this paper is the development of novel machinery in the proof of Theorem~\ref{th:lower bound}. The machinery involves an information-theoretic lower bound for an abstract coin-flipping system (see Section~\ref{subsec:pr2}), which can be of broader  interest for the analysis of other federated learning settings, and distributed systems in general.
	

	\subsection{Related Works}
	McMahan et al. \cite{mcmahan2017communication} considered a decentralized setting in which each machine has access to a local training data and a global model is trained by aggregating local updates. They termed this setting, ``Federated Learning'' and mentioned some of its key properties such as severe communication constraints and massively distributed data with non-i.i.d distribution. To address some of these challenges, they proposed ``FedAvg'' algorithm, which executes in several synchronous rounds. In each round, the server randomly selects a fraction of machines and sends them the current model. Each machine performs a pre-determined number of training phases over its own data.  Finally, the updated model at the server is obtained by averaging received models from the machines.
	The authors trained deep neural networks for tasks of image classification and next word prediction in a text and experimental results showed that the proposed approach can reduce the communication rounds by $10-100$ times compared with the stochastic gradient descent (SGD) algorithm. FedAvg is  generally guaranteed to converge to a first-order stationary point \cite{zhou2018convergence,wang2021cooperative}, 
	which differs from the notion of convergence to global minimum considered in this work. Moreover, the FedAvg setting is not a one-shot scenario and involves two-way communication between
	the server and the machines, resulting in  sub-optimal performance in 
	one-shot setting.

	After introducing the setting of federated learning by McMahan et al. \cite{mcmahan2017communication}, several research work addressed its challenges such as communication constraints, system heterogeneity (different computational and communication capabilities of the machines), statistical heterogeneity (data is generated in non-identically distributed manner), privacy concerns, and malicious activities. For instance, different approaches have been proposed in order to reduce the size of messages by performing quantization techniques \cite{konevcny2016federated,jiang2018linear}, updating the model from a restricted space \cite{konevcny2016federated}, or utilizing compression schemes \cite{caldas2018expanding,sattler2019robust,wu2022communication,tang20211}. To resolve system heterogeneity issues such as stragglers, asynchronous communication schemes with the assumption of bounded delay between the server and the machines have been devised \cite{zinkevich2010parallelized,ho2013more,dai2015high,nguyen2022federated,huba2022papaya}. There are also several works providing convergence guarantees for the case of non-i.i.d. samples distributed among the machines \cite{yu2018parallel,yu2019linear,li2020federated,wang2019adaptive,li2022federated,huang2022learn,li2021model,wang2020tackling,wang2020optimizing,zhu2021data}. Moreover, some notions of privacy can be preserved by utilizing differential privacy techniques \cite{abadi2016deep,bassily2014private,geyer2017differentially,papernot2018scalable,lu2019differentially,mo2021ppfl,girgis2021shuffled,gong2021ensemble,lyu2022privacy,liu2021privacy} or secure multi-party computation \cite{riazi2018chameleon,mohassel2018aby3,rouhani2018deepsecure,chen2019secure}.

	A similar setting to federated learning has been studied extensively in the literature of distributed statistical optimization/estimation with the main focus on minimizing convex loss functions with communication constraints. In this setting, 
	machines mainly reside in a data center,  
	they are much more reliable than mobile devices, and straggle nodes are less problematic. If there is no limit on the number of bits that can be sent by the machines, then each machine can send its whole data to the server. In this case, we can achieve the estimation error of a centralized solution that has access to entire data. The problem becomes non-trivial if each machine can only send a limited number of bits to the server. 
	In the one-shot setting, 
	where there is only a single one-directional message transmission from each machine to the server,
	Zhang et al. \cite{zhang2012communication} proposed a simple averaging method, in which each machine computes an estimate of optimal parameters that minimizes the empirical loss function over its own data and sends them to the server. The output of the server is the averege over the received values. For the convex functions with some additional assumptions, they showed that this method has expected error $O(1/\sqrt{mn}+1/n)$. It can be shown that this bound can be improved to $O(1/\sqrt{mn}+1/n^{1.5})$ via boot-strapping \cite{zhang2012communication} or $O(1/\sqrt{mn}+1/n^{9/4})$ by optimizing a surrogate loss function using Taylor series expansion \cite{jordan2018communication}. 
	
	Recently, for the convex loss functions in the setting of one-shot federated learning, Salehkaleybar et al. \cite{salehkaleybar2019one} proposed a lower bound on the estimation error achievable by any algorithm. They also proposed an order-optimal estimator whose expected error meets the mentioned lower bound up to a polylogarithmic factor. 
	Our bounds have three main differences with respect to \cite{salehkaleybar2019one}:  
	\begin{itemize}
 \item Here we consider general non-convex loss functions as opposed to the convex loss assumption in \cite{salehkaleybar2019one}; \item We bound $F(\theta) - F(\theta^*)$, whereas the bound in \cite{salehkaleybar2019one} is on $\|\theta - \theta^*\|$ and translates into a much weaker bound on $F(\theta) - F(\theta^*)$ compared to the results of this paper\footnote{Note that for a constant $\delta>0$ and a twice differentiable function $F$, a bound of size $\delta$ on $\|\theta - \theta^*\|$  translates into a bound of size $\delta^2$ on $F(\theta) - F(\theta^*)$, whereas in this paper we prove a much stronger (i.e., larger) lower bound of size $\delta$ on $F(\theta) - F(\theta^*)$.  More concretely,  letting $\delta=1/n^{1/2}(mB)^{1/d}$, for a convex function $F$ with bounded second derivative  (as assumed in \cite{salehkaleybar2019one}), a lower bound $\|\theta - \theta^*\| = \Omega(\delta)$ can  only imply $F(\theta) - F(\theta^*)= \Omega(\delta^2)$. In the present work, we prove a much stronger lower bound $F(\theta) - F(\theta^*)= \Omega(\delta)$. As such, the bounds in this work and \cite{salehkaleybar2019one}, despite their similarities, do not imply each other.}; and \item The proof of the present bound requires a whole new machinery that is completely different from the proof techniques used in \cite{salehkaleybar2019one}. 
	\end{itemize}
 
	Zhou et al. \cite{zhou2020distilled} proposed a one-shot distillation method where each machine distills its own its data and sends the synthetic data to the server, which then trains the model  over  whole collected data. Moreover, they evaluated the proposed method experimentally on some real data, showing remarkable reduction in the communication costs. Later, Armacki et al. \cite{armacki2022one} considered clustered federated learning \cite{ghosh2022efficient} with one round of communication between machines and the server. They showed that for the strongly convex case, local computations at the machines and a convex clustering based aggregation step at the server can provide an order-optimal mean-square error rate in terms of sample complexity.

	For the case of multi-shot setting, a popular approach is based on stochastic gradient descent (SGD) in which the server queries the gradient of empirical loss function at a certain point in each iteration and the gradient vectors are aggregated by averaging to update the model's parameters \cite{bottou2010large,lian2015asynchronous,mcmahan2017communication}. In fact, FedAvg algorithm \cite{mcmahan2017communication} can be seen as an extension of SGD algorithm where each machine perform a number of training phases over its own data in each round. Although these solutions can be applied to non-convex loss functions, there is no theoretical guarantee on the quality of the output. Moreover, in the one-shot setting, the problem becomes more challenging since these gradient descent based methods cannot be adopted easily to this setting.

	\begin{figure}[t]
		\centering
		\includegraphics[width=7cm]{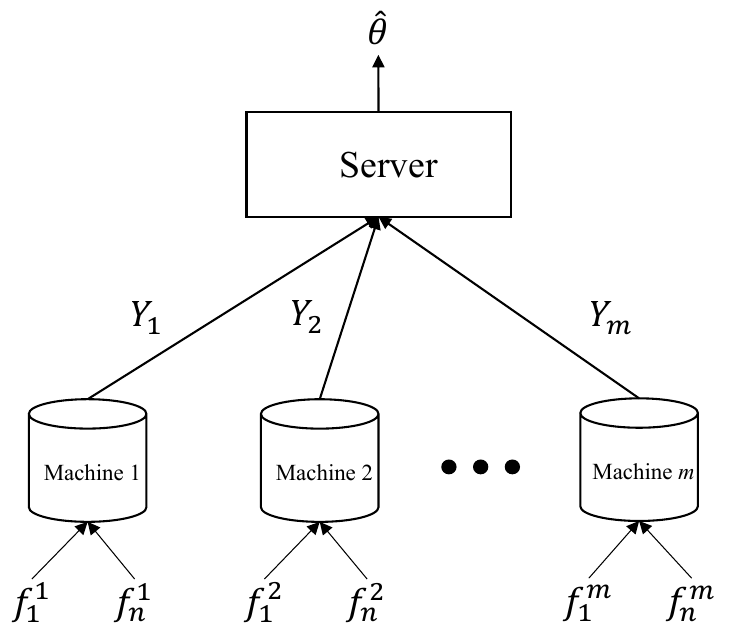}
		\caption{
			The considered distributed system consists of $m$ identical machines, each observing $n$ independent sample functions from an unknown distribution $P$. Each machine $i$ sends signal $Y_i$ of length $B$ bits to a server. The sever collects all the signals and returns an estimate $\hat{\theta}$ for the optimization problem in \eqref{eq:opt problem}.
		}
		\label{Fig:system}
	\end{figure}

	\subsection{Outline}
	The paper is organized as follows. 
	We begin with a detailed model and problem definition in Section~\ref{sec:problem def}.
	In Section \ref{sec:lower bound}, we provide a lower bound on the performance of any algorithm.
	In Section~\ref{sec:main upper convex}, we present the MRE-NC algorithm and an upper bound on its expected error that matches the previous lower bound up to polylogarithmic factors in $m$ and $n$. 
	We propose a constant lower bound on achievable error under tiny (constant) communication budget in Section~\ref{sec:tiny}.
	The proofs of our main results are presented in Sections~\ref{app:proof lowerbound} and~\ref{sec:proof main alg c}, with the details relegated to appendices for improved readability.
	Afterwards, we report some numerical experiments on small size problems in Section~\ref{sec:numerical}.
	Finally, in Section~\ref{sec:discussion}, we conclude with some remarks and open problems. 
	


	\medskip

	\section{Problem Definition} \label{sec:problem def}
	Consider a positive integer $d$ and let   $\mathcal{F}$ be the   collection of Lipschitz continuous functions over $[-1,1]^d$.
	More concretely, for any $f\in\F$ and any $\theta,\theta'\in [-1,1]^d$, we have 
	\begin{equation} \label{eq:lip}
		| f(\theta) - f(\theta')|\leq \|\theta-\theta'\|.
	\end{equation}
	Let $P$ be an unknown probability distribution over the functions in $\mathcal{F}$.  
	We define the expected loss function as follows:
	\begin{equation} \label{eq:def of the loss F}
		F(\theta) = \mathbb{E}_{f\sim P}\big[f(\theta)\big], \qquad \theta\in [-1,1]^d.
	\end{equation} 
	Our goal is to estimate a parameter $\theta^*$ that minimizes $F$:
	\begin{equation}\label{eq:opt problem}
		\theta^*=\argmin{\theta \in [-1,1]^d}\, F(\theta) .
	\end{equation}
	We assume that $\theta^*$ lies in the interior of the cube $[-1,1]^d$.
	
	The objective function is to be minimized in a distributed manner, as follows. The distributed systems consists of $m$ machines and a server.
	Each machine $i$ observes $n$ independently and identically distributed samples $\{f^i_1,\cdots,f^i_n\}$ drawn from the probability distribution $P$. Based on its observed samples, machine $i$ sends a signal $Y_i$ of length $B$ bits to the server.\footnote{In this context, the letter B represents the number of bits
		in each message and should not be confused with communication bandwidth, which is also
		commonly denoted by B in the communication literature.} The server collects the signals from all machines and returns an estimation of $\theta^*$, which we denote by $\hat{\theta}$. Note that in this model we consider one-way one-shot  communication between machines and the server, in the sense that each machine sends a single message to the server, while receiving no message from the server. We also assume that all machines are identical and are not enumerated in advance.   Please refer to Fig.~\ref{Fig:system} for an illustration of the distributed system.\footnote{The model of the distributed system here is similar to the one in \cite{saleh2019}.}

	\medskip


	\section{The Lower Bound}\label{sec:lower bound}
	In this section, we propose our main result, that is a lower bound on the estimation error of any algorithm. 
	We consider  a regime where $mB$ is large. In particular,
	for any constant $\cn\ge1$, given $B$ and $n$, we let $M_\cn$ be the smallest number $m$ that satisfies all of the following equations:
	\begin{equation}\label{eqa:lower condition 1} 
		\cn\sqrt{\ln(mB)} \,\ge\, 15,
	\end{equation}
	\begin{equation}\label{eqa:lower condition 2}
		mB\,\ge\, 10240,
	\end{equation}
	\begin{equation}\label{eqa:lower condition 3}
		\frac{23}{\cn \sqrt{mB}}\, +\, \frac{1}{mB} \le \frac17,
	\end{equation}
	\begin{equation}
		\begin{split}
			\frac1{B\log_2 mB}\,&\Bigg[\left(\frac{313}{\cn}\right)^2\,+\, \frac{94^2}{\cn \sqrt{mB}} \,+\, \frac{192}{mB}\\ &\quad+\frac{15}{(mB)^{1.5}} \,+\, \frac{49+6B}{(mB)^2}\Bigg] \,\le\, \frac1{10},
		\end{split}
		\label{eqa:lower condition 4}
	\end{equation}
	\begin{equation}\label{eq:mn bound}
		mn\ge 350000.
	\end{equation}
	As an example, these conditioned are satisfied for  for  $\cn=25$, $n=1$, $B=64$, and $M_\cn=4\times10^5$. 
	The following theorem presents our main lower bound.
	\begin{theorem}\label{th:lower bound}
		For any  $\cn\ge1$, any $m\ge M_\cn$, and any estimator with output denoted by $\hat\theta$, there exists a distribution $P$  and corresponding function $F$ defined in \eqref{eq:def of the loss F}, for which with probability at least $1/2$,
		\begin{equation*}
			F\big(\hat\theta\big)-F\left(\theta^*\right) \,\ge\,  \max\left(\frac{1}{\cn \sqrt{n} (mB)^{1/d} \ln mB}, \,\frac1{4\sqrt{mn}}\right).
		\end{equation*}	
	\end{theorem}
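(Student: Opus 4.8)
The plan is to prove the two arguments of the maximum separately; the stated bound then follows by keeping whichever dominates.

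\emph{The $1/(4\sqrt{mn})$ term.} This part ignores the communication constraint entirely: even if every machine forwarded its $n$ sample functions verbatim, the server would see only $mn$ i.i.d.\ draws from $P$, so the obstruction is purely statistical. I would run a two‑point (Le Cam) reduction. Fix a separation $\epsilon$ of order $1/\sqrt{mn}$ and construct two distributions $P_0,P_1$ over $\F$ whose expected losses $F_0,F_1$ are $1$‑Lipschitz, have interior minimizers with $\|\theta^*_0-\theta^*_1\|\ge 2\epsilon$, yet satisfy $\mathrm{KL}(P_0\|P_1)=O(\epsilon^2)$ per function (a flat loss perturbed by a shallow Lipschitz valley whose side changes with the hidden bit). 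Then the divergence over all $mn$ samples is $O(mn\epsilon^2)=O(1)$, so no test distinguishes $P_0$ from $P_1$ with error below a fixed constant; hence under one of the two instances any $\hat\theta$ is at distance $\ge\epsilon$ from the true minimizer with probability at least $1/2$, giving $F(\hat\theta)-F(\theta^*)\ge\epsilon$. Matching the constants to $1/(4\sqrt{mn})$ is what condition \eqref{eq:mn bound} buys.

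\emph{The communication term: the hard family.} Write $\rho:=1/(\cn\sqrt n\,(mB)^{1/d}\ln mB)$. The idea is to build a large finite family $\{P_v:v\in\mathcal V\}$ indexed by a hidden parameter $v$ that encodes the minimizer's position on a grid of resolution $\Theta(\rho)$ inside $[-1,1]^d$, so that $|\mathcal V|$ is of order $\rho^{-d}$ and, for $v\ne v'$, the minimizers are separated enough that $F_v(\theta^*_{v'})-F_v(\theta^*_v)=\Omega(\rho)$; each $F_v$ is $1$‑Lipschitz with interior minimizer by construction. Crucially, the sampling noise is calibrated by a small perturbation scale $\delta$ so that a single draw $f\sim P_v$ carries only a tiny amount of information about $v$; the multi‑resolution structure is what allows this to coexist with the $\Omega(\rho)$ separation of minimizers.

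\emph{Information accounting.} Let $Y_1,\dots,Y_m$ be the signals and $V$ uniform on $\mathcal V$. Since $Y_i$ is a $B$‑bit function of machine $i$'s data alone, $V\to(f^i_1,\dots,f^i_n)\to Y_i$ is Markov, so $I(Y_i;V)\le\min\big(B\ln2,\ I(f^i_1,\dots,f^i_n;V)\big)$, and the weak‑signal calibration makes the second argument of order $n\delta^2$ times a structural factor. Because the signals come from independent data, the standard tensorization bound gives $I(Y_1,\dots,Y_m;V)\le\sum_i I(Y_i;V)\le m\min\big(B\ln2,\ c\,n\delta^2\big)$. Feeding this into Fano's inequality, the server fails to identify $V$ with probability bounded away from $0$ whenever $m\min(B,n\delta^2)$ is a small enough fraction of $\log|\mathcal V|=\Theta(d\log(1/\rho))$; optimizing the free scale $\delta$ subject to the Lipschitz and ``valid probability distribution'' constraints and solving for the largest admissible $\rho$ yields exactly $\rho=\Theta\big(1/(\sqrt n\,(mB)^{1/d})\big)$, with $\ln mB$ and the constant $\cn$ absorbing the slack. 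An estimator that misidentifies $V$ with probability $\ge1/2$ incurs loss $\ge\rho$ with probability $\ge1/2$ under the mixture, hence under some fixed $P_v$, which is the claim.

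\emph{Main obstacle.} The delicate points are twofold. First, the data‑processing estimate $I(Y_i;V)\lesssim\min\big(B,\,n\delta^2\cdot(\text{structural factor})\big)$ must be proved with explicit constants, since the theorem is stated with a concrete $\cn$ and the explicit thresholds \eqref{eqa:lower condition 1}--\eqref{eqa:lower condition 4} on $m$ come precisely from pushing the Fano gap through with those constants. Second, the family $\{P_v\}$ must meet three demands at once---each $F_v\in\F$ with interior minimizer, an $\Omega(\rho)$ loss gap between distinct $v$'s, and per‑sample information small enough that $m$ machines with $B$ bits each still cannot collectively pin down $V$---and reconciling the last two is exactly what forces the hierarchical, multi‑resolution design of the instance. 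Tracking the constants through this construction and the Fano step is the hard part; everything else is bookkeeping.
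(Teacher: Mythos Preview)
Your information accounting has a genuine gap. The bound $I(Y_i;V)\le\min\bigl(B\ln2,\,I(f^i_1,\dots,f^i_n;V)\bigr)$ is correct but too weak to yield $\rho=\Theta\bigl(1/(\sqrt n\,(mB)^{1/d})\bigr)$. Work it out on the paper's own instance: a flat grid of $k=mB$ cells, bias $\delta=1/(\cn\sqrt n\ln k)$, loss gap $\rho=\delta\,k^{-1/d}$, so $|\mathcal V|=k$ and $\log|\mathcal V|=\log(mB)$. The $B$ branch of your $\min$ gives total information $\le mB$, and Fano needs this $\lesssim\log(mB)$, which fails whenever $mB\gg\log(mB)$. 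The data-processing branch gives $I(\text{data}_i;T)\lesssim n\delta^2=1/(\cn^2\ln^2 k)$, hence total information $\lesssim m/(\cn^2\ln^2 k)$, and Fano needs $m\lesssim\ln^3(mB)$---so your argument works only when $m$ is polylogarithmic, not for general $m\ge M_\cn$. Enlarging $|\mathcal V|$ or ``optimizing $\delta$'' does not rescue this: the two branches of your $\min$ are separately too loose to exploit the bit budget and the weak-signal structure \emph{jointly}, and taking the minimum of two insufficient bounds is still insufficient.

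What the paper actually proves (Proposition~\ref{prop:I}) is a much stronger, problem-specific inequality: with $k=mB$ equiprobable hypotheses and a $B$-bit signal computed from $n$ flips of each of $k$ coins, one has $I(T;S^i)\le 3B/(k\ln2)+O(1/k)$. Summed over $m$ machines this is $O(1)$ while $\log k\to\infty$, so Fano succeeds. The proof is the real work: reduce to deterministic encoders; bound $I(T;S)$ by $\tfrac1k\sum_s P(\bar s)\sum_t\bigl(P(\bar s\mid t)/P(\bar s)-1\bigr)^2$; then, via a key lemma that controls $\bigl(\sum_u\alpha_u P(u)\bigr)^2$ by the entropy deficit $n-H(U)$, together with subadditivity of entropy across the $k$ columns, tie that sum back to $H(S)\le B$. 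This step---showing that a $B$-bit signal can carry only $O(B/k)$ bits about \emph{which} of $k$ near-indistinguishable coordinates is special---is the missing idea in your proposal. Two smaller remarks: the lower-bound instance is a single flat grid, not multi-resolution (that structure belongs to the upper-bound algorithm), and for the $1/(4\sqrt{mn})$ term the paper uses a nine-hypothesis test with Berry--Esseen rather than two-point Le~Cam, though your route there is fine.
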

	The proof is given in Section~\ref{app:proof lowerbound}, and involves reducing the problem to the problem of identifying an unfair coin among several fair coins in a specific \emph{coin-flipping system}. 
	We then rely on tools from information theory to derive a lower bound on the error probability of the latter problem.
	
	As an immediate corollary of Theorem~\ref{th:lower bound}, we have
	\begin{corollary}\label{cor:lower}
		For $m\ge M_\cn$,  the expected error of any estimator with output $\hat\theta$ is lower bounded by
		\begin{equation*}\label{eq:lower exp}
			\Exp\Big[ F\big(\hat\theta\big)-F(\theta^*)\Big] \ge \max\left(\frac{1}{2\cn \sqrt{n} (mB)^{1/d} \ln mB}, \,\frac1{8\sqrt{mn}}\right).
		\end{equation*}
	\end{corollary}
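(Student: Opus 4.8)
The plan is to obtain Corollary~\ref{cor:lower} directly from Theorem~\ref{th:lower bound} by a one-line tail-to-expectation argument, since all the substantive content already lies in the theorem. First I would fix $\cn\ge1$ and $m\ge M_\cn$, exactly the hypotheses of the corollary, and invoke Theorem~\ref{th:lower bound} to produce a distribution $P$ and corresponding loss $F$ such that the random variable $Z:=F(\hat\theta)-F(\theta^*)$ satisfies $\Pr[Z\ge t]\ge 1/2$, where $t:=\max\!\big(1/(\cn\sqrt n\,(mB)^{1/d}\ln mB),\,1/(4\sqrt{mn})\big)$. I would then observe that $Z\ge 0$ almost surely: by the problem setup in Section~\ref{sec:problem def} the estimator's output $\hat\theta$ lies in $[-1,1]^d$, and $\theta^*$ minimizes $F$ over $[-1,1]^d$, so $F(\hat\theta)\ge F(\theta^*)$.

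Next I would apply the elementary first-moment bound for nonnegative random variables, $\Exp[Z]\ge \Exp[Z\,\one_{\{Z\ge t\}}]\ge t\,\Pr[Z\ge t]$, which together with $\Pr[Z\ge t]\ge 1/2$ gives $\Exp[Z]\ge t/2$. Finally I would push the factor $1/2$ inside the maximum using $\tfrac12\max(a,b)=\max(a/2,b/2)$, rewriting $t/2$ as $\max\!\big(1/(2\cn\sqrt n\,(mB)^{1/d}\ln mB),\,1/(8\sqrt{mn})\big)$, which is precisely the claimed lower bound. The expectation here is over the sample draws $\{f^i_j\}$ and any internal randomness of the estimator, which is the same probability space on which the probability bound of Theorem~\ref{th:lower bound} is stated, so nothing extra is needed.

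I do not expect any real obstacle: the only points worth checking are (i) nonnegativity of $F(\hat\theta)-F(\theta^*)$, which rests on $\hat\theta\in[-1,1]^d$ and optimality of $\theta^*$, and (ii) that the hypotheses $\cn\ge1$ and $m\ge M_\cn$ are carried over verbatim. If one wanted to be pedantic about an estimator that might return a point outside the cube, one can simply restrict attention to cube-valued estimators as in the model of Section~\ref{sec:problem def}, since $F$ is only defined on $[-1,1]^d$ in the first place.
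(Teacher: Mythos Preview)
Your proposal is correct and is exactly the immediate tail-to-expectation argument the paper has in mind; the paper gives no explicit proof, stating only that the corollary follows immediately from Theorem~\ref{th:lower bound}. Your write-up simply spells out the one-line step $\Exp[Z]\ge t\Pr[Z\ge t]\ge t/2$ using nonnegativity of $Z=F(\hat\theta)-F(\theta^*)$, which is precisely what is intended.
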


	For $d\ge10$, in the previous example where $\cn=25$, $B=64$, and $m\geq 4\times10^5$,
	the lower bound in 
	Corollary~\ref{cor:lower} would be $1/(50 \sqrt{n} (mB)^{1/d} \ln mB)$. 
	
	

	
	\medskip
	\section{Order Optimality of the Lower Bound and the MRE-NC Algorithm} \label{sec:main upper convex}
	Here, we show that the lower bound in Theorem~\ref{th:lower bound} is order optimal. We do this by proposing the MRE-NC estimator and showing that its error upper bound matches the lower bound up to polylogarithmc factors in $mn$. 
	We should however note that despite its guaranteed order optimal worst-case error bound,  benefits of applying the MRE-NC algorithm to real world problems are fairly limited. We refer the interested reader to Section~\ref{sec:discussion} for discussions on the shortcomings and scope of the MRE-NC algorithm.
	We consider general communication budget  $B\ge d\log_2 mn$.
	
	The main idea of the MRE-NC algorithm is to find an approximation of $F$ over the domain and then let $\hat\theta$ be the minimizer of this approximation.
	In order to approximate the function efficiently, transmitted signals are constructed such that the server can obtain a multi-resolution view of function $F(\cdot)$ in a grid. Thus, we call the proposed algorithm ``Multi-Resolution Estimator for Non-Convex loss (MRE-NC)". The description of MRE-NC is as follows:

	Each machine $i$ has access to $n$ functions and sends a signal $Y^i$ comprising $\lfloor B/(d\log_2 mn)\rfloor$ sub-signals of length $\lfloor d \log_2 mn\rfloor$. 
	Each sub-signal has four parts of the form $(p,\Delta,\theta^p,\eta)$. 
	The four parts $p$, $\Delta$, $\theta^p, \eta$ are as follows:
	\begin{itemize}
		\item Part $p$: Let
		\begin{equation}\label{eq:def delta c}
			\delta \,\triangleq \, \ln (mn)\, \max\left(\frac{\ln mn}{(mB)^{1/d}}, \frac{1}{m^{1/2}}\right).
		\end{equation}
		Let $t = \log_2(1/\delta)$. 
		Without loss of generality, assume that $t$ is a non-negative integer.\footnote{If $\delta>1$, we reset the value of $\delta$ to $\delta=1$. It is not difficult to check that the rest of the proof would not be upset in this spacial case.}
		Consider a sequence of $t+1$ grids on $[-1,1]^d$ as follows.
		For  $l=0,\ldots,t$, we partition the cube $[-1,1]^d$ into $2^{ld}$ smaller equal sub-cubes with edge size $2^{-l}$. 
		The $l$th grid $G^l$ contains the centers of these smaller cubes.
		Thus, each $G^l$ has $2^{ld}$ grid points.
		For any point  $p'$ in $G^l$, we say that $p'$ is the parent of all $2^d$ points in $G^{l+1}$
		that are in the $2^{-l}$-cube centered at $p'$ (see Fig. \ref{Fig:MRE}). 
		Therefore, each point $G^l$ ($l<t$) has $2^d$ children. 
		
		In each sub-signal, to choose $p$, we randomly select an $l$ from $1,\dots, t$ with probability
		\begin{equation}\label{eq:prob choose p c}
			\Pr(l) = \frac{2^{(d-2)l}}{\sum_{j=1}^t 2^{(d-2)j}}.
		\end{equation}
		We then let $p$ be a uniformly chosen random grid point in $G^l$.
		Please note that the level $l$ and point $p$ selected in different sub-signals of a machine are independent and have the same distribution. 
		
		\item Part $\Delta$: We let 
		\begin{equation}\label{eq:def hat F upper}
			F^i(\theta)\triangleq \frac2n\sum_{j=1}^{n/2} f_j^i(\theta),\quad \mbox{for } \theta\in [-1,1]^d,
		\end{equation}
		and refer to it as the empirical function of the $i$th machine.
		For each sub-signal, based on its selected $p$ part 
		we let
		\begin{equation} \label{eq:def Delta}
			\Delta \,\triangleq\, F^i (p)- F^i(p'),
		\end{equation}
		where $p'\in G^{l-1}$ is the parent of $p$. 
		\item Part $\theta^p, \eta$: In the $i$th machine, if the $p$-part of a sub-signal lies in $G^t$, the machine also appends two extra pieces of information  $\theta^p, \eta$ to its sub-signal (otherwise, it sends dummy messages for these parts). 
		We let $\theta^p$ be a minimizer of  $F^i$  in the $G^t$-cube containing the point $p$, where $F^i$ is defined in \eqref{eq:def hat F upper}.
		We then set $\eta=F^i(\theta^p)-F^i(p)$. 
	\end{itemize}
	
	\begin{figure}[t]
		\centering
		\includegraphics[width=4cm]{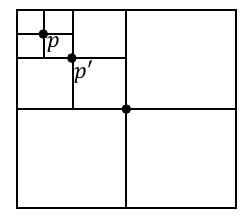}
		\caption{An illustration of a $p$-point in $[-1,1]^d$ for $d=2$. The point $p$ belongs to $G^2$ and $p'$ is the parent of $p$. }
		\label{Fig:MRE}
	\end{figure}
	
	At the server,  
	we obtain an approximation $\hat{ F}$ of the loss function $F$ over $[-1,1]^d$ as follows.
	We first eliminate redundant sub-signals so that no two surviving sub-signals from a same machine have the same $p$-parts. Hence, for each machine, the surviving sub-signals are distinct. We call this process ``redundancy elimination''. 
	We set $	\hat{ F}(0)=0$, and for any $l\geq 1$ and any  $p\in G^l$,  we let
	\begin{equation}
		\hat{F}(p)=\hat{F}(p')+\frac{1}{N_p}\sum_{\substack{\mbox{\scriptsize{Subsignals of the form }} \\ (p,\Delta,\theta^p,\eta)\\ \mbox{\scriptsize{after redundancy elimination}}}}\Delta,
		\label{eq:page17 c}
	\end{equation}
	where $N_p$ is the number of signals having point $p$ in their first argument after redundancy elimination (with the convention that $0/0=0$). After that, for each cube corresponding to a point $p$ in $G^t$, we choose a single arbitrary sub-signal of the form $(p,\Delta,\theta^p,\eta)$, from some machine $i$, and let
	\begin{equation}
		\hat{F}(\theta^p)=\hat{F}(p)+\eta=F^i(\theta^p)+\hat{F}(p)-F^i(p).
		\label{eq:Fhat}
	\end{equation} 
	Finally, the server outputs $\theta^p$ with minimum $\hat{F}(\theta^p)$.

	\begin{algorithm}[t]
		\DontPrintSemicolon
		\tcp{Constructing each sub-signal at machine $i$}
		$l \leftarrow$ choose randomly from $\{1,\cdots, t\}$ according to \eqref{eq:prob choose p c}.\;
		$p\leftarrow$ choose a point from grid $G^l$ uniformly at random.\;
		compute $\Delta$ in \eqref{eq:def Delta} for the point $p$.\;
		$\theta^p\leftarrow$ a minimizer of  $F^i$  in the $G^t$-cube centered at $p$, where $F^i$ is defined in \eqref{eq:def hat F upper}.\;
		$\eta\leftarrow F^i(\theta^p)-F^i(p)$. \;
		prepare sub-signal $(s,p,\theta^p, \eta)$ for transmission.\;
		\tcp{At the server}
		perform the process of ``redundancy elimination''.\;
		$\hat{F}(0)\leftarrow0$.\;
		\For{$l=1,\ldots,t$}{
			\For{$p\in {G}^l$}{
				compute $\hat{F}(p)$ according to \eqref{eq:page17 c}.\;	
		} }
		for each $p\in G^t$, choose an arbitrary sub-signal of the form $(p,\Delta,\theta^p,\eta)$ and compute $\hat{F}(\theta^p)$ according to \eqref{eq:Fhat}.\;
		return a $\theta^p$ with minimum $\hat{F}(\theta^p)$.\;
		\caption{MRE-NC algorithm}
		\label{Alg:MRE-NC}
	\end{algorithm}

	
	The following theorem provides an upper bound on the estimation error of MRE-NC algorithm, for a large-$mn$ and $B$ regime where 
	\begin{equation}\label{eq:assum1}
		\begin{split}
			&m\ge \ln^2 mn \\
			&\ln mn \ge 8\sqrt{d}\\
			& B \ge d\log(mn).
		\end{split}
	\end{equation}
	
	\begin{theorem}\label{th:main upper c}
		Consider a $d\geq 2$ and suppose that \eqref{eq:assum1} holds.
		Let $\hat{\theta}$ be the output of the MRE-NC algorithm. Then, with probability at least
		$1-\exp\big(-\Omega(\ln^2 mn)\big)$, 
		\begin{equation} \label{eq:up}
			F(\hat{\theta})-F(\theta^*) \le 4\sqrt{d}\,\ln^2(mn)\max\left(\frac{\ln mn}{\sqrt{n}\,(mB)^{1/d}}, \,\frac{1}{\sqrt{mn}}\right).
		\end{equation}
	\end{theorem}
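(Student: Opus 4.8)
The plan is to show that the server's approximation $\hat F$ is uniformly close to $F$ on the finest grid $G^t$ and on the candidate minimizers $\theta^p$, and then conclude by comparing $\hat F$ at the algorithm's output with $\hat F$ near $\theta^*$. First I would control the per-level estimation error. Fix a level $l$ and a point $p\in G^l$. The quantity $\Delta = F^i(p)-F^i(p')$ is an unbiased estimate of $F(p)-F(p')$ (in expectation over the random sample functions, since $F^i$ is an average of $n/2$ i.i.d.\ draws with mean $F$), and by Lipschitzness $|F(p)-F(p')|\le \|p-p'\|\le \sqrt d\,2^{-(l-1)}$, which also bounds the magnitude of each summand up to lower order terms; so $\Delta$ is a bounded random variable with the right mean. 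The number $N_p$ of surviving sub-signals landing on $p$ is, by \eqref{eq:prob choose p c} and the choice of $\delta$ in \eqref{eq:def delta c}, of order (total number of sub-signals) $\times \Pr(l) / 2^{ld} \asymp mB/(d\log mn)\cdot 2^{-2l}/\sum_j 2^{(d-2)j}$; one checks this is large enough — roughly $\Omega(\ln^2 mn)$ after redundancy elimination removes duplicates — that a Hoeffding/Bernstein bound gives $\big|\frac1{N_p}\sum \Delta - (F(p)-F(p'))\big| \lesssim \sqrt d\, 2^{-l}\,\varepsilon_l$ with probability $1-\exp(-\Omega(\ln^2 mn))$, where $\varepsilon_l$ is a small factor. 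Summing the telescoping errors from $l=1$ to $t$ (a geometric-type sum dominated by the finest scale $2^{-t}=\delta$), and adding the $\eta$-term error $|F^i(\theta^p)-F^i(p)-(F(\theta^p)-F(p))| \lesssim \sqrt d\,\delta$ from \eqref{eq:Fhat}, yields a uniform bound $|\hat F(\theta^p)-F(\theta^p)| \lesssim \sqrt d\,\delta\,\ln(mn)$ over all $p\in G^t$, with the stated high probability after a union bound over the $2^{td}\le (mn)^{O(d)}$ points (absorbed into $\exp(-\Omega(\ln^2 mn))$ using $\ln mn\ge 8\sqrt d$).

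Next I would handle the approximation error coming from discretization. Let $p^*\in G^t$ be the grid point whose $G^t$-cube contains $\theta^*$. On each machine, $\theta^{p^*}$ is defined as a minimizer of $F^i$ over that cube, so $F^i(\theta^{p^*})\le F^i(\theta^*)$; combined with $|F^i-F|\lesssim\delta$ concentration on the $O(1)$ relevant points and the Lipschitz bound $|F(\theta^*)-F(p^*)|\le \sqrt d\,\delta$, this shows some candidate $\theta^{p^*}$ satisfies $F(\theta^{p^*})\le F(\theta^*)+O(\sqrt d\,\delta)$ and its sub-signal survives (since $p^*\in G^t$ is selected with positive probability and $N_{p^*}\ge 1$ w.h.p.). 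Then, writing $\hat\theta$ for the returned point,
\begin{equation*}
F(\hat\theta)-F(\theta^*) \le \big(F(\hat\theta)-\hat F(\hat\theta)\big) + \big(\hat F(\hat\theta)-\hat F(\theta^{p^*})\big) + \big(\hat F(\theta^{p^*})-F(\theta^{p^*})\big) + \big(F(\theta^{p^*})-F(\theta^*)\big),
\end{equation*}
where the first and third terms are $\lesssim \sqrt d\,\delta\,\ln(mn)$ by the uniform bound, the second term is $\le 0$ by the $\arg\min$ choice of $\hat\theta$, and the fourth is $\lesssim \sqrt d\,\delta$. Plugging in $\delta = \ln(mn)\max\big(\ln(mn)/(mB)^{1/d},\, 1/\sqrt m\big)$ from \eqref{eq:def delta c} and dividing through — note $\max(\ln(mn)/(\sqrt n (mB)^{1/d}), 1/\sqrt{mn}) = \delta/(\sqrt n\,\ln mn)$ — gives $F(\hat\theta)-F(\theta^*)\lesssim \sqrt d\,\ln^2(mn)\max\big(\ln(mn)/(\sqrt n(mB)^{1/d}), 1/\sqrt{mn}\big)$, matching \eqref{eq:up} up to the explicit constant $4$.

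The main obstacle I expect is the concentration bookkeeping in the first step: one must verify that after redundancy elimination the effective count $N_p$ is simultaneously large enough at \emph{every} level (the levels are weighted by \eqref{eq:prob choose p c} precisely so that $N_p\cdot 2^{-2l}$ balances across $l$, and the choice of $\delta$ is what makes the finest-level count still $\mathrm{polylog}$), and that the variance of each $\Delta$ is genuinely $O(d\,4^{-l})$ rather than $O(d)$, so that the telescoped sum $\sum_l 2^{-l}\sqrt{(\text{variance})/N_p}$ converges to something of order $\sqrt d\,\delta\cdot\mathrm{polylog}$ instead of blowing up at coarse scales. Handling the dependence introduced by redundancy elimination (which couples sub-signals within a machine) and getting the union bound over $2^{td}$ grid points to fit inside $\exp(-\Omega(\ln^2 mn))$ — which forces the $\ln mn\ge 8\sqrt d$ hypothesis and the polylog factors in $\delta$ — is the delicate part; the rest is routine Lipschitz/telescoping algebra. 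These details are where I would relegate computations to an appendix.
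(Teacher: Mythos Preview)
Your overall architecture matches the paper: telescope $\hat F(p)-F(p)$ through the levels, add the last-level $\eta$ correction, then compare $\hat F$ at $\hat\theta$ against $\hat F$ near $\theta^*$. But there is a genuine quantitative gap that makes the argument, as written, fall short of the stated bound by a factor of $\sqrt n$.

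The missing ingredient is that both $\Delta$ and $\eta$ are themselves averages of $n/2$ i.i.d.\ sample functions, and you must exploit this. You write that the variance of each $\Delta$ is ``genuinely $O(d\,4^{-l})$'' and that the $\eta$-term error satisfies $|F^i(\theta^p)-F^i(p)-(F(\theta^p)-F(p))|\lesssim\sqrt d\,\delta$. Those are the \emph{deterministic} Lipschitz bounds; the concentration bounds are $O(d\,4^{-l}/n)$ and $O(\sqrt d\,\delta/\sqrt n)$ (times logs), respectively. With only the Lipschitz bounds your uniform estimate reads $|\hat F(\theta^p)-F(\theta^p)|\lesssim \sqrt d\,\delta\,\ln(mn)$, whereas the theorem requires $\epsilon = 4\sqrt d\,\delta\,\ln(mn)/\sqrt n$. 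Your final substitution silently inserts the missing $1/\sqrt n$: you equate $\sqrt d\,\delta\,\ln(mn)$ with $\sqrt d\,\ln^2(mn)\cdot\delta/(\sqrt n\,\ln mn)$, which differ exactly by $\sqrt n$. In the paper this is the content of Lemma~\ref{lem:prob E3 upper} (where the Hoeffding bound uses $N_p\cdot n/2$ samples, not $N_p$) and Lemma~\ref{lem:prob E4 upper} (where concentration, not Lipschitz, controls the $\eta$ term).

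Two related points. First, the $\eta$-term bound cannot be obtained by ``concentration on the $O(1)$ relevant points'': the point $\theta^p$ is the minimizer of the \emph{empirical} $F^i$ and hence data-dependent, so you need concentration \emph{uniformly over the cell}. The paper handles this by covering $cell_p$ with a grid of mesh $\epsilon/(16\sqrt d)$ and using Lipschitz continuity to pass from the net to all of $cell_p$; the hypothesis $\ln mn\ge 8\sqrt d$ is what keeps the net size $\le n^{d/2}$ inside the union bound. Second, the telescoped sum is not ``geometric, dominated by the finest scale'': the level probabilities \eqref{eq:prob choose p c} are chosen precisely so that $N_p\cdot 4^{-l}$ is (up to constants) \emph{independent of $l$}, making the per-level error constant; the extra $\ln(mn)$ in the final bound comes from summing $t\approx\ln m$ equal terms, not from the last term of a geometric series.
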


	The proof is given in  Section~\ref{sec:proof main alg c}
	and goes by showing that for any $l\le t$ and any $p\in G^l$, the number of received signals corresponding to $p$ is large enough so that the server obtains a good approximation of $F$ at $p$. 
	Once we have a good approximation of $F$ over $G^t$, we can find an approximate minimizer of $F$ over all $G^t$-cubes. 
	The following is  a corollary of Theorem~\ref{th:main upper c}.
	\begin{corollary}
		Let $d\geq 2$ and  assume \eqref{eq:assum1}.
		Then, for any $k\ge1$,
		\begin{equation*}
			\begin{split}
				\Exp\Big[ \big|F\big(\hat\theta\big)-F(\theta^*)\big|^k\Big] \le&\max\left(\frac{4\sqrt{d}\,\ln^3 mn}{\sqrt{n}\,(mB)^{1/d}}, \,\frac{4\sqrt{d}\,\ln^2 mn}{\sqrt{mn}}\right)^k 
				\\&+ \exp\Big(-\Omega\big(\ln^2 mn\big)\Big).
			\end{split}
		\end{equation*}
		\label{corr:upper}
	\end{corollary}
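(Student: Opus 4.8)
The plan is to convert the high-probability guarantee of Theorem~\ref{th:main upper c} into a moment bound in the standard way: split the expectation according to whether the event of Theorem~\ref{th:main upper c} occurs, control the error by the stated bound on that event, and fall back on a crude deterministic bound on its complement. Two elementary facts make this work. First, since $\theta^*$ minimizes $F$ over $[-1,1]^d$ and $\hat\theta\in[-1,1]^d$, we have $F(\hat\theta)-F(\theta^*)\ge 0$, so the absolute values in the statement are superfluous. Second, $F$ is $1$-Lipschitz, being an expectation of $1$-Lipschitz functions (cf.\ \eqref{eq:lip}--\eqref{eq:def of the loss F}), and $\mathrm{diam}\,[-1,1]^d = 2\sqrt d$; hence with probability one,
\begin{equation*}
	0\,\le\, F(\hat\theta)-F(\theta^*)\,\le\, \|\hat\theta-\theta^*\|\,\le\, 2\sqrt d .
\end{equation*}

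Write $E\triangleq F(\hat\theta)-F(\theta^*)$ and let $\mathcal A$ be the event on which \eqref{eq:up} holds, so $\Pr(\mathcal A^c)\le\exp(-\Omega(\ln^2 mn))$ by Theorem~\ref{th:main upper c}. By the law of total expectation, $\Exp[E^k]=\Exp[E^k\,\one_{\mathcal A}]+\Exp[E^k\,\one_{\mathcal A^c}]$. On $\mathcal A$ I would raise \eqref{eq:up} to the $k$th power; since $x\mapsto x^k$ is increasing on $[0,\infty)$ and $c\max(a,b)=\max(ca,cb)$ for $c\ge0$, the resulting bound is exactly $\max\big(4\sqrt d\,\ln^3(mn)/(\sqrt n\,(mB)^{1/d}),\ 4\sqrt d\,\ln^2(mn)/\sqrt{mn}\big)^k$, the first term on the right-hand side of the claim, and this also bounds $\Exp[E^k\,\one_{\mathcal A}]$. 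On $\mathcal A^c$ the a priori bound gives $E^k\,\one_{\mathcal A^c}\le (2\sqrt d)^k\,\one_{\mathcal A^c}$, so $\Exp[E^k\,\one_{\mathcal A^c}]\le (2\sqrt d)^k\,\Pr(\mathcal A^c)\le (2\sqrt d)^k\exp(-\Omega(\ln^2 mn))$.

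Adding the two contributions yields the claimed inequality once the constant $(2\sqrt d)^k$ is absorbed into the exponential term; since $k$ and $d$ are fixed while $\ln^2 mn\to\infty$ under \eqref{eq:assum1}, we indeed have $(2\sqrt d)^k\exp(-\Omega(\ln^2 mn))=\exp(-\Omega(\ln^2 mn))$, with the implicit constant now permitted to depend on $k$ and $d$. There is no genuine difficulty here; the only steps that deserve a moment's attention are the verification of the deterministic a priori bound on $E$ (which is what renders the low-probability event harmless) and this last absorption, i.e.\ acknowledging that the $\Omega(\cdot)$ appearing in the corollary hides a dependence on $k$ and $d$.
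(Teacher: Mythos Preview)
Your proof is correct and is exactly the standard argument the paper has in mind; the paper itself does not spell out a proof, merely presenting the statement as ``a corollary of Theorem~\ref{th:main upper c}.'' Your decomposition into the high-probability event and its complement, together with the deterministic Lipschitz bound $|F(\hat\theta)-F(\theta^*)|\le 2\sqrt d$, is precisely what is needed.
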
  
	The above upper bound matches the lower bound of Corollary~\ref{cor:lower} up to logarithmic factors with respect to $n$ and $m$, and is therefore order optimal. 
	This implies the order optimality of the MRE-NC algorithm with respect to $n$ and $m$ for the large-$mn$ regime \eqref{eq:assum1}.

	\begin{remark}
		Here, we carry out computations for the length of each subsignal.
		For the $p$ part, we need to represent the level $l$ and the point $p\in G^l$ in that level, which can be done by $\log_2 t + \log_2 2^{dt}< d \log_2 \sqrt{m}$. 
		The $\Delta$ and $\eta$ are scalars in $(-\sqrt{d}/2,\sqrt{d}/2)$ that we need to represent with precision $\epsilon/4t$, where $\epsilon$ is the expression in the right hand side of \eqref{eq:up}. Therefore, $\log_2(4t\sqrt{d}/\epsilon)< \log_2 \sqrt{mn}$ bits suffice to represent each of $\Delta$ and $\eta$. 
		Finally, $\theta^p$ is a point in a $2\delta$-cube, with a desired entry-wise precision of $\epsilon/4\sqrt{d}$. Therefore,  $\theta^p$ can be represented by $d\log_2(8\delta\sqrt{d}/\epsilon)< d\log_2 \sqrt{n}$ bits.
		Combining the above bounds, we obtain the following upper bound on the length of each subsignal: 
		$d \log_2 \sqrt{m}+ 2\log_2 \sqrt{mn} +d\log_2 \sqrt{n} = (d/2+1)\log_2 mn \le d\log_2 mn$.
	\end{remark}

	\medskip
	
	\section{Lower Bound under Tiny Communication Budget} \label{sec:tiny}
	The upper bound in Theorem~\ref{th:main upper c} necessitates $B \ge d\log(mn)$. In this section, we demonstrate that to make the error bound vanish for large $m$, similar to Theorem~\ref{th:main upper c}, we need $B$ to approach infinity as $m$ tends to infinity. Specifically, we examine a low-communication regime where the communication budget $B$ is constrained by a constant independent of $m$. For this regime and assuming $n=1$, we prove in the following proposition that the minimax error is lower bounded by a constant, even as $m$ approaches infinity.
	
	\begin{proposition} \label{prop:Hinf}
		Let $n=1$ and suppose that the  signal length $B$ is bounded by a constant independent of $m$.
		Then, for any estimator $\hat{\theta}$, there is a distribution $P$ over $\F$ such that  $F(\hat{\theta})-F(\theta^*)\ge   \epsilon_B$, 
		for all $m\geq 1$, where $\epsilon_B>0$ is a  constant that depends only on $B$ and is independent of $m$ and $d$.
		The above constant lower bound holds even when $d=1$.
	\end{proposition}	
	Here, we present a short proof based on Theorem~7 of \cite{salehkaleybar2019one}. Theorem~7 of \cite{salehkaleybar2019one}  establishes  existence of a distribution $P$ over strongly convex loss function with second derivatives larger than $1$ and Lipschitz constant $3$, for which an analogous constant lower bound $\|\hat{\theta}-\theta^*\|\ge \epsilon_B'$ holds, where  $ \epsilon_B'$ is a constant independent of $m$. Given the strong convexity of these loss functions, it follows that $F(\hat{\theta})-F(\theta^*)\ge (\epsilon_B')^2$. A normalization by the Lipschitz constant $3$  then implies Proposition~\ref{prop:Hinf} for $\epsilon_B  = (\epsilon_B')^2/3$.
	
	Proposition~\ref{prop:Hinf} shows that the minimax error is lower bounded by a constant regardless of $m$, when $n=1$ and $B$ is a constant.
	The constant $\epsilon_B$ in Proposition~\ref{prop:Hinf} is exponentially small in $B$. Note however that this is inevitable, because in view of Theorem~\ref{th:main upper c}, when $B = \log m$ and $n=d=1$, the error of the MRE-NC algorithm is bounded by $\tilde{O}\big(m^{-1/d}\big)$, which is exponentially small in $B$. 
	Note also that the Proposition~\ref{prop:Hinf} relies on the one-shot communication and may not hold in other settings for example when relaxing the assumption of symmetry between machines (i.e. the assumption that the machines run identical algorithms).

	\medskip
	
	\section{Proof of Theorem~\ref{th:lower bound}} \label{app:proof lowerbound}
	The desired lower bound is the maximum of two terms, 
	\begin{equation}\label{eq:first term}
		F(\hat\theta)-F(\theta^*)\ge 1/\big(\cn\sqrt{n}(mB)^{1/d}\ln mB\big)
	\end{equation}
	and 
	\begin{equation}
		F(\hat\theta)-F(\theta^*)\ge 1/4\sqrt{mn}.
	\end{equation}
	For the more difficult bound $1/\big(\cn\sqrt{n}(mB)^{1/d}\ln mB\big)$,
	we first  introduce a subclass  $\hat{\mathcal{F}}$ of functions in $\mathcal{F}$ and a class of probability distributions over $\hat{\mathcal{F}}$. Under these distributions, each function is generated via a process that involves flipping $mB$ coins, one of which is biased and the rest are fair. For this class, we  show that the following property holds. Once we obtain a ${1}/(2\cn \sqrt{n} (mB)^{1/d} \ln mB)$-approximate minimizer of the expected loss function $F$, we can identify the underlying biased coin. We then rely on this observation to reduce the abstract problem of identifying a biased coin among several fair coins via a certain coin flipping process to the problem of loss function minimization. We then use tools from information theory to derive a lower bound on the error probability in the former problem and conclude that the same lower bound applies to the latter problem as well.
	The second term in the lower bound, i.e. the $1/4\sqrt{mn}$ barrier, is actually well-known to hold in several centralized scenarios. 
	Here, we present a proof based  on hypothesis testing. 
	In the rest of this section, we first establish the more difficult bound $F(\hat\theta)-F(\theta^*)\ge 1/\big(\cn\sqrt{n}(mB)^{1/d}\ln mB\big)$ and introduce in Subsection~\ref{subsec:pr1} the function class $\hat{\mathcal{F}}$ and the reduction to the problem of identifying the biased coin. We then  describe the coin flipping system in more details in Subsection~\ref{subsec:pr2} and present the lower bound on the error probability in that system. Then, in Subsection~\ref{subsec:pr3}, we provide an information theoretic proof outline for this lower bound, while leaving the details until the appendices for improved readability.
	Finally, we establish the centralized bound $\Pr\big(F(\hat\theta)-F(\theta^*)\ge 1/4\sqrt{mn}\big) \ge 1/2$ in Subsection~\ref{subsec:pr0}.

	\subsection{A class of distributions}\label{subsec:pr1}
	Here, we show that $F(\hat\theta)-F(\theta^*)\ge  1/\big(\cn\sqrt{n}(mB)^{1/d}\ln mB\big)$ with probability at least $1/2$.
	For simplicity, we assume that $(mB)^{1/d}$ is an integer. 
	Consider a function $h:\R^d\to\R$ as follows. For any $\theta\in\R^n$,
	\begin{equation}\label{eqa:def h}
		h(\theta)\,=\, \begin{cases}
			(mB)^{-1/d} - \|\theta\|\quad& \textrm{if } \|\theta\| \le (mB)^{-1/d},\\
			0& \textrm{otherwise.}
		\end{cases}
	\end{equation}
	An illustration of $h(\cdot)$ is shown in Fig.\ref{fig:h f}~(a). 
	It is easy to see that $h(\cdot)$ is Lipschitz continuous with Lipschitz constant 1.
	Consider a regular grid $\G$ with edge size $2/(mB)^{1/d}$ on the cube $[-1,1]^n$.  
	We denote by $\{-1,1\}^\G$ the set of all functions from $\G$ to $\{-1,1\}$.
	To any $\sigma\in\{-1,1\}^\G$, we associate a  function $f_\sigma:\R^n\to\R$ as follows
	\begin{equation}\label{eqa:fsigma}
		f_\sigma(\theta)\,\triangleq\, \sum_{p\in\G} \sigma(p) \,h(\theta-p),\qquad \forall \theta\in\R^n.
	\end{equation}
	Fig.~\ref{fig:h f}~(b) illustrates an example of the shape of $f_\sigma$.
	Let $\hat\F$ be the set of all functions $f_\sigma$, for all $\sigma\in\{-1,1\}^\G$.
	It is easy to see that since $h(\cdot)$ is Lipschitz continuous  with Lipschitz constant $1$, each function $f_\sigma\in\hat\F$ is also Lipschitz continuous  with Lipschitz constant $1$.
	
	\begin{figure}[t!]
		\centering
		\begin{subfigure}[t]{0.3\textwidth}
			\centering
			\includegraphics[width=1\textwidth]{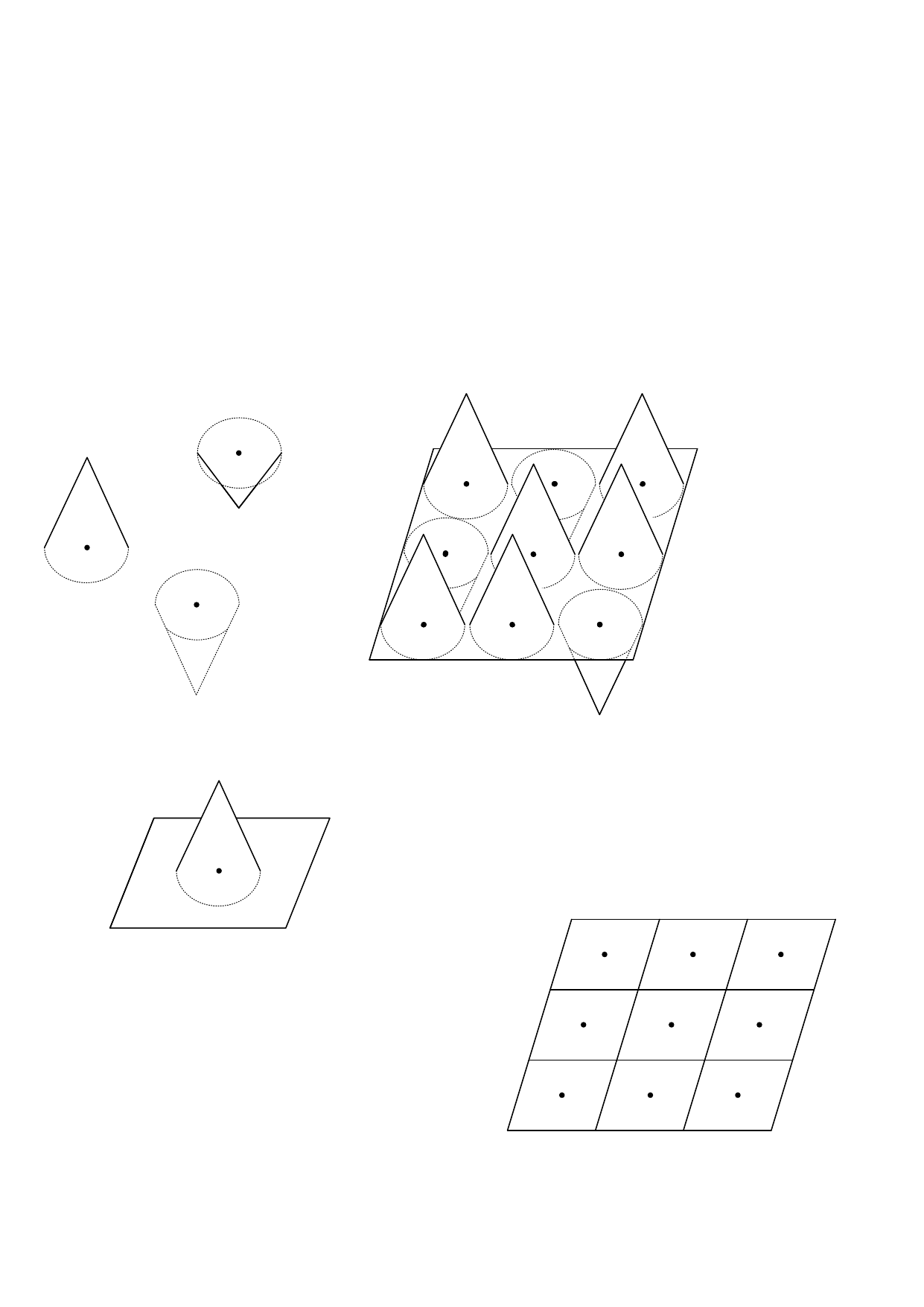}
			\hspace{2cm}
			\caption{}
		\end{subfigure}%
		\\
		\begin{subfigure}[t]{0.4\textwidth}
			\centering
			\includegraphics[width=.85\textwidth]{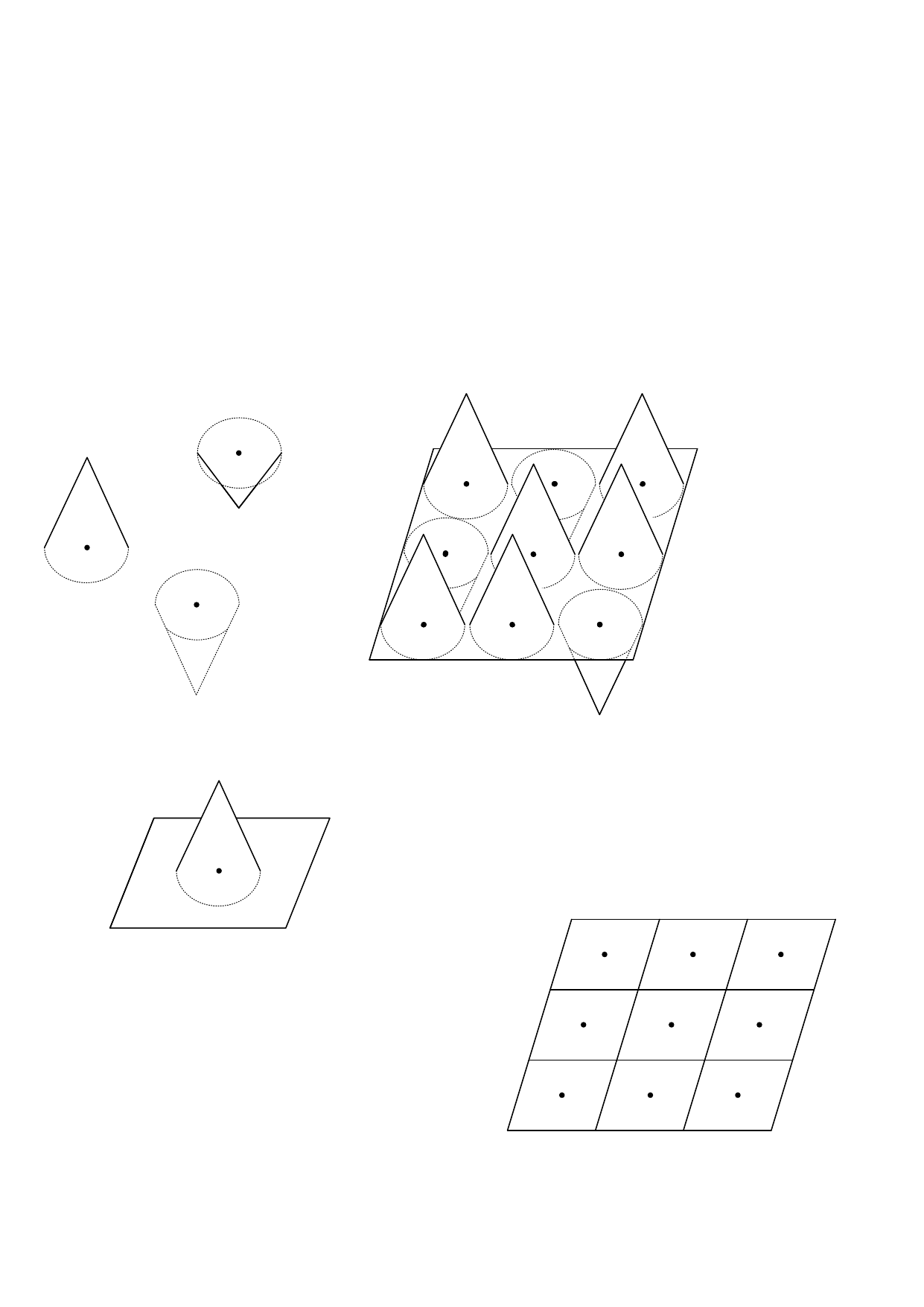}
			\caption{}
		\end{subfigure}
		\caption{Illustrations of functions $h$ and $f_\sigma$ for $d=2$. (a) shows the surface of  $h(\cdot)$ defined in \eqref{eqa:def h} and (b) is an example of  $f_\sigma(\cdot)$ defined in \eqref{eqa:fsigma}.}  \label{fig:h f}
	\end{figure}
	
	For any $p\in\G$, we define a  probability distribution $P_p$ over $\hat\F$ as follows. For any $\sigma\in\{-1,1\}^\G$,
	\begin{equation}
		P_p(f_\sigma)\, = 2^{-mB}\,\left(1-\frac{\sigma(p)}{\cn\sqrt{n} \ln  mB}\right),
	\end{equation}
	where $\cn$ is the constant in the theorem statement.
	Then, $\sum_{\sigma\in\{-1,1\}^\G} P_p(f_\sigma)=1$, and as a result, each $P_p$ is a probability distribution.
	Intuitively, when a function $f_\sigma$ is sampled from $P_p$, it is as if for every $q\in\G$ with $q\ne p$, we have $\Pr\big(\sigma(q)=1\big)=\Pr\big(\sigma(q)=-1\big)=1/2$, and for $q=p$ we have $\Pr\big(\sigma(p)=1\big)=1/2- 1/\big(2\cn \sqrt{n}\ln mB\big)$.
	This is like, the values of $\sigma(q)$ for $q\ne p$ are chosen independently at random according to the outcome of a fair coin flip, while the value of $\sigma(p)$ is the outcome of an unfair coin flip with bias $-1/\big(2\cn \sqrt{n}\ln mB\big)$, i.e., for $q\in\G$,
	\begin{equation}\label{eqa:Efsigmaq}
		\Exp_{f_\sigma\sim P_p} \big[\sigma(q)\big]\, =\, \begin{cases}
			\frac{-1}{\cn\sqrt{n} \ln mB} \quad &q=p,\\
			0&q\ne p.
		\end{cases}
	\end{equation}
	Therefore, for any $p\in\G$ and any $\theta\in[-1,1]^n$, we have
	\begin{equation}\label{eq:F h}
		\begin{split}
			F(\theta)\, &= \, \Exp_{f\sim P_p}\big(f(\theta)\big)\\
			&=\, \sum_{\sigma\in\{-1,1\}^\G} P_p(f_\sigma)\sum_{q\in\G} \sigma(q) h\big(\theta-q\big)\\
			&=\, \sum_{q\in\G}  h\big(\theta-q\big)  \sum_{\sigma\in\{-1,1\}^\G} P_p(f_\sigma) \sigma(q) \\
			&=\, \sum_{q\in\G}  h\big(\theta-q\big)  \Exp_{f_\sigma\sim P_p} \big[\sigma(q)\big] \\
			&=\,  \frac{-1}{2\cn\sqrt{n} \ln mB}\,   h\big(\theta-p\big),
		\end{split}
	\end{equation}
	where the last equality is due to \eqref{eqa:Efsigmaq}.
	Therefore,  under probability distribution $P_p$, $\theta^*=p$ is the global minimizer of $F(\cdot)$, and 
	\begin{equation*}
		F(p) \,=\, \frac{-h(0) }{\cn\sqrt{n} \ln mB} \, =\,  \frac{-1}{\cn \sqrt{n} (mB)^{1/d} \ln mB}.
	\end{equation*}
	Moreover, for any $\theta\in \R^n$ with $\|\theta-p\|\ge (mB)^{1/d}$, we have
	\begin{equation}\label{eqa:Ftheta ge Fp}
		F(\theta)\,=\,0	\,\ge\,F(\theta^*) + \frac{1}{\cn \sqrt{n} (mB)^{1/d} \ln mB},
	\end{equation}
	where $\theta^*=p$.
	
	We prove \eqref{eq:first term} by contradiction. Suppose that there exists an estimator $\est$, such that for any $p\in\G$, when the functions are sampled from distribution $P_p$, the estimator $\est$ returns an output $\hat\theta$, for which with probability at least $1/2$, 
	\begin{equation}\label{eqa:est  contradiction}
		F(\hat\theta)\,<\,F(\theta^*) + \frac{1}{\cn \sqrt{n} (mB)^{1/d} \ln mB}.
	\end{equation}
	Then, it follows from \eqref{eqa:Ftheta ge Fp} that $\|\hat\theta-p\|< (mB)^{1/d}$.
	In this case, $p$ is the closest grid-point of $\G$ to $\hat\theta$. 
	As a result, we can recover $p$ from $\hat\theta$, with probability at least $1/2$.
	More concretely, given estimator $\est$, we can devise an estimator $\est'$ such that for any $p\in\G$ and under distribution $P_p$, $\est'$ outputs the true $p$ with probability at least $1/2$.
	This provides a solution for the problem of identifying a biased coin among $mB-1$ unbiased coins, in a coin flipping system that we describe next.

	\subsection{Coin flipping} \label{subsec:pr2}
	Here, we describe an abstract system that aims to identify a biased coin among several  fair coins, via observing the outputs  of coin flips. We then derive a bound on the error probability of any estimator, and show that no estimator can identify the biased coin with probability at least $1/2$.

	Consider $k$ coins, one of which is biased and all others are fair. 
	The outcome of the biased coin has the following distribution:
	\begin{equation} \label{eqa:prob of biased coin}
		P(1)\,=\,\frac12 \,+\,\frac{1}{2C\sqrt{n}\ln k},\qquad P(0)\,=\,\frac12 \,-\,\frac{1}{2C\sqrt{n}\ln k}.
	\end{equation}
	We index the coins by $t=1,\ldots,k$.
	The index of the biased coin is unknown initially. 
	Let $T$ denote the index of the biased coin.
	We assume that $T$ is a random variable, uniformly distributed over $1,\ldots,k$. 
	We aim to estimate $T$ by observing  outcomes of coin flips as follows.
	
	Our coin flipping system comprises $m$ machines, called the coin flippers, and a server.
	Each coin flipper flips each of every coin for $n$ times. Therefore, each coin flipper, $i$, makes a total number of $kn$ coin flips and collects the outcomes into  an $n\times k$ matrix $W^i$ with $0$ and $1$ entries.
	The $i$th coin flipper, for $i=1,\ldots,m$, then generates a $B$-bit long signal $S^i$ based on $W^i$, and sends it to the server. 
	We refer to the (possibly randomized) mapping (or coding) from $W^i$ to $S^i$ by $Q^i$. 
	The server then collects the signals of all coin flippers and generates an estimate $\hat{T}$ of the true index of the biased coin $T$.

	Let $P_e = \Pr\big(\hat{T} \ne T\big)$ be the probability  that the server fails to identify   the true biased coin index. 
	\begin{proposition} \label{prop:coin flip main}
		Let $k=mB$ and  suppose that \eqref{eqa:lower condition 1}--\eqref{eqa:lower condition 4} hold. Then, $P_e>0.5$.
	\end{proposition}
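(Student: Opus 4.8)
The plan is a Fano-type reduction combined with a mutual-information bound that crucially uses the $B$-bit message constraint. First, since $T$ is uniform on $\{1,\dots,k\}$ with $k=mB$ and $\hat T$ is a (possibly randomized) function of the received signals $S^1,\dots,S^m$, Fano's inequality gives $H_b(P_e)+P_e\log_2(k-1)\ge H(T\mid\hat T)=\log_2 k-I(T;\hat T)$, where $H_b$ denotes binary entropy. Using $H_b(P_e)\le1$ and $\log_2(k-1)<\log_2 k$, any estimator with $P_e\le\tfrac12$ would force $I(T;\hat T)>\tfrac12\log_2 k-1$. Hence it suffices to show $I(T;\hat T)\le\tfrac12\log_2(mB)-1$, and I would in fact establish the stronger bound $I(T;\hat T)\le\tfrac1{10}\log_2(mB)$, which lies below $\tfrac12\log_2(mB)-1$ as soon as $mB>2^{5/2}$ --- guaranteed by~\eqref{eqa:lower condition 2}.

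Next I would peel off the signals. By the data-processing inequality $I(T;\hat T)\le I(T;S^1,\dots,S^m)$. Conditioned on $T$, the matrices $W^1,\dots,W^m$ are i.i.d., so $S^1,\dots,S^m$ are conditionally independent given $T$; combining this with subadditivity of entropy yields $I(T;S^1,\dots,S^m)\le\sum_{i=1}^m I(T;S^i)$. The temptation now is to bound each term by the data-processing estimate $I(T;S^i)\le I(T;W^i)$ and to note that $W^i$ differs from an all-fair matrix only in column $T$, which carries $n$ i.i.d.\ $\mathrm{Bern}(\tfrac12+\rho)$ entries with $\rho=1/(2\cn\sqrt n\ln k)$, so that $I(T;W^i)\le n\,d_{KL}\!\big(\mathrm{Bern}(\tfrac12+\rho)\,\big\|\,\mathrm{Bern}(\tfrac12)\big)$, which~\eqref{eqa:lower condition 1} lets one approximate by a small multiple of $n\rho^2=1/(4\cn^2\ln^2 k)$. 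But this is far too weak: it only gives $\sum_i I(T;S^i)=O\!\big(m/(\cn^2\ln^2 mB)\big)$, which under~\eqref{eqa:lower condition 1}--\eqref{eqa:lower condition 4} can be astronomically larger than $\log_2(mB)$, since $m$ is allowed to be enormous relative to $\cn^2\ln^3(mB)$.

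The heart of the argument is therefore to exploit the $B$-bit bottleneck to get an $m$-\emph{independent} bound. Writing $R$ for the law of $S^i$ when all $k$ coins are fair, the compensation inequality $\sum_t\Pr(T=t)\,D(P_{S^i\mid T=t}\|P_{S^i})\le\sum_t\Pr(T=t)\,D(P_{S^i\mid T=t}\|R)$ gives $I(T;S^i)\le\tfrac1k\sum_{t=1}^k D(P_{S^i\mid T=t}\|R)$. Under the all-fair law the $k$ columns of $W^i$ are i.i.d., and $S^i$ being only $B$ bits forces $\sum_{j=1}^k I(W^i_j;S^i)\le I(W^i;S^i)\le H(S^i)\le B$: a short signal can ``resolve'' only a $B/k$ fraction of the columns on average. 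A strong-data-processing-type argument then bounds $\sum_t D(P_{S^i\mid T=t}\|R)$ by the product of the per-column signal strength ($\approx n\rho^2$) with this total budget, up to corrections coming from the quadratic approximation of $d_{KL}$ and from discretization and boundary effects. Carrying this out and tracking every constant, I expect the sum over the $m$ machines to take the explicit form
\[
\sum_{i=1}^m I(T;S^i)\ \le\ \frac1{B}\Bigg[\Big(\tfrac{313}{\cn}\Big)^2+\tfrac{94^2}{\cn\sqrt{mB}}+\tfrac{192}{mB}+\tfrac{15}{(mB)^{1.5}}+\tfrac{49+6B}{(mB)^2}\Bigg],
\]
and then~\eqref{eqa:lower condition 4} is precisely the statement that the right-hand side is at most $\tfrac1{10}\log_2(mB)$, while~\eqref{eqa:lower condition 1}--\eqref{eqa:lower condition 3} are what make the listed correction terms genuinely lower order. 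Chaining back through the first two steps gives $I(T;\hat T)\le\tfrac1{10}\log_2(mB)<\tfrac12\log_2(mB)-1$, hence $P_e>\tfrac12$.

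The main obstacle is exactly this strong-data-processing step: converting ``$S^i$ is $B$ bits'' into a usable, $m$-independent bound on $\sum_t D(P_{S^i\mid T=t}\|R)$, and doing so with constants sharp enough that the resulting conditions on $(m,n,B,\cn)$ are the ones under which the proposition is stated. This is the communication-constrained analogue of the distributed-estimation lower bounds of~\cite{salehkaleybar2019one}, and it is where essentially all of the technical work --- and the precise forms of~\eqref{eqa:lower condition 1}--\eqref{eqa:lower condition 4} --- resides.
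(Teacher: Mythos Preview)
Your outer shell---Fano plus tensorization via conditional independence of $S^1,\ldots,S^m$ given $T$---matches the paper exactly. The gap is entirely in the per-machine bound on $I(T;S^i)$, which is the paper's Proposition~\ref{prop:I} and which you correctly flag as ``the main obstacle'' but do not carry out. Two concrete issues. First, the displayed bound you ``expect'' is not what the argument produces: the paper obtains $I(T;S^i)<\tfrac{3B}{k\ln 2}+\tfrac{1}{k}[\text{bracket}]$, so after summing and using $k=mB$ one has $\sum_i I(T;S^i)<\tfrac{3}{\ln 2}+\tfrac{1}{B}[\text{bracket}]$; the constant $3/\ln 2$ is the dominant contribution, and it is condition~\eqref{eqa:lower condition 2} (not~\eqref{eqa:lower condition 4}) that handles it via $3/\ln k+1/\log_2 k\le 4/10$ in the final Fano step. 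Your displayed bound drops this term. Second, your own SDPI heuristic predicts $\sum_i I(T;S^i)=O(n\rho^2)=O\big(1/(\cn^2\ln^2 k)\big)$, which is not the $\tfrac{1}{B}[\text{bracket}]$ form you then write down, so the sketch and the claimed output are inconsistent with each other.

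More substantively, the paper does \emph{not} proceed via a strong data-processing inequality or an off-the-shelf contraction coefficient. Its route to Proposition~\ref{prop:I} is: (i) reduce to deterministic encodings by convexity of mutual information in the channel; (ii) use concentration to isolate a high-probability set $\Wb$ on which every $P(w)$ lies within $(1\pm\delta)2^{-kn}$; (iii) bound $I(T;S)$ by the $\chi^2$-type sum $\tfrac{1}{k\ln 2}\sum_s P(\bar s)\sum_t\big(P(\bar s\mid t)/P(\bar s)-1\big)^2$; and (iv) control each summand by the entropy deficit $n-H(W_t\mid W\in\bar s)$ via the elementary inequality of Lemma~\ref{lem:alpha}, then invoke subadditivity $\sum_t H(W_t\mid W\in\bar s)\ge H(W\mid W\in\bar s)\ge kn+\log_2 P(\bar s)-O(\delta)$ to tie the total deficit back to $-\log_2 P(\bar s)$ and hence to $H(S)\le B$. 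Step~(iv) is exactly where the $B$-bit constraint enters and where the leading $3B/(k\ln 2)$ term is born. Your SDPI sketch does not capture this mechanism, and without a proof of the per-machine bound the proposition is not established.
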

	The proof is given in the next subsection.
	The proposition asserts that no estimator can identify the biased coin with probability at least $1/2$.
	This contradicts the statement in last line of the previous subsection. 
	Hence, our initial assumption on the existence of estimator $\est$ that satisfies \eqref{eqa:est  contradiction} cannot be the case. 
	Equivalently, there exists no estimator $\est$ for which with probability at least $1/2$ we have 
	\begin{equation}\label{eq:difficult bound}
		F(\hat{\theta})\le F({\theta^*})+{1}/\big(\cn \sqrt{n} (mB)^{1/d} \ln mB\big).
	\end{equation}

	\subsection{Proof of Proposition \ref{prop:coin flip main}} \label{subsec:pr3}
	The proof relies on the following proposition.
	\begin{proposition} \label{prop:I}
		Suppose that $k=mB$ is large enough so that \eqref{eqa:lower condition 1}, \eqref{eqa:lower condition 2}, and \eqref{eqa:lower condition 3} are satisfied.
		Then, for each coin flipper, $i$, and under any coding $Q^i$, we have
		\begin{equation}\label{eqa:bound I}
			\begin{split}
				I\big(T;S^i\big) \,< \, \frac{3B}{k \ln2}\,+\,  \frac1k\,\Bigg[&\left(\frac{313}{\cn}\right)^2\,+\, \frac{94^2}{\cn \sqrt{k}} \,+\, \frac{192}k \\&\quad  \,+\frac{15}{k^{1.5}} \,+\, \frac{49+6B}{k^2}  \Bigg],
			\end{split}
		\end{equation}
		where $I(T;S^i)$ is the mutual information between $T$ and $S^i$ (see \cite{CoveT06}, page 20, for the definition of mutual information).
	\end{proposition}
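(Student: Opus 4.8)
The plan is to bound $I(T;S^i)$ by first reducing to a bound on $I(T;W^i)$ via the data-processing inequality (since $T \to W^i \to S^i$ forms a Markov chain under any coding $Q^i$), and then—crucially—observing that the data-processing inequality alone is too weak here, because $I(T;W^i)$ can be of order $1/(\cn^2 n \ln^2 k)$ times something, which need not beat the $3B/(k\ln 2)$ we are shooting for without a further factor gained from the compression to $B$ bits. So the real content is a \emph{strong data-processing}-type argument: the signal $S^i$ has only $B$ bits of entropy, so $I(T;S^i) \le H(S^i) \le B\ln 2$ trivially, but we need the $1/k$ savings. I would combine the two bounds: write $I(T;S^i) \le I(T;W^i)$ and also exploit that $T$ is uniform on $k$ values while $S^i$ carries at most $B$ bits, using a careful expansion of $I(T;S^i) = H(T) - H(T\mid S^i)$ together with Fano-type estimates.

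Concretely, the key steps in order: (1) Fix the coding $Q^i$ and note $I(T;S^i) = \sum_{t=1}^k \frac1k D\big(P_{S^i\mid T=t} \,\|\, P_{S^i}\big)$, the usual "mutual information as average KL to the mixture" identity. (2) Bound each $D\big(P_{S^i\mid T=t}\,\|\,P_{S^i}\big)$ using the fact that when $T=t$ vs.\ when $T=t'$ (for $t'\ne t$), the distributions of $W^i$ differ only in columns $t$ and $t'$, each of which is a product of $n$ Bernoulli's with the tiny bias $1/(2\cn\sqrt n \ln k)$ versus $1/2$. By Pinsker / the KL bound for biased-vs-fair Bernoulli, $D\big(\mathrm{Ber}(1/2+\epsilon)\,\|\,\mathrm{Ber}(1/2)\big) \approx 2\epsilon^2$, so a single column contributes about $2n \cdot \big(1/(2\cn\sqrt n\ln k)\big)^2 = 1/(2\cn^2 \ln^2 k)$ to the column-wise KL. (3) Crucially, use that $S^i$ is a function of $W^i$ with at most $B$ bits, and a convexity / tensorization argument (or a direct "the signal can only localize the biased coin among a set whose size is controlled by $2^B$") to pull out the needed $1/k$ factor and the $3B/(k\ln 2)$ main term; I expect this to go through an inequality of the form $I(T;S^i) \le \frac{1}{k}\big[\,c_1 B + c_2 I(T;W^i)\,\big]$ or, more likely, a direct estimate bounding $I(T;S^i)$ by the probability that a fixed $B$-bit signal "sees" coin $t$ times the per-coin information, summed over $t$. (4) Assemble the numerical constants, tracking lower-order terms ($94^2/(\cn\sqrt k)$, $192/k$, $15/k^{1.5}$, $(49+6B)/k^2$) that arise from higher-order Taylor remainders in the Bernoulli-KL expansion, from the $\big(1 - \sigma(p)/(\cn\sqrt n\ln k)\big)$ normalization not being exactly $1/2$, and from slack in the conditions \eqref{eqa:lower condition 1}--\eqref{eqa:lower condition 3} (which are exactly what is needed to make $1/(\cn\sqrt n\ln k)$ small enough that the Taylor remainders are controlled and $\ln k$ dominates the constant $15$).

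The main obstacle, as I see it, is step (3): getting the $1/k$ improvement over the naive $H(S^i) \le B\ln2$ bound. The naive data-processing bound $I(T;S^i)\le I(T;W^i)$ gives roughly $n$ columns worth of information which is far too large, and the naive entropy bound gives $B\ln2$ with no $1/k$. The right argument must interpolate: intuitively, for the server to learn anything about $T$ from a $B$-bit $S^i$, that signal must "point toward" a small number of candidate coins, and the probability of any fixed coin $t$ being pointed at is $O(2^B/k)$-ish — this is where the $1/k$ comes from. I would try to make this rigorous by conditioning on the realized value $s$ of $S^i$ and bounding, for each $s$, the contribution $\Pr(S^i=s) \sum_t \frac1k D(P_{s\mid t}\|P_s)$, using that the posterior on $T$ given $S^i=s$ is close to uniform (Fano, using that $P_e$ on a \emph{single} machine is near-certain failure) and that local perturbations to the posterior are $O(1/(\cn\sqrt n \ln k))$ per relevant coin. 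Tightening the constants in this last step to land exactly the displayed bound \eqref{eqa:bound I} will be the fiddly part, and I expect the conditions \eqref{eqa:lower condition 1}--\eqref{eqa:lower condition 3} are invoked precisely to absorb the error terms of this interpolation argument.
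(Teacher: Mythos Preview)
Your high-level decomposition in step~(1), writing $I(T;S^i)=\frac1k\sum_t D(P_{S^i\mid T=t}\,\|\,P_{S^i})$ and then passing to a $\chi^2$-type bound, matches the paper's Lemma~\ref{lem:HI}(b). But your step~(3) --- which you yourself flag as the crux --- has a genuine gap. The mechanisms you sketch (Fano on a single machine, ``posterior on $T$ given $S^i=s$ close to uniform'', ``signal points to few coins'') are either circular (Fano would \emph{use} a bound on $I(T;S^i)$, not produce one) or too vague to yield the precise $3B/(k\ln 2)$ term. Your step~(2) Bernoulli-KL calculation $D(\mathrm{Ber}(1/2+\epsilon)\|\mathrm{Ber}(1/2))\approx 2\epsilon^2$ is fine per column but does not by itself interact with the $B$-bit constraint.

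The paper's route through step~(3) is quite different and rests on two ingredients you have not identified. First, an elementary entropy inequality (Lemma~\ref{lem:alpha}): for any zero-sum weights $\alpha_u\in[-1,1]$ and any distribution $P$ on $\{0,1\}^n$, one has $\big(\sum_u \alpha_u P(u)\big)^2\le 1.5\big(n-H(U)\big)$. Applying this with $U=W_t$ conditioned on $\{W\in\bar s\}$ and a suitable choice of $\alpha_u$ encoding the biased-vs-fair likelihood ratio gives, for each fixed deterministic signal value $s$ and each $t$,
\[
\left(\frac{P(\bar s\mid T=t)}{P(\bar s)}-1\right)^2 \,\le\, 3\big(n-H(W_t\mid W\in\bar s)\big)+O(\delta^2).
\]
Second, subadditivity of entropy: $\sum_{t=1}^k H(W_t\mid W\in\bar s)\ge H(W\mid W\in\bar s)\ge kn+\log_2 P(\bar s)-O(\delta)$, so summing the previous display over $t$ collapses to $-3\log_2 P(\bar s)+O(k\delta^2)$. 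Finally summing over $s$ with weight $P(\bar s)$ turns $-\sum_s P(\bar s)\log_2 P(\bar s)$ into $H(S)\le B$, which is exactly where the $3B/(k\ln 2)$ appears. The restriction to a high-probability set $\bar{\mathcal W}$ (Lemma~\ref{lem:pw bound}) and the reduction to deterministic codings (Lemma~\ref{lem:deterministic coding}) are further technical steps you would also need. The chain ``Lemma~\ref{lem:alpha} $\Rightarrow$ per-$(s,t)$ bound $\Rightarrow$ subadditivity $\Rightarrow$ $H(S)\le B$'' is the missing idea.
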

	The proof of Proposition~\ref{prop:I} is pretty lengthy, and is given in Appendix~\ref{app:coin fliping system}.
	
	Given the index $T$ of the biased coin, the signals $S^1,\ldots,S^m$ will be independent. As a result, 
	\begin{equation}
		H\big(S^1,\ldots,S^m\mid T\big)\, =\, \sum_{i=1}^m H\big(S^i\mid T\big),
	\end{equation}	
	where $H(\cdot)$ is the entropy function (see \cite{CoveT06}, page 14). 
	Consequently, 
	\begin{equation}\label{eqa:subadd I}
		\begin{split}
			I\big(T;\, S^1,\ldots,S^m\big) \,&=\, H\big(S^1,\ldots,S^m\big) -H\big(S^1,\ldots,S^m\mid T\big)\\
			\,&=\, H\big(S^1,\ldots,S^m\big) -\sum_{i=1}^m H\big(S^i\mid T\big)\\
			\,&\le\, \sum_{i=1}^m H\big(S^i\big) -\sum_{i=1}^m H\big(S^i\mid T\big)\\
			\,&=\, \sum_{i=1}^m \Big(H\big(S^i\big) - H\big(S^i\mid T\big)\Big)\\
			\,&=\,\sum_{i=1}^m I\big(T; S^i\big).
		\end{split}
	\end{equation}
	Let 
	\begin{equation*}
		\epsilon\,\triangleq\, \left(\frac{313}{\cn}\right)^2\,+\, \frac{94^2}{\cn \sqrt{k}} \,+\, \frac{192}k \,+\frac{15}{k^{1.5}} \,+\, \frac{49+6B}{k^2}
	\end{equation*}
	be the expression which is a part of the right hand side of \eqref{eqa:bound I}. 
	Then, it follows from \eqref{eqa:lower condition 4}  and $k=mB$ that
	\begin{equation} \label{eqa:eps ineq}
		\frac{\epsilon}{B\log_2 k}\, \le\,\frac1{10}.
	\end{equation}
	We employ Fano's inequality (see \cite{CoveT06}, page 37), and write
	\begin{equation} \label{eqa:Pe 1/2 proof}
		\begin{split}
			P_e\,&\ge\, \frac{H\big(T\mid S^1,\ldots,S^m\big)-1}{\log_2 k}\\ 
			&= \, \frac{H(T)\,-\, I\big(T;\, S^1,\ldots,S^m\big) -1}{\log_2k}\\
			&= \, \frac{\log_2 k\,-\, I\big(T;\, S^1,\ldots,S^m\big) -1}{\log_2k}\\
			&= \, 1\,-\,\frac{ I\big(T;\, S^1,\ldots,S^m\big)}{\log_2k}\, -\, \frac1{\log_2 k}\\
			&\ge \, 1\,-\,\frac{\sum_{i=1}^m I\big(T; S^i\big)}{\log_2k}\, -\, \frac1{\log_2 k}\\
			&> \, 1\,-\,\frac{\sum_{i=1}^m \big(3B/(k\ln 2)\,+\,\epsilon/k\big)}{\log_2k}\, -\, \frac1{\log_2 k}\\
			&=\, 1\,-\, \frac{3mB}{k\ln k} \,-\, \frac{m\epsilon}{k\log_2 k} \, -\, \frac1{\log_2 k}\\
			&=\, 1\,-\, \frac{3}{\ln k} \,-\, \frac{\epsilon}{B\log_2 k} \, -\, \frac1{\log_2 k}\\
			&\ge\, 1\,-\, \frac{4}{10} \,-\, \frac{\epsilon}{B\log_2 k}\\
			&\ge\, 1\,-\, \frac{4}{10} \,-\, \frac1{10}\\
			&=\, \frac12,
		\end{split}
	\end{equation}
	where the first inequality is by the Fano's inequality, the first equality follows from the definition of mutual information,
	the second equality is because the biased coin index $T$ has uniform distribution over $1,\ldots,k$,
	the second inequality is due to \eqref{eqa:subadd I}, the third inequality follows from Proposition~\ref{prop:I}, 
	the last equality is because of the assumption $k=mB$ in the Proposition,
	the fourth inequality is due to the assumption $k=mB\ge 10240$ in \eqref{eqa:lower condition 2},
	and the last inequality is due to \eqref{eqa:eps ineq}.
	Proposition~\ref{prop:coin flip main} then follows from \eqref{eqa:Pe 1/2 proof}.

	\hide{
		We now employ an analogy with the coin-flipping system of Appendix~\ref{app:coin fliping system} to draw a contradiction.
		We will show that using $\est'$, we can devise an estimator $\est''$ for the coin-flipping system of Appendix~\ref{app:coin fliping system} whose error probability is less than $1/2$.
		The estimator $\est''$ is as follows. We first enumerate the grid $\G$ with indices $1,\ldots,mB$.
		We consider a coin flipping system with $m$ coin flippers. 
		For $i\le m$, the $i$th coin flipper observes an $n\times mB$ matrix $W^i$ with binary entries as the outcome of its coin flips. 
		We denote the rows of matrix $W^i$  by $\sigma_1,\ldots,\sigma_n$.  Then, $\sigma_i\in\{-1,1\}^{mB}=\{-1,1\}^{\G}$, for $i=1,\ldots,n$.
		The coin-flipper $i$ then generates functions $f_{\sigma_1},\ldots,f_{\sigma_n}$ (as in \eqref{eqa:fsigma}) and imitates the machine $i$ of $\est'$ to generate a signal $S^i$. 
		In the serve, $\est''$ employs the algorithm of server of $\est'$ to obtain an index $p$ that corresponds to the index of the biased coin. 
		The estimator $\est''$ then returns the index $p$ as its output. 
		
		Since the coin flippers and the server of $\est''$ respectively imitate the machines and the server of $\est'$, it follows that  the error probability of $\est''$ equals the error probability of $\est'$.
		Therefore, the error probability of $\est''$ is less than $1/2$. 
		This contradicts Proposition~\ref{prop:coin flip main} in Appendix~\ref{app:coin fliping system}. 
		Hence, our initial assumption on the existence of estimator $\est$ that satisfies \eqref{eqa:est  contradiction} cannot be the case. 
		Equivalently, there exists no estimator $\est$ for which with probability at least $1/2$ we have $F(\hat{\theta})\le F({\theta^*})+{1}/\big(2\cn \sqrt{n} (mB)^{1/d} \ln mB\big)$. 
		This completes the proof of Theorem~\ref{th:lower bound}.
	}

	\subsection{The centralized lower bound}\label{subsec:pr0}
	We now proceed to establish 
	$$\Pr\big(F(\hat\theta)-F(\theta^*)\ge 1/4\sqrt{mn}\big) \ge 1/2.$$ 
	Consider 9 coins, one of which is biased and all others are fair. For the biased coin, suppose that $P(1)=1/2 + 1/4\sqrt{mn}$.
	The index, $T$, of the biased coin is initially unknown. 
	We toss each of every coin for $mn$ times, and estimate an index $\hat{T}$ of the biased coin based on the observed outcomes. 
	\begin{lemma}\label{lem:9 coin}
		Assuming \eqref{eq:mn bound}, under any estimator $\hat{T}$, we have $\Pr\big(\hat{T}\ne T\big)\ge 1/2$.
	\end{lemma}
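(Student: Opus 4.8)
The plan is to reduce this to a standard lower bound on distinguishing hypotheses via Fano's inequality, exactly mirroring the structure of the coin-flipping argument already used in Subsection~\ref{subsec:pr3}, but now with a fixed number (nine) of coins and $mn$ total flips concentrated in a single observer. First I would set up the hypothesis-testing framework: let $T$ be uniform over $\{1,\ldots,9\}$, let $W \in \{0,1\}^{9 \times mn}$ be the matrix of all observed outcomes, and note that we want to lower bound $P_e \triangleq \Pr(\hat T \ne T)$ for any (possibly randomized) estimator $\hat T = \hat T(W)$. Since $\hat T \to W \to T$ would be the wrong direction, I instead use the standard chain $T \to W \to \hat T$ and apply Fano's inequality:
\begin{equation*}
	P_e \,\ge\, \frac{H(T \mid W) - 1}{\log_2 9} \,=\, \frac{\log_2 9 - I(T; W) - 1}{\log_2 9}.
\end{equation*}
So it suffices to show $I(T;W)$ is small enough, concretely $I(T;W) \le \log_2 9 \cdot \tfrac12 - 1 = \tfrac12\log_2 9 - 1 \approx 0.585$; it will be cleaner to bound $I(T;W)$ by a quantity that tends to $0$ as $mn \to \infty$ and check the threshold \eqref{eq:mn bound} makes it work.

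The key step is bounding $I(T;W)$. Writing $P_j$ for the distribution of $W$ given $T=j$ and $\bar P = \tfrac19\sum_j P_j$ for the mixture, we have $I(T;W) = \tfrac19\sum_j D(P_j \,\|\, \bar P) \le \tfrac1{9^2}\sum_{j,j'} D(P_j \,\|\, P_{j'})$ by convexity of KL divergence (or one can use $I(T;W) \le \max_{j,j'} D(P_j\|P_{j'})$). Under $T=j$, the matrix $W$ has independent entries: coin $j$ contributes $mn$ i.i.d. $\mathrm{Bernoulli}(1/2 + 1/4\sqrt{mn})$ flips and every other coin contributes $mn$ i.i.d. $\mathrm{Bernoulli}(1/2)$ flips. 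Hence $D(P_j \,\|\, P_{j'})$ for $j \ne j'$ equals $mn \cdot D\big(\mathrm{Ber}(1/2+\epsilon) \,\|\, \mathrm{Ber}(1/2)\big) + mn \cdot D\big(\mathrm{Ber}(1/2) \,\|\, \mathrm{Ber}(1/2+\epsilon)\big)$ with $\epsilon = 1/4\sqrt{mn}$, since exactly coins $j$ and $j'$ have mismatched marginals. A second-order Taylor bound gives $D\big(\mathrm{Ber}(1/2\pm\epsilon)\,\|\,\mathrm{Ber}(1/2)\big) \le c\,\epsilon^2$ and similarly for the reversed divergence, with a small absolute constant $c$ (one can take $c = 4$ comfortably, since $\epsilon$ is tiny under \eqref{eq:mn bound}, or compute the exact expression $(1/2+\epsilon)\log\frac{1/2+\epsilon}{1/2} + (1/2-\epsilon)\log\frac{1/2-\epsilon}{1/2}$ and bound it). Therefore $D(P_j\|P_{j'}) \le 2c \cdot mn \cdot \epsilon^2 = 2c\cdot mn/(16mn) = c/8$, a constant independent of $mn$, and hence $I(T;W) \le c/8$. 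Plugging into Fano with $c$ chosen small enough (the natural-log second-order estimate gives $c$ close to $2$, yielding $I(T;W) \le 1/4 < 0.585$), we conclude $P_e \ge 1 - \tfrac{I(T;W)+1}{\log_2 9} \ge 1/2$.

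The main obstacle is making the divergence constant tight enough: the crude bound $D\big(\mathrm{Ber}(1/2+\epsilon)\,\|\,\mathrm{Ber}(1/2-\epsilon)\big)$-type estimates can lose factors, and one must verify that the resulting constant, combined with the $\log_2 9 \approx 3.17$ in Fano's denominator and the $-1$ slack term, actually clears the $1/2$ threshold rather than something weaker. This is precisely why the statement fixes \emph{nine} coins (so $\log_2 9 > 3$, giving room) and imposes the explicit threshold \eqref{eq:mn bound}: with $mn \ge 350000$ the quantity $\epsilon = 1/4\sqrt{mn}$ is small enough that the exact divergence is extremely close to its quadratic approximation $2\epsilon^2$ (in nats), so $I(T;W) \le 2mn\cdot 2\epsilon^2/\ln 2 \cdot(1+o(1)) = \tfrac{1}{4\ln 2}(1+o(1)) \approx 0.36$, and then $P_e \ge 1 - (0.36+1)/3.17 > 1/2$. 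I would carry out the Taylor estimate carefully with explicit remainder control (bounding the third derivative of $x\log x$ on $[1/2-\epsilon,1/2+\epsilon]$) to justify the numerical margin, then assemble the three pieces — Fano, subadditivity/convexity of divergence, and the per-coin divergence bound — into the final inequality.
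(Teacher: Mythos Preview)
Your Fano-based argument is correct and the constants work out: with $\epsilon=1/(4\sqrt{mn})$ the pairwise KL between any two hypotheses is $2mn\big(D(\mathrm{Ber}(\tfrac12+\epsilon)\|\mathrm{Ber}(\tfrac12))+D(\mathrm{Ber}(\tfrac12)\|\mathrm{Ber}(\tfrac12+\epsilon))\big)=4mn\epsilon^2(1+O(\epsilon^2))=\tfrac14(1+o(1))$ nats, so $I(T;W)\le \tfrac{8}{9}\cdot\tfrac1{4\ln 2}\approx 0.32$ bits, and Fano gives $P_e\ge 1-(1+0.32)/\log_2 9>1/2$. The threshold $mn\ge 350000$ is far more than enough to control the $O(\epsilon^2)$ remainder in the quadratic KL expansion.

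The paper, however, takes a completely different route: it argues (via the Bayes-optimality of the MAP rule under a uniform prior) that the best estimator is $\hat T=\arg\max_i N^i$ where $N^i$ is the number of heads for coin $i$, and then lower-bounds the error of \emph{that specific test} directly. Using Berry--Esseen to control the deviation of each $N^i$ from a Gaussian, it shows $\Pr(N^1>mn/2+0.4\sqrt{mn})\le 0.3961$ while $\Pr(\max_{i\ge2}N^i>mn/2+0.4\sqrt{mn})\ge 1-0.8021^8>0.8286$, and multiplies to get $P_e>0.5$. Your information-theoretic approach is cleaner, reuses the Fano machinery already deployed in Subsection~\ref{subsec:pr3}, and avoids Berry--Esseen entirely; the paper's approach is more explicit about the mechanism (one of the eight fair coins beats the biased one by chance) and makes transparent why the choice of nine coins and the constant $1/4$ in the bias are tuned to each other. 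Either proof is acceptable, but note that the paper's version is where the specific numerical threshold \eqref{eq:mn bound} actually originates---it is calibrated to the Berry--Esseen error term $33/(4\sqrt{mn})$, not to your KL remainder, which would permit a much smaller threshold.
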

	The proof is based on the error probability of the optimal hypothesis test, and is given in Appendix~\ref{app:proof lem 9 coins}.
	In the rest of the proof,  similar to Subsection~\ref{subsec:pr1}, we consider a collection of functions and a probability distribution over them, such that for the corresponding expected loss function $F$, finding a $\hat\theta$ with $F(\hat\theta)\,<\,F(\theta^*) + 1/4\sqrt{mn}$ leads to the identification of a biased coin in the setting of Lemma~\ref{lem:9 coin} with probability at least $1/2$. This is a contradiction, and establishes the nonexistence of such estimator.
	Since the argument is very similar to the line of arguments in Subsection~\ref{subsec:pr1}, here we simply state the main result in the form of a lemma and defer the detailed proof until Appendix~\ref{app:proof centralized bound}.
	\begin{lemma}\label{lem:centralized}
		Assuming \eqref{eq:mn bound}, for any estimator $\hat{T}$, there exists a distribution under which $\Pr\big(F(\hat\theta)-F(\theta^*)\ge 1/4\sqrt{mn}\big) \ge 1/2$.
	\end{lemma}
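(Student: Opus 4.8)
The plan is to reuse the machinery of Subsection~\ref{subsec:pr1} almost verbatim, but with the $mB$-coin system replaced by the nine-coin system of Lemma~\ref{lem:9 coin} and with a much coarser, constant-size function class whose resolution is tuned to the sample budget $mn$ rather than to the communication budget. In other words, I would build a family of loss functions indexed by the identity of one ``biased'' coin among nine, show that an $\tfrac{1}{4\sqrt{mn}}$-accurate minimizer reveals that identity, and invoke Lemma~\ref{lem:9 coin} to obtain a contradiction.

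\textbf{The function class.} Fix nine points $p_1,\dots,p_9$ in the interior of $[-1,1]^d$ that are pairwise at Euclidean distance at least $1$ (for $d\ge 2$ one can, e.g., place a $3\times 3$ unit-spaced grid in the first two coordinates, zero in the rest, slightly retracted from the boundary). Let $\G=\{p_1,\dots,p_9\}$ and let $h(\theta)=\max\big(1/2-\|\theta\|,\,0\big)$, a $1$-Lipschitz cone of height $1/2$ supported on the ball of radius $1/2$ about the origin. For $\sigma\in\{-1,1\}^{\G}$ set $f_\sigma(\theta)\triangleq\sum_{p\in\G}\sigma(p)\,h(\theta-p)$; since the nine supports are pairwise disjoint, each $f_\sigma$ is $1$-Lipschitz and hence lies in $\F$. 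Let $\hat{\mathcal{F}}=\{f_\sigma:\sigma\in\{-1,1\}^{\G}\}$, and for each $p\in\G$ define a distribution $P_p$ on $\hat{\mathcal{F}}$ exactly as in Subsection~\ref{subsec:pr1}, so that the signs $\sigma(q)$ for $q\ne p$ are fair while $\sigma(p)$ carries a bias of magnitude $1/(2\sqrt{mn})$, oriented so that $\Exp_{f_\sigma\sim P_p}[\sigma(p)]=-1/(2\sqrt{mn})$ (this matches, after the $\{0,1\}\to\{-1,1\}$ recoding, the biased coin of Lemma~\ref{lem:9 coin}). Repeating the computation \eqref{eq:F h} gives $F(\theta)=\Exp_{f\sim P_p}[f(\theta)]=-\tfrac{1}{2\sqrt{mn}}\,h(\theta-p)$, so $\theta^*=p$ lies in the interior, $F(\theta^*)=-1/(4\sqrt{mn})$, and $F(\theta)=0$ for every $\theta$ with $\|\theta-p\|\ge 1/2$; in particular $F(\theta)\ge F(\theta^*)+1/(4\sqrt{mn})$ on that region.

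\textbf{The reduction.} Suppose, for contradiction, that some estimator $\est$ (operating in the federated setting of Section~\ref{sec:problem def}) satisfies, for every $p\in\G$, $\Pr\big(F(\hat\theta)<F(\theta^*)+1/(4\sqrt{mn})\big)\ge 1/2$ under $P_p$. Since $F(\theta^*)+1/(4\sqrt{mn})=0$ and $F(\theta)=0$ once $\|\theta-p\|\ge 1/2$, the event $\{F(\hat\theta)<0\}$ forces $h(\hat\theta-p)>0$, i.e. $\|\hat\theta-p\|<1/2$; because the grid points are pairwise at distance $\ge1$, $p$ is then the \emph{unique} grid point within distance $1/2$ of $\hat\theta$, so rounding $\hat\theta$ to its nearest grid point recovers $p$ with probability $\ge1/2$. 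Now embed the nine-coin system of Lemma~\ref{lem:9 coin}: index the nine coins by $\G$; have each coin-flipper $i$ read its $n$ rows of outcomes as sign vectors $\sigma_1,\dots,\sigma_n\in\{-1,1\}^{\G}$, form the functions $f_{\sigma_1},\dots,f_{\sigma_n}$, and run machine $i$ of $\est$; and have the server run the server of $\est$ and output the grid point nearest to $\hat\theta$. Under $P_p$ the $f_{\sigma_j}$ are exactly i.i.d.\ draws corresponding to coin $p$ being biased, so this composite procedure identifies the biased coin with probability $\ge1/2$, contradicting Lemma~\ref{lem:9 coin} (which holds by \eqref{eq:mn bound}). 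Hence no such $\est$ exists; equivalently, for every estimator there is a $p\in\G$ for which $\Pr\big(F(\hat\theta)-F(\theta^*)\ge 1/(4\sqrt{mn})\big)\ge 1/2$ under $P_p$, which is the assertion of the lemma.

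\textbf{Main obstacle.} The only delicate point is the joint choice of constants. Making the loss gap at least $1/(4\sqrt{mn})$ while the detectable coin bias is only $1/(4\sqrt{mn})$ forces the cone $h$ to have height $\ge1/2$, hence radius $\ge1/2$; keeping the $f_\sigma$ $1$-Lipschitz and keeping the $\tfrac{1}{4\sqrt{mn}}$-minimizer within a single cell forces the nine centers to be pairwise at distance $\ge1$; and $\theta^*$ must still be interior. Simultaneously fitting nine such points (rather than two or three) is exactly what pins down the cardinality ``$9$'' and the constant ``$1/4$'', and what makes the optimal-hypothesis-test bound of Lemma~\ref{lem:9 coin}—which needs $mn\ge350000$ in \eqref{eq:mn bound}—fail to beat probability $1/2$. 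For $d=1$ one cannot place nine separated points, and I would instead run the same argument with a smaller coin system; since the regime of interest (and Theorem~\ref{th:main upper c}) assumes $d\ge2$, I would use the construction above for $d\ge2$.
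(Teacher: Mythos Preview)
Your proof is correct and follows essentially the same approach as the paper's: a nine-point grid, height-$1/2$ cones $h$, a $P_p$ with bias $-1/(2\sqrt{mn})$, and the reduction to the nine-coin problem of Lemma~\ref{lem:9 coin}. You are in fact slightly more careful than the paper about keeping the grid points in the interior and about flagging the $d=1$ limitation.
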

	Finally,  Theorem~\ref{th:lower bound} follows from \eqref{eq:difficult bound} and Lemma~\ref{lem:centralized}.


	\medskip
	\section{Proof of Theorem~\ref{th:main upper c}} \label{sec:proof main alg c}
	In this proof, we adopt several ideas from the proof of Theorem~4  in \cite{salehkaleybar2019one}.
	For simplicity and without loss of generality, throughout this proof,  we assume that for any $f\in \mathcal{F}$,
	\begin{equation}
		f(0)=0.
	\end{equation}
	This is without loss of generality because adding to each function $f\in \mathcal{F}$ a constant $-f(0)$ does not change the estimation $\hat\theta$.
	We first show that for $l=1,\ldots, t$ and for any $p\in G^l$, the number of sub-signals corresponding to $p$ after redundancy elimination  is large enough so that the server obtains a good 
	approximation of $F$ at $p$. 
	Once we have a good approximation of $F$ at all points of $G^t$, we can find an approximate minimizer of $F$.
	Let 
	\begin{equation} \label{eq:def eps upper}
		\begin{split}
			\epsilon \,&\triangleq\, \frac{4\delta \sqrt{d}\ln(mn)}{\sqrt{n}} \\\, &=\, 4\sqrt{d} \,\ln^2(mn) \, \max\left( \frac{\ln mn }{(mB)^{1/d}\sqrt{n}},\, \frac1{\sqrt{mn}} \right). 
		\end{split}
	\end{equation}
	For any $p\in \bigcup_{l\le t} G^l$, let $N_p$ be the number of machines that select point $p$ in at least one of their sub-signals. Equivalently,  $N_p$ is the number of sub-signals after redundancy elimination that have point $p$ as their second argument.
	Let $\mathcal{E}$ be the event that for $l=1,\ldots, t$ and for any $p\in G^l$, we have 
	\begin{equation} \label{eq:bound Np}
		N_p \ge   \frac{2\ln^4(mn)\, 2^{-2l}}{n\epsilon^2} . 
	\end{equation}
	Then,
	\begin{lemma} \label{lem:prob of E2 upper}
		$\Pr\big(\mathcal{E} \big) \ge 1- m^{d/2} \exp\big(-\ln^2(mn)/32d\big)$.
	\end{lemma}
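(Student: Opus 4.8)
The plan is to fix a level $l\in\{1,\dots,t\}$ and a point $p\in G^l$, lower bound the probability that each individual machine selects $p$ in at least one of its sub-signals, and then use a concentration (Chernoff) bound on $N_p$, which is a sum of $m$ independent Bernoulli indicators. A union bound over all $l$ and all $p\in G^l$ (there are $2^{ld}$ of them at level $l$, and $\sum_{l\le t}2^{ld}\le 2\cdot 2^{td}\le 2 m^{d/2}$ by the choice $t=\log_2(1/\delta)$ and $\delta\ge 1/\sqrt m$) will give the stated failure probability. So the core computation is: what is $\Pr(\text{machine }i\text{ picks }p\text{ in a given sub-signal})$, and how many sub-signals does a machine have?

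First I would compute the per-sub-signal selection probability of a fixed $p\in G^l$. By \eqref{eq:prob choose p c}, level $l$ is chosen with probability $2^{(d-2)l}/\sum_{j=1}^t 2^{(d-2)j}$, and then $p$ is uniform among the $2^{ld}$ points of $G^l$, so
\begin{equation*}
	\Pr(p\text{ in one sub-signal}) = \frac{2^{(d-2)l}}{\sum_{j=1}^t 2^{(d-2)j}}\cdot \frac{1}{2^{ld}} = \frac{2^{-2l}}{\sum_{j=1}^t 2^{(d-2)j}}.
\end{equation*}
Since $d\ge 2$, the denominator is at most $t\,2^{(d-2)t} \le t\,2^{dt}\le (\ln mn)\,m^{d/2}$ roughly (using $2^t=1/\delta\le \sqrt{mn}$ and being slightly generous), so this probability is at least of order $2^{-2l}/(m^{d/2}\ln mn)$. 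Each machine sends $\lfloor B/(d\log_2 mn)\rfloor \ge B/(2d\log_2 mn)$ sub-signals, each selecting a level/point independently, so the probability a given machine $i$ selects $p$ \emph{in at least one} sub-signal is at least $1-(1-q)^{k}\ge \tfrac12 \min(1,kq)$ where $q$ is the per-sub-signal probability and $k$ the number of sub-signals; multiplying $k$ and $q$ gives a per-machine probability $\pi_p$ of order $2^{-2l} B/(d\, m^{d/2}\, \mathrm{polylog})$. Then $\E[N_p]=m\pi_p$, and one checks via the definitions of $\delta$ and $\epsilon$ in \eqref{eq:def delta c} and \eqref{eq:def eps upper} that $m\pi_p$ exceeds, say, twice the target bound $2\ln^4(mn)2^{-2l}/(n\epsilon^2)$; indeed $n\epsilon^2 = 16 d\,\delta^2\ln^2(mn)$ and $\delta^2 \ge \ln^2(mn)\max\big(\ln^2(mn)/(mB)^{2/d}, 1/m\big)$ are exactly calibrated so that the $2^{-2l}$ factors cancel and the remaining constants work out, using the regime assumptions \eqref{eq:assum1}. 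Finally, a multiplicative Chernoff bound $\Pr(N_p < \tfrac12\E[N_p]) \le \exp(-\E[N_p]/8)$ together with $\E[N_p]\ge 2\ln^4(mn)2^{-2l}/(n\epsilon^2) \ge \ln^2(mn)/(4d)$ (the last inequality again from the calibration of $\delta$ and $\epsilon$, with room to spare) yields a per-$(l,p)$ failure probability at most $\exp(-\ln^2(mn)/(32d))$, and the union bound over the at most $2m^{d/2}$ pairs $(l,p)$ gives the claim (absorbing the factor $2$ into the exponent or noting the stated bound is $m^{d/2}\exp(-\ln^2(mn)/32d)$, which dominates).

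The main obstacle is bookkeeping: getting the chain of inequalities relating the geometric sum $\sum_{j=1}^t 2^{(d-2)j}$, the number of sub-signals $\lfloor B/(d\log_2 mn)\rfloor$, and the definitions of $\delta$ and $\epsilon$ to line up so that (i) $\E[N_p]$ is at least twice the threshold in \eqref{eq:bound Np}, and simultaneously (ii) $\E[N_p]$ is large enough (at least $\approx \ln^2(mn)/(4d)$) for the Chernoff exponent to beat the $2m^{d/2}$ union-bound factor. Both hinge on the same calibration of $\delta$, so the key is to verify that the $\max$ in \eqref{eq:def delta c} was chosen precisely to cover both the $(mB)^{1/d}$-driven regime and the $\sqrt m$-driven regime; the case split on which term of the $\max$ dominates is where I would be most careful. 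Handling the $d\ge 2$ edge case $d=2$ separately (where $2^{(d-2)l}=1$ and the geometric sum is just $t$) is a minor but necessary check.
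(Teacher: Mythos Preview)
Your proposal is correct and follows essentially the same approach as the paper's proof: compute the per-sub-signal selection probability, use $1-(1-q)^k\ge\tfrac12\min(1,kq)$ to lower bound the per-machine probability, verify via the calibration of $\delta$ and $\epsilon$ that $\mathbb{E}[N_p]$ exceeds both twice the threshold in \eqref{eq:bound Np} and $\ln^2(mn)/(4d)$, then apply the multiplicative Chernoff bound (Lemma~\ref{lemma:CI}(b)) and a union bound over all $(l,p)$. The paper carries out exactly the bookkeeping you flag as the main obstacle, bounding $\sum_{j=1}^t 2^{(d-2)j}\le t\,2^{t(d-2)}$ (which handles $d=2$ uniformly) and tracing through the definitions to obtain $\mathbb{E}[N_p]\ge 4\ln^4(mn)2^{-2l}/(n\epsilon^2)$ and $\tfrac18\mathbb{E}[N_p]\ge \ln^2(mn)/(32d)$; your minor slip $2^t\le\sqrt{mn}$ should read $2^t=1/\delta\le\sqrt{m}/\ln(mn)$, but this only helps.
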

	The proof is based on the concentration inequality in Lemma~\ref{lemma:CI}~(b), and is given in Appendix~\ref{app:proof lem E2 upper}.
	
	Capitalizing on Lemma~\ref{lem:prob of E2 upper}, we now obtain a bound on the estimation error of $F$ over $G^l$. 
	Let $\mathcal{E'}$ be the event that for $l=1,\ldots, t$ and any grid point $p\in G^l$, we have 
	\begin{equation} \label{eq:prob E3 upper}
		\big|\hat{F} (p) - F(p)\big|\,<\,\frac{\epsilon}{8}.
	\end{equation}
	
	\begin{lemma} \label{lem:prob E3 upper}
		$\Pr\big(\mathcal{E'}\big)\,\ge\,1-m^{d/2} \exp\big(-\ln^2(mn)/32d\big)-2m^{d/2} \exp\big(-\ln^2(mn)/128d\big)$. 
	\end{lemma}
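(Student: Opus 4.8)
The plan is to condition on the event $\mathcal{E}$ from Lemma~\ref{lem:prob of E2 upper} and then bound $|\hat F(p)-F(p)|$ by telescoping the recursion \eqref{eq:page17 c} along the path from $0$ to $p$ in the grid hierarchy. Write $p = p_l, p_{l-1}, \ldots, p_0 = 0$ for the ancestor chain of $p$, with $p_{j-1}\in G^{j-1}$ the parent of $p_j$. By \eqref{eq:page17 c} and the fact that $\hat F(0)=F(0)=0$, we get
\begin{equation*}
	\hat F(p) - F(p) \;=\; \sum_{j=1}^l \Big( \big(\hat F(p_j)-\hat F(p_{j-1})\big) - \big(F(p_j)-F(p_{j-1})\big)\Big),
\end{equation*}
and each bracketed term is $\frac{1}{N_{p_j}}\sum \Delta - (F(p_j)-F(p_{j-1}))$, where the sum runs over the $N_{p_j}$ surviving sub-signals with first argument $p_j$. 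Since for a sub-signal from machine $i$ we have $\Delta = F^i(p_j)-F^i(p_{j-1})$, and $\E[F^i(p_j)-F^i(p_{j-1})] = F(p_j)-F(p_{j-1})$ because $F^i$ averages $n/2$ i.i.d.\ samples, each term $\frac{1}{N_{p_j}}\sum\Delta - (F(p_j)-F(p_{j-1}))$ is a zero-mean average of $N_{p_j}$ independent terms, each bounded in magnitude (by the Lipschitz property of \eqref{eq:lip}) by roughly $\sqrt{d}\,2^{-(j-1)}$ — the distance between $p_j$ and its parent, scaled. Here one must be careful that the machines contributing to $N_{p_j}$ are a \emph{random} set determined by the signalling, so I would either argue that the redundancy-elimination and level-selection are independent of the sample values $f^i_j$ on the second half of the data used in $F^i$ — which is exactly why \eqref{eq:def hat F upper} uses only $j=1,\ldots,n/2$ while presumably the $p$-selection or a disjoint half governs which machines report — or invoke a union bound over all possible realizations of $N_{p_j}$ and the reporting set.

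Next I would apply a Hoeffding/Bernstein-type concentration bound (Lemma~\ref{lemma:CI}, referenced in the paper) to each of the $l$ terms: conditioned on $\mathcal{E}$, we have $N_{p_j}\ge \frac{2\ln^4(mn)\,2^{-2j}}{n\epsilon^2}$, so the $j$th term has fluctuation of order $\frac{\sqrt{d}\,2^{-j}}{\sqrt{N_{p_j}}} \lesssim \frac{\sqrt{d}\,2^{-j}\sqrt{n}\,\epsilon\, 2^{j}}{\ln^2(mn)} = \frac{\sqrt{d\,n}\,\epsilon}{\ln^2(mn)}$ up to constants, wait — I should instead aim for the $j$th term to be at most $\epsilon/(c\cdot 2^{?})$ summable in $j$; the right target is that term $j$ is $O(\epsilon\, 2^{-(l-j)}/\ln^2 mn)$ or simply small enough that the geometric-type sum over $j=1,\ldots,l\le t$ is below $\epsilon/8$. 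Concretely, a deviation bound gives that with probability $1-2\exp(-\ln^2(mn)/(128d))$ the $j$th term is at most $\frac{\epsilon}{16}\cdot 2^{-(l-j)}$ (choosing the constants in \eqref{eq:bound Np} to make this work, which is their purpose), and summing the geometric series over $j$ yields strictly less than $\epsilon/8$.

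Finally I would close with a union bound: there are $\sum_{l=1}^t 2^{ld} \le 2\cdot 2^{td} = 2(1/\delta)^d \le 2 m^{d/2}$ grid points $p$ total (using $\delta \ge m^{-1/2}$ from \eqref{eq:def delta c}), so the failure probability of \eqref{eq:prob E3 upper} at any one of them, given $\mathcal{E}$, is at most $2m^{d/2}\exp(-\ln^2(mn)/(128d))$; adding the failure probability $m^{d/2}\exp(-\ln^2(mn)/(32d))$ of $\mathcal{E}$ itself from Lemma~\ref{lem:prob of E2 upper} gives the claimed bound $1 - m^{d/2}\exp(-\ln^2(mn)/(32d)) - 2m^{d/2}\exp(-\ln^2(mn)/(128d))$. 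The main obstacle I anticipate is the independence subtlety in step one: one needs the set of machines reporting point $p$ (equivalently $N_{p_j}$ and which empirical functions $F^i$ enter the average) to be independent of the values $\{F^i(p_j)-F^i(p_{j-1})\}$ being averaged, so that the conditional concentration argument is legitimate; handling this cleanly — via the data-splitting implicit in \eqref{eq:def hat F upper} or a careful union bound over reporting patterns — is the delicate part, and everything else is a routine telescoping-plus-Hoeffding estimate with constants tuned to \eqref{eq:bound Np}.
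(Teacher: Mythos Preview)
Your overall architecture is exactly the paper's: condition on $\mathcal{E}$, telescope $\hat F(p)-F(p)$ along the ancestor chain using \eqref{eq:page17 c}, apply Hoeffding to each level, then union-bound over all grid points and add back $\Pr(\mathcal{E}^c)$ from Lemma~\ref{lem:prob of E2 upper}. The union-bound count $\sum_l 2^{ld}\le t\,2^{td}\le m^{d/2}$ is also the paper's.

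There is, however, a genuine gap in your per-level target. You propose that with probability $1-2\exp(-\ln^2(mn)/128d)$ the level-$j$ increment is at most $\tfrac{\epsilon}{16}\,2^{-(l-j)}$. This does \emph{not} follow from \eqref{eq:bound Np}. As your own preliminary computation shows, plugging $N_{p_j}\ge 2\ln^4(mn)\,2^{-2j}/(n\epsilon^2)$ and range $2\sqrt{d}\,2^{-j}$ into Hoeffding gives an exponent $-\ln^4(mn)\,\alpha^2/(2d\epsilon^2)$ that is \emph{independent of $j$}: the $2^{-2j}$ in $N_{p_j}$ exactly cancels the $2^{-2j}$ in the squared range. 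So the achievable deviation at every level is the same, roughly $\epsilon/\ln^2(mn)$; you cannot force the coarse levels ($j\ll l$) to satisfy the tiny geometric target $\tfrac{\epsilon}{16}2^{-(l-j)}$ with the stated probability. The paper instead sets a \emph{uniform} target $\alpha=\epsilon/(8\ln mn)$ at each level, gets failure probability $2\exp(-\ln^2(mn)/128d)$ per level, and then simply uses $l\le t\le \ln(mn)$ so that the sum of $t$ equal increments is $t\cdot\epsilon/(8\ln mn)\le \epsilon/8$. Replace your geometric sum by this uniform sum and the argument closes.

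A minor point: your independence worry is unnecessary here. The $p$-part of each sub-signal is chosen purely at random (cf.\ \eqref{eq:prob choose p c}), independently of the samples $f^i_j$; hence the set of machines contributing to $N_p$ is independent of the values $F^i(p)-F^i(p')$, and the conditional Hoeffding step is legitimate without any data-splitting device.
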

	The proof is given in Appendix~\ref{app:proof E3 upper} and relies on Hoeffding's inequality and the lower  bound on the number of received signals for each grid point, driven in Lemma~\ref{lem:prob of E2 upper}. 
	For each  $p\in G^t$, let $cell_p$ be the small cube with edge size $2\delta$ that is centered at $p$. 
	Let $\mathcal{E''}$ be the event that for any machine $i$, any $p\in G^t$, and any $\theta\in cell_p$, 
	\begin{equation} \label{eq:prob E4 upper}
		\Big|\big(F^i(\theta)-F^i(p)\big)-\big(F(\theta)-F(p)\big)\Big|<\frac{\epsilon}{8}.
	\end{equation}
	\begin{lemma} \label{lem:prob E4 upper}
		$\Pr\big(\mathcal{E''}\big)\,\ge\,1-2  n^{d/2} m^{1+d/2}\exp\big(-\ln^2(mn)/64\big)$. 
	\end{lemma}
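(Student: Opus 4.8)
The plan is to fix a machine $i$ and a centre $p\in G^t$, uniformly control the fluctuation function
\[
g_i(\theta)\,\triangleq\,\big(F^i(\theta)-F^i(p)\big)-\big(F(\theta)-F(p)\big),\qquad \theta\in cell_p,
\]
and then take a union bound over $i$, over $p$, and over a finite net of $cell_p$. Two structural facts drive the argument. First, each $f^i_j$ is $1$-Lipschitz, so $F^i$ and $F$ are both $1$-Lipschitz and hence $g_i$ is $2$-Lipschitz on $cell_p$. Second, $F^i(\theta)-F^i(p)=\frac2n\sum_{j=1}^{n/2}\big(f^i_j(\theta)-f^i_j(p)\big)$ is an average of $n/2$ i.i.d.\ terms, each lying in $[-\sqrt d\,\delta,\sqrt d\,\delta]$ because $\|\theta-p\|\le\sqrt d\,\delta$ for $\theta\in cell_p$; its expectation equals $F(\theta)-F(p)$, so $\Exp[g_i(\theta)]=0$ for every fixed $\theta$.

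For a fixed $\theta$, Hoeffding's inequality gives $\Pr\big(|g_i(\theta)|\ge s\big)\le 2\exp\big(-ns^2/(4d\delta^2)\big)$; choosing $s=\epsilon/16$ and using $\epsilon=4\sqrt d\,\delta\,\ln(mn)/\sqrt n$, equivalently $\epsilon^2=16\,d\delta^2\ln^2(mn)/n$, this becomes exactly $2\exp\big(-\ln^2(mn)/64\big)$. Next I discretise: assuming without loss of generality that $\sqrt n$ is an integer, I place inside each $cell_p$ a grid $\mathcal N_p$ of $n^{d/2}$ points with coordinate spacing $2\delta/\sqrt n$, so that every $\theta\in cell_p$ lies within $\ell_\infty$-distance $\delta/\sqrt n$ --- hence $\ell_2$-distance at most $\sqrt d\,\delta/\sqrt n$ --- of some $\theta'\in\mathcal N_p$. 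Applying the Hoeffding bound to each net point and taking a union bound over the $m$ machines, the $|G^t|=2^{td}=\delta^{-d}\le m^{d/2}$ points $p\in G^t$ (using $\delta\ge \ln(mn)/\sqrt m\ge 1/\sqrt m$), and the $n^{d/2}$ points of each $\mathcal N_p$, the probability that $|g_i(\theta')|\ge\epsilon/16$ for some $i$, $p$, and $\theta'\in\mathcal N_p$ is at most $2\,n^{d/2}m^{1+d/2}\exp\big(-\ln^2(mn)/64\big)$. On the complement of that event, $2$-Lipschitzness of $g_i$ yields, for every $\theta\in cell_p$ with nearest net point $\theta'$,
\[
|g_i(\theta)|\,\le\,|g_i(\theta')|+2\|\theta-\theta'\|\,<\,\frac{\epsilon}{16}+\frac{2\sqrt d\,\delta}{\sqrt n}\,<\,\frac{\epsilon}{16}+\frac{\epsilon}{16}\,=\,\frac{\epsilon}{8},
\]
so the event $\mathcal E''$ holds, proving the lemma.

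The only delicate point is calibrating the net resolution: the spacing $2\delta/\sqrt n$ must be fine enough that the $2$-Lipschitz slack $2\sqrt d\,\delta/\sqrt n$ stays below $\epsilon/16$ --- equivalently $\ln(mn)>8$ --- yet coarse enough that $|\mathcal N_p|\le n^{d/2}$; both reduce to the hypothesis $\ln(mn)\ge 8\sqrt d$ of \eqref{eq:assum1}, which is exactly where that assumption is consumed here. Beyond the bookkeeping of these constants, no genuine obstacle arises.
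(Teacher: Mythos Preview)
Your proof is correct and follows essentially the same approach as the paper: a pointwise Hoeffding bound at level $\epsilon/16$, a discretisation of each $cell_p$ by a net of at most $n^{d/2}$ points, a $2$-Lipschitz extension to the full cell, and a union bound over machines, cells, and net points. The only cosmetic difference is the net spacing---the paper uses edge size $\epsilon/(16\sqrt d)$ so that the Lipschitz slack is exactly $\epsilon/16$, whereas you use $2\delta/\sqrt n$ and then invoke $\ln(mn)>8$ to bound the slack---but both choices yield the same cardinality bound and the same final probability.
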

	
	The proof is given in Appendix \ref{app:proof E4 upper}.
	Assuming $\mathcal{E''}$, it follows from \eqref{eq:Fhat} and \eqref{eq:prob E4 upper} that for any $p\in G^t$, and for the subsignal $(p,\Delta,\theta^p,\eta)$ that is used in the computation of $\hat{F}(\theta^p)$ in \eqref{eq:Fhat}, we have
	\begin{equation}\label{eq:Fhat e''}
		\big|\big(\hat{F}(\theta^p) - \hat{F}(p)\big) - \big( F(\theta^p)-F(p)  \big)\big| \le \frac\epsilon8.
	\end{equation}
	
	The following auxiliary lemma has a straightforward proof.
	\begin{lemma}
		Consider a $\gamma>0$ and a function $g$ over a domain $\mathcal{W}$. 
		Let  $\hat{g}$ be a uniform $\gamma$-approximation of $g$, that is $|\hat{g}(w)-g(w)|\leq\gamma$, for all $w\in \mathcal{W}$.
		Let $w^*$ be the minimizer of $\hat{g}$ over $\mathcal{W}$. Then,  $g(w^*)\leq \inf_{w\in \mathcal{W}} g(w) +2\gamma$.
		\label{lemma:aux}
	\end{lemma}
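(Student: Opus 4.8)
The plan is to prove the claim by a short chain of inequalities that plays the uniform approximation bound against the optimality of $w^*$ for $\hat{g}$. First I would fix an arbitrary competitor $w\in\mathcal{W}$ and record three facts: (i) since $\hat{g}$ is a $\gamma$-approximation of $g$, we have $g(w^*)\le \hat{g}(w^*)+\gamma$; (ii) since $w^*$ minimizes $\hat{g}$ over $\mathcal{W}$, we have $\hat{g}(w^*)\le \hat{g}(w)$; and (iii) again by the approximation property, $\hat{g}(w)\le g(w)+\gamma$. Chaining (i)--(iii) yields $g(w^*)\le g(w)+2\gamma$.

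Since $w\in\mathcal{W}$ was arbitrary, I would then take the infimum of the right-hand side over all $w\in\mathcal{W}$, obtaining $g(w^*)\le \inf_{w\in\mathcal{W}} g(w)+2\gamma$, which is exactly the assertion of the lemma.

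The only point requiring a word of care is the existence of the minimizer $w^*$ of $\hat{g}$, but this is granted outright by the hypothesis of the lemma, so no compactness or lower-semicontinuity argument is needed. (If one instead had merely an approximate minimizer satisfying $\hat{g}(w^*)\le \inf_{w}\hat{g}(w)+\gamma'$, the identical argument would give $g(w^*)\le \inf_{w} g(w)+2\gamma+\gamma'$, but that refinement is not needed here.) There is no genuine obstacle: the lemma is elementary, and the two invocations of the uniform approximation bound, sandwiched around the optimality of $w^*$, constitute the entire proof.
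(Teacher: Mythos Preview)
Your proposal is correct and is exactly the standard two-line argument the paper has in mind; indeed, the paper does not even write out a proof, merely noting that the lemma ``has a straightforward proof.'' Nothing more is needed.
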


	Consider a point $p\in G^t$ and the subsignal $(p,\Delta,\theta^p,\eta)$ that is used in the computation of $\hat{F}(\theta^p)$ in \eqref{eq:Fhat}. Suppose that this subsignal has been generated in the $i$th machine. 
	Let $\hat{g}(\theta)=F^i(\theta)-F^i(p)$, $g(\theta)=F(\theta)-F(p)$, and $\mathcal{W}=cell_p$. 
	Assuming $\mathcal{E''}$, $\hat{g}$ is an $\epsilon/8$-approximation of $g$, and Lemma \ref{lemma:aux} implies that $
	g(\theta^p)\leq g(\theta_{cell_p}^*)+{\epsilon}/{4}$, where $\theta_{cell_p}^*$ is the minimizer of $F$ in $cell_p$. Therefore,
	\begin{equation} \label{eq:lemma G}
		\begin{split}
			F(\theta^p)\leq F(\theta_{cell_p}^*)+\frac{\epsilon}{4}.
		\end{split}
	\end{equation}
	Moreover, assuming $\mathcal{E'}$ and $\mathcal{E''}$, we obtain
	\begin{equation}
		\begin{split} \label{eq: Fi upper}
			|\hat{F}(\theta^p)&-F(\theta_{cell_p}^*)|=\big|\big(\hat{F}(\theta^p) - \hat{F}(p)\big) - \big( F(\theta^p)-F(p)  \big)\\
			&\qquad +\big(\hat{F}(p)-F(p)\big) + \big(F(\theta^p)-F(\theta_{cell_p}^*)\big)\big|\\
			&\le \big|\big(\hat{F}(\theta^p) - \hat{F}(p)\big) - \big( F(\theta^p)-F(p)  \big)\big|\\
			&\qquad +\big|\hat{F}(p)-F(p)\big| + \big|F(\theta^p)-F(\theta_{cell_p}^*)\big|\\
			&\le \frac\epsilon8 + \frac\epsilon8 + \frac\epsilon4\\
			&= \frac\epsilon2
		\end{split}
	\end{equation}
	where the last inequality follows from \eqref{eq:Fhat e''}, \eqref{eq:prob E3 upper}, and \eqref{eq:lemma G}
	By further assuming $\mathcal{E}$, we know that each cell is selected by at least one machine. Then, applying Lemma~\ref{lemma:aux} on \eqref{eq: Fi upper} with identifications  $\mathcal{W}=\{\theta^P: p\in G^t\}$, $\hat{g}(\theta^p)=\hat{F}(\theta^p)$ and ${g}(\theta^p)=F(\theta_{cell_p}^*)$, we obtain
	\begin{equation}
		F(\hat\theta) \,\le\, \min_{p\in G^t} F(\theta_{cell_p}^*) + \epsilon \,=\,  F(\theta^*) + \epsilon.
	\end{equation}
	Substituting the probabilities of events  $\mathcal{E}$, $\mathcal{E'}$, and $\mathcal{E''}$ from
	lemmas \ref{lem:prob of E2 upper}, \ref{lem:prob E3 upper}, and \ref{lem:prob E4 upper}, respectively,  it follows that $F(\hat{\theta})\leq F(\theta^*)+\epsilon$ with probability at least 
	\begin{equation*}
		\begin{split}
			&1- \big(1-\Pr(\mathcal{E})\big)- \big(1-\Pr(\mathcal{E}')\big)- \big(1-\Pr(\mathcal{E}'')\big) \\
			&\ge 1- 2m^{d/2} \Bigg[\exp\left(\frac{-\ln^2 mn}{32d}\right)+ \exp\left(\frac{-\ln^2 mn}{128d}\right)  \\
			&\qquad\qquad\qquad\,\,+ n^{d/2} m\exp\left(\frac{-\ln^2 mn}{64}\right)\Bigg]\\
			&\ge  1-m(mn)^{d/2} \exp\left(\frac{-\ln^2 mn}{128d}\right).
		\end{split}
	\end{equation*}
	This completes the proof of Theorem~\ref{th:main upper c}.

	
	\medskip
	
	\section{Experiments} \label{sec:numerical}
	Here we study performance of the MRE-NC algorithm on problems of small sizes. 
	Note that when $d$ is large, the lower bound $1/\sqrt{n}(mB)^{1/d}$ in Theorem~\ref{th:lower bound}, scales poorly with respect to $mB$. 
	This eliminates the hope for efficient and  guaranteed loss minimization in large problems, and  limits the applicability of the MRE-NC algorithm to  problems with large dimensions. In this view, in this section we focus on small size problems and demonstrate  performance of MRE-NC on small toy examples.
	\vspace{-.1cm}
	\subsection{Synthetic Data}
	We evaluated the performance of MRE-NC and compared it with two naive approaches: 1- the averaging method from \cite{zhang2012communication}: each machine obtains empirical loss minimizer on its own data and sends to the server. The output would be the average of received signals at the server side. 2- Single machine method: similar to the previous method, each machine sends the empirical loss minimizer to the server. At the server, one of the received signals is picked randomly and returned as the output.
	\begin{figure}[t]
		\centering
		\includegraphics[width=8.5cm]{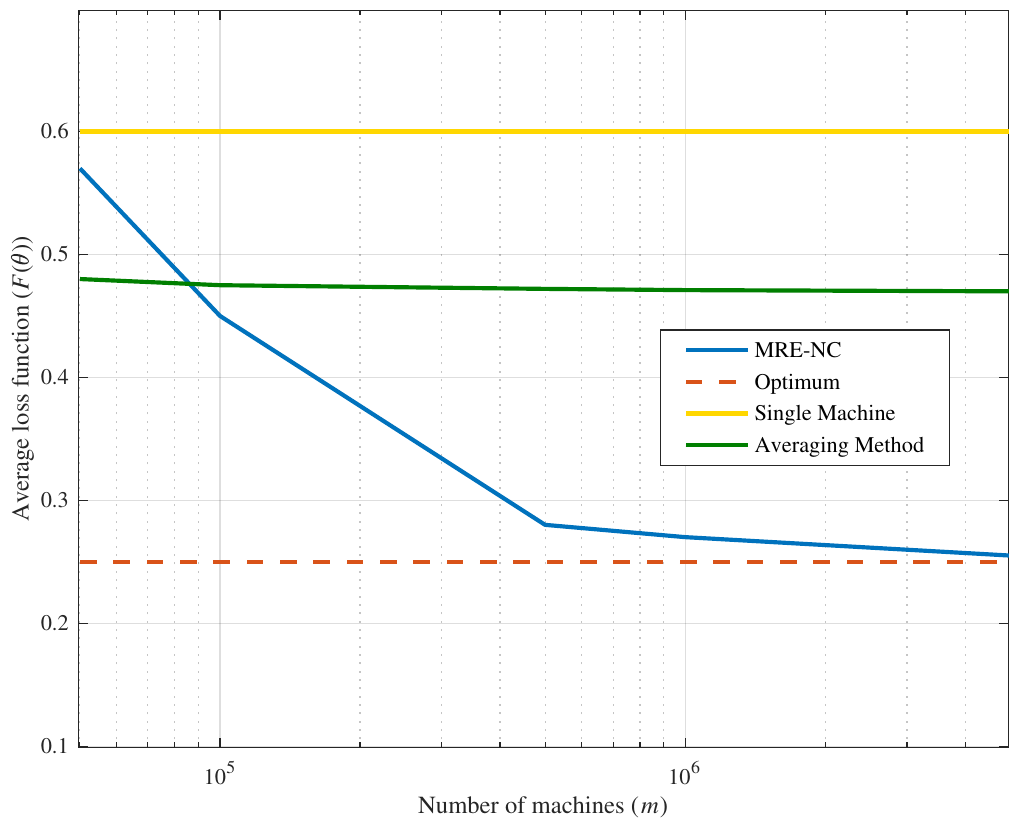}
		\caption{Comparison of the performance of MRE-NC with two naive approaches. The number of parameters ($d$) and the number of samples per machine ($n$) are $6$ and  $10$, respectively.}
		\label{Fig:MRE:Sim}
	\end{figure}
	
	In our experiment, each sample $(x,y)$, $x\in \mathbb{R}^2$, and $y\in \R$ is generated according to $y=\theta_2^T ReLU(\theta_1 x)+N$ where $ReLU(x)=\max(0,x)$ is the rectified linear unit, and the entries $[\theta_1]_{2\times 2}$ are drawn from a uniform distribution in the range $[-2,2]$ and $[\theta_2]_{2\times 1}=[1,-1]$. Moreover, $N$ is sampled from Gaussian distribution $\mathcal{N}(0,0.5)$. We considered the mean square error as the loss function.
	
	In Fig. \ref{Fig:MRE:Sim}, the value of $F(\theta)$ is depicted versus number of machines for MRE-NC and two naive approaches. In this experiment, we assumed that each machine has access to $n=10$ samples. As can be seen, the MRE-NC algorithm outperforms the two naive methods, its performance improves as the number of machines increases, and approaches to the optimal value. 
	
	\subsection{Real Data}
	
	In this part, we apply the MRE-NC algorithm to the task of classifying images of digits in the MNIST dataset \cite{lecun1998mnist}. We employed an ensemble learning technique \cite{shalev2014understanding} to build a model at the server side.
	In ensemble learning, we train a set of models, commonly called weak learners, that perform slightly better than random guess. Afterwards, a strong model can be built based on the models through different techniques such as boosting, stacking, or even picking the weak learner with the best performance.	
	In this experiment, we obtained a collection of weak learners  by running multiple instances of MRE-NC algorithm in parallel and then selected the one which has the lowest estimated empirical loss. More specifically, we assumed that each machine has access to $n=10$ random samples from MNIST dataset. Furthermore, for each image $X\in \mathbb{R}^{28\times 28}$, residing in each machine, that machine splits $X$ horizontally or vertically at pixel $p\in \{7, 14, 21\}$ into two parts, computes the average values of pixels in each part, and finally scales these average values into the range $[0,100]$. Let $(Z_1^{h,p},Z_2^{h,p})$ and $(Z_1^{v,p},Z_2^{v,p})$ be the resulted values for the horizontal or vertical split at pixel $p$, respectively. We considered the model $h(Z)=sigmoid(\theta_1^TZ+\theta_2)$, where $sigmoid(x)=1/(1+\exp(-x))$,  $\theta_1\in\mathbb{R}^2$, $\theta_2\in \mathbb{R}$, and $Z$ is the sample obtained after pre-processing at the machine as described above for any horizontal or vertical split. We considered the cross-entropy loss function (see page 72 in \cite{goodfellow2016deep}) and trained six models by executing MRE-NC algorithm on the data obtained from each horizontal/vertical split at pixel $p\in \{7, 14, 21\}$. At the server side, the model with the minimum $\hat{F}$ was selected. In our experiments, we considered images of only two digits $3$ and $4$ and tried to classify them\footnote{We considered a binary classification problem in our experiment, and any pair of digits with different shapes can be chosen for the considered task. Herein, we picked the two digits 3 and 4 that are different in shape, and the six weak learners have a wide range of performance in terms of accuracy on these two digits.}.  Fig. \ref{Fig:MRE:Real}, depicts the true $F(\hat{\theta})$ and the error in classification averaged over $10$ instances of the problem. As can be seen, both metrics decrease as the number of machines increases. Moreover, these metrics approach the optimal values corresponding to the centralized solution in which the server has access to the entire data. 
	
	\begin{figure}[t]
		\centering
		\includegraphics[width=8.8cm]{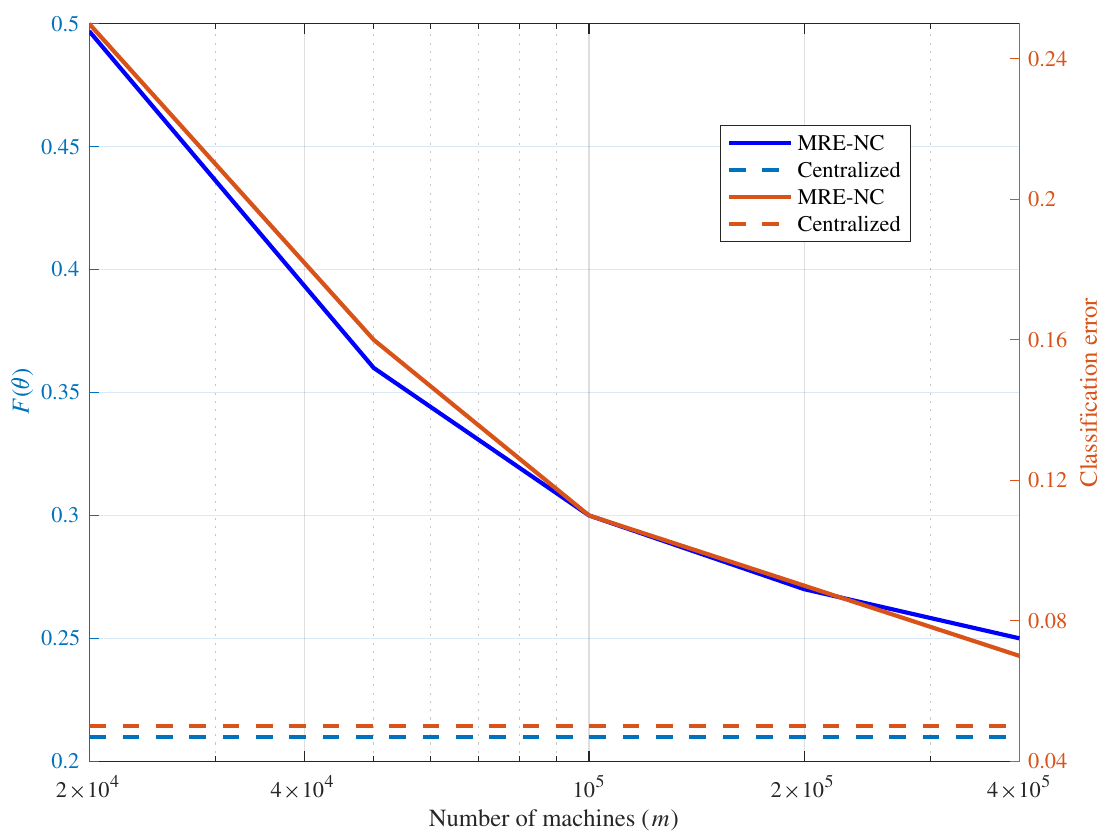}
		\caption{The performance of MRE-NC (loss and classification error) on classifying digits in MNIST dataset against the number of machines. The left and right y-axes correspond to the true loss function and the classification error of the obtained model, respectively. The number of parameters per weak learner and the number of samples per machine ($n$) are $3$ and  $10$, respectively.}
		\label{Fig:MRE:Real}
	\end{figure}

	
	\medskip
	\section{Discussions} \label{sec:discussion}
	In this paper, we studied the problem of federated learning in a one-shot setting and under limited communication budget $B$.
	We presented a general lower bound and showed that, ignoring logarithmic factors, the expected loss $\E\big[F(\hat\theta)-F(\theta^*)  \big]$ of any estimator is lower bounded by $\max\left(n^{-1/2}\,(mB)^{-1/d}, \,(mn)^{-1/2}\right)$.
	We then proposed an estimator called  MRE-NC, whose expected loss matches the above lower bound, and is therefore optimal.  
	We also established a constant lower bound on minimax error when the communication budget is constrained by a constant.
	The class of functions we considered in this paper is pretty general. We do not assume differentiability and our class includes all Lipschitz continuous functions over $[-1,1]^d$. This makes the model suitable for use in modern machine learning settings such as neural networks.
	
	The MRE-NC algorithm works by finding an $O(1/\sqrt{n}(mB)^{1/d})$-approximation of the value of the expected loss function $F$ over a fine grid of size $mB$.
	To do this, the algorithm adopts a multi-resolution idea from the MRE-C algorithm \cite{salehkaleybar2019one} which was previously proposed for the case of convex loss functions. The overall structure and the details of the MRE-NC algorithm are however different from those in \cite{salehkaleybar2019one}. While our upper bound proof incorporates several ideas from the upper bound proof in \cite{salehkaleybar2019one}, the proof of our lower bound is novel and relies on reductions from the problem of identifying an unbiased coin in a certain coin flipping system. 
	The proof involves  information theory, and despite the simple appearance of the coin flipping problem, it has not been studied previously, to the best of our knowledge.

	Our lower bound implies that the worst case expected error of no estimator can decrease faster than roughly $1/\sqrt{n}(mB)^{1/d}$. When $d$ is large, the error bound scales poorly with respect to $mB$. 
	This eliminates the hope for efficient and  guaranteed loss minimization in large problems, and  limits the applicability of the MRE-NC algorithm to the problems with large dimensions. 
	On the positive side, as we demonstrated in the numerical experiments, the MRE-NC algorithm can be effectively employed to solve small size problems. Moreover, for large dimensional problems, when incorporated into an ensemble learning system, it proves effective for training weak learners (refer to  Section~\ref{sec:numerical} for further discussions).

	A drawback of the MRE-NC algorithms is that each machine requires to know $m$ in order to set the number of levels for the grids. This can be circumvented by considering infinite number of levels, and letting the probability that $p$ is chosen from level $l$ decrease exponentially with $l$. 
	As another drawback of the MRE-NC algorithms, note that each machine $i$ needs to compute the minimizer $\theta^p$ of its local function $F^i$ in a small cube around the corresponding point $p$. Since $F^i$ is a non-convex function, finding $\theta^p$ is in general computationally exhaustive. Although this will not affect our theoretical bounds, it would further limit the applicability of MRE-NC algorithm in practice.
	Moreover, it is good to point a possible trade off between the coefficients in the precision and  probability exponent of our bounds. More specifically, if we multiply the upper bound in Theorem~\ref{th:main upper c} by a constant, then the corresponding probability exponent will be multiplied by the square of the same constant. In this way, one can obtain smaller upper bounds for larger values of $mn$.

	For future works,
	given the poor scaling of the lower bound in terms of $m$ and $d$, it would be important to devise 
	scalable heuristics that are practically efficient in one shot learning system classes of interest, like neural networks.
	Moreover, efficient accurate solutions might be possible under  further assumptions on the class of functions and distributions.
	On the theory side, the bounds in this paper are minimax bounds. From a practical perspective, it is important to develop average case bounds under reasonable assumptions. 
	Another interesting direction is to relax the assumption of fixed $n$ number of  samples per machine, and to prove lower and upper bounds if the $i$th machine receives $n_i$ samples.

	
	\bibliographystyle{IEEEtran}
	\bibliography{ref}

	
	%

	\newpage
	\onecolumn
	\appendices
	\section{Concentration inequalities} \label{app:prelim}
	Here,
	we collect two well-known concentration inequalities that will be used in the proofs of our main results.
	\begin{lemma} (Concentration inequalities)
		\begin{enumerate}
			\item[(a)] (Hoeffding's inequality)  
			Let $X_1,\cdots,X_n$ be independent random variables ranging over the interval $[a,a+\gamma]$. Let $\bar{X}=\sum_{i=1}^n X_i/n$ and $\mu =\mathbb{E}[\bar{X}]$.
			Then, for any $\alpha>0$,
			\begin{equation*}
				\Pr\big(|\bar{X}-\mu|>\alpha\big)\leq 2\exp\left(\frac{-2n\alpha^2}{\gamma^2}\right).
			\end{equation*}
			
			\item[(b)] (Theorem 4.2 in \cite{motwani1995randomized})  
			Let $X_1,\cdots,X_n$ be independent Bernoulli  random variables, ${X}=\sum_{i=1}^n X_i$, and $\mu =\mathbb{E}[{X}]$.
			Then, for any $\alpha\in(0,1]$,
			\begin{equation*}
				\Pr\big(X<(1-\alpha)\mu\big)\leq \exp\left(-\frac{\mu\alpha^2}{2}\right).
			\end{equation*}
		\end{enumerate}
		\label{lemma:CI}
	\end{lemma}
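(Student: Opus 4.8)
The plan is to prove both inequalities by the classical exponential-moment (Chernoff) method; both are textbook facts, so one could equally well just cite \cite{motwani1995randomized} (together with a standard reference for Hoeffding's inequality) and omit the derivation.

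For part~(a), I would first reduce to a one-sided estimate: since $\Pr\big(|\bar X-\mu|>\alpha\big)\le \Pr\big(\bar X-\mu>\alpha\big)+\Pr\big(\mu-\bar X>\alpha\big)$, it suffices to bound each summand by $\exp(-2n\alpha^2/\gamma^2)$ and invoke the union bound for the factor $2$. For the upper tail, for any $s>0$ Markov's inequality gives $\Pr\big(\bar X-\mu>\alpha\big)\le e^{-s\alpha}\,\Exp\big[e^{s(\bar X-\mu)}\big]$, and by independence $\Exp\big[e^{s(\bar X-\mu)}\big]=\prod_{i=1}^n \Exp\big[e^{(s/n)(X_i-\Exp X_i)}\big]$. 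The one genuine ingredient is Hoeffding's lemma: if $Y$ is mean-zero and supported in an interval of length $\gamma$, then $\Exp[e^{tY}]\le \exp(t^2\gamma^2/8)$; I would prove it by using convexity of $x\mapsto e^{tx}$ to dominate $e^{tY}$ by the chord through the endpoints of the interval, which reduces the moment generating function to that of a two-point distribution, and then showing that the logarithm of the resulting expression has second derivative at most $\gamma^2/4$, so a second-order Taylor expansion around $t=0$ yields the bound. Applying this with $t=s/n$ gives $\Exp\big[e^{s(\bar X-\mu)}\big]\le \exp\big(s^2\gamma^2/(8n)\big)$, hence $\Pr\big(\bar X-\mu>\alpha\big)\le \exp\big(-s\alpha+s^2\gamma^2/(8n)\big)$; choosing $s=4n\alpha/\gamma^2$ to minimize the exponent produces $\exp(-2n\alpha^2/\gamma^2)$, and the lower tail is the same argument applied to $-X_1,\dots,-X_n$.

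For part~(b), I would apply Markov to an exponential with negative parameter: for $s>0$, $\Pr\big(X<(1-\alpha)\mu\big)=\Pr\big(e^{-sX}>e^{-s(1-\alpha)\mu}\big)\le e^{s(1-\alpha)\mu}\prod_{i=1}^n\Exp\big[e^{-sX_i}\big]$. Writing $p_i=\Exp[X_i]$, we have $\Exp[e^{-sX_i}]=1+p_i(e^{-s}-1)\le \exp\big(p_i(e^{-s}-1)\big)$ from $1+x\le e^x$, so the product is at most $\exp\big(\mu(e^{-s}-1)\big)$ since $\sum_i p_i=\mu$, giving $\Pr\big(X<(1-\alpha)\mu\big)\le \exp\big(\mu[e^{-s}-1+s(1-\alpha)]\big)$. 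Minimizing the bracket over $s>0$ yields $e^{-s}=1-\alpha$, i.e.\ $s=-\ln(1-\alpha)>0$, and substituting leaves $\big(e^{-\alpha}/(1-\alpha)^{1-\alpha}\big)^\mu$. The last step is the elementary inequality $-\alpha-(1-\alpha)\ln(1-\alpha)\le -\alpha^2/2$ on $(0,1]$, which I would verify by setting $g(\alpha)=\alpha^2/2-\alpha-(1-\alpha)\ln(1-\alpha)$ and checking $g(0)=g'(0)=0$ with $g''(\alpha)=-\alpha/(1-\alpha)\le 0$, so $g\le 0$; this converts the bound to $\exp(-\mu\alpha^2/2)$.

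I do not expect a real obstacle: both statements are routine Chernoff-bound computations, the optimizations over $s$ are one-line calculus, and the only substantive fact used is Hoeffding's lemma, itself entirely standard. In practice the cleanest course is to state the lemma with a citation and skip the derivation altogether, since it is used only as a black box in the arguments of Section~\ref{sec:proof main alg c}.
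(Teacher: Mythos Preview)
Your proposal is correct, and in fact you anticipated exactly what the paper does: the paper states Lemma~\ref{lemma:CI} in Appendix~\ref{app:prelim} as ``well-known concentration inequalities'' with no proof at all, simply citing \cite{motwani1995randomized} for part~(b) and treating Hoeffding's inequality as standard. Your Chernoff-method derivation is sound and more detailed than necessary; your closing remark that one should just cite and move on is precisely the route the paper takes.
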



	%

	
	\medskip
	\hide{
		\section{Coin Flipping} \label{app:coin fliping system}
		In this appendix, we consider an abstract system that aims to identify a biased coin among several  fair coins, via observing the outputs  of coin flips. We derive a bound on the error probability of any estimator. This bound will then be an important ingredient in the proof of Theorem~\ref{th:lower bound} in Appendix~\ref{app:proof lowerbound}.
		
		Consider integers $n,m,k,B\ge 1$.
		We have $k$ coins, one of which is biased and all others are fair. 
		The outcome of the biased coin has the following distribution:
		\begin{equation} \label{eqa:prob of biased coin}
			P(1)\,=\,\frac12 \,+\,\frac{1}{2C\sqrt{n}\ln k},\qquad P(0)\,=\,\frac12 \,-\,\frac{1}{2C\sqrt{n}\ln k}.
		\end{equation}
		We index the coins by $t=1,\ldots,k$.
		The index of the biased coin is unknown initially. 
		Let $T$ denote the index of the biased coin.
		We assume that $T$ is a random variable, uniformly distributed over $1,\ldots,k$. 
		We aim to estimate $T$ via a coin flipping system, which we describe next, resembling the distributed system considered in the paper.
		
		Our coin flipping system comprises $m$ machines, called the coin flippers, and a server.
		Each coin flipper flips each of every coin for $n$ times. Therefore, each coin flipper, $i$, makes a total number of $kn$ coin flips and collects the outcomes into  an $n\times k$ matrix $W^i$ with $0$ and $1$ entries, so that the $j$th column of $W^i$ corresponds to the outcomes of the $j$th coin, for $j=1,\ldots,k$. 
		The $i$th coin flipper, for $i=1,\ldots,m$, then devises a $B$-bit long signal $S^i$ based on $W^i$, and sends it to the server. 
		We refer to the (possibly randomized) mapping (or coding) from $W^i$ to $S^i$ by $F$. 
		More concretely, $F^i\big(S^i\mid W^i\big)$ denotes the probability that machine $i$ outputs signal $S^i$ given the coin flipping outcomes $W^i$, for all $W^i\in \W$ and all $S^i\in \S$, where $\W$ is the set of all $n\times k$ matrices with $0$ and $1$ entries, and $\S$ is the set of all $B$-bit signals.
		The server then collects the signals of all coin flippers and generates an estimate $\hat{T}$ of the true index of the biased coin $T$. 
		
		We fix arbitrary $\cn,B \ge1$, and let $K_\cn$ be the smallest constant such that  for any $k\ge K_\cn$, all of the following equations hold:
		\begin{equation}\label{eqa:B*} 
			\cn\sqrt{\ln k} \,\ge\, 15
		\end{equation}
		\begin{equation}\label{eqa:6k3le1/6}
			k\,\ge\, 1024
		\end{equation}
		\begin{equation}\label{eqa:delta<1/7}
			\frac{23}{\cn \sqrt{k}}\, +\, \frac1k \le \frac17
		\end{equation}
		\begin{equation}\label{eqa:B***}
			\frac1{B\log_2 k}\,\left[\left(\frac{261}{\cn}\right)^2\,+\, \frac{150}{\cn \sqrt{k}} \,+\, \frac{6}k \,+\frac{15}{k^{1.5}} \,+\, \frac{36+6B}{k^2}  \right] \,\le\, \frac1{10}.
		\end{equation}

		Let $P_e = \Pr\big(\hat{T} \ne T\big)$ be the probability  that the server fails to identify   the true biased coin index. 
		The main result of this appendix asserts that:
		\begin{proposition} \label{prop:coin flip main}
			If $k=mB$ and  $k\ge K_\cn$, then $P_e>0.5$.
		\end{proposition}
		The proof relies on the following proposition.
		
		\begin{proposition} \label{prop:I}
			Suppose that $k$ is large enough so that \eqref{eqa:B*}, \eqref{eqa:6k3le1/6}, and \eqref{eqa:delta<1/7} are satisfied.
			Then, for each coin flipper, $i$, and under any coding $F^i$, we have
			\begin{equation}\label{eqa:bound I}
				I\big(T;S^i\big) \,< \, \frac{3B}k\,+\,  \frac1k\,\left[\left(\frac{261}{\cn}\right)^2\,+\, \frac{78^2}{\cn \sqrt{k}} \,+\, \frac{129^2}k \,+\frac{15}{k^{1.5}} \,+\, \frac{36+B}{k^2}  \right],
			\end{equation}
			where $I(T;S^i)$ is the mutual information between $T$ and $s^i$ (see \cite{CoveT06} for a definition).
		\end{proposition}

		In the remaining of this appendix, we  present the proof of Proposition~\ref{prop:coin flip main}, followed by the proof of Proposition~\ref{prop:I}.

		\begin{proof}
			Given the index $T$ of the biased coin, the signals $S^1,\ldots,S^m$ will be independent. As a result, 
			\begin{equation}
				H\big(S^1,\ldots,S^m\mid T\big)\, =\, \sum_{i=1}^m H\big(S^i\mid T\big),
			\end{equation}	
			where $H(\cdot)$ is the entropy function (see \cite{CoveT06}). 
			Consequently, 
			\begin{equation}\label{eqa:subadd I}
				\begin{split}
					I\big(T;\, S^1,\ldots,S^m\big) \,&=\, H\big(S^1,\ldots,S^m\big) -H\big(S^1,\ldots,S^k\mid T\big)\\
					\,&=\, H\big(S^1,\ldots,S^m\big) -\sum_{i=1}^k H\big(S^i\mid T\big)\\
					\,&\le\, \sum_{i=1}^k H\big(S^i\big) -\sum_{i=1}^k H\big(S^i\mid T\big)\\
					\,&=\, \sum_{i=1}^k \Big(H\big(S^i\big) - H\big(S^i\mid T\big)\Big)\\
					\,&=\,\sum_{i=1}^m I\big(T; S^i\big).
				\end{split}
			\end{equation}
			Let 
			\begin{equation*}
				\epsilon\,\triangleq\, \left(\frac{261}{\cn}\right)^2\,+\, \frac{150}{\cn \sqrt{k}} \,+\, \frac{6}k \,+\frac{15}{k^{1.5}} \,+\, \frac{36+6B}{k^2}
			\end{equation*}
			be the expression which is a part of the right hand side of \eqref{eqa:bound I}. 
			Then, it follows from \eqref{eqa:B***} that
			\begin{equation} \label{eqa:eps ineq}
				\frac{\epsilon}{B\log_2 k}\, \le\,\frac1{10}.
			\end{equation}
			We employ Fano's inequality\cite{CoveT06}, and write
			\begin{equation} \label{eqa:Pe 1/2 proof}
				\begin{split}
					P_e\,&\ge\, \frac{H\big(T\mid S^1,\ldots,S^m\big)-1}{\log_2 |T|}\\ 
					&= \, \frac{H(T)\,-\, I\big(T;\, S^1,\ldots,S^m\big) -1}{\log_2k}\\
					&= \, \frac{\log_2 k\,-\, I\big(T;\, S^1,\ldots,S^m\big) -1}{\log_2k}\\
					&= \, 1\,-\,\frac{ I\big(T;\, S^1,\ldots,S^m\big)}{\log_2k}\, -\, \frac1{\log_2 k}\\
					&\ge \, 1\,-\,\frac{\sum_{i=1}^m I\big(T; S^i\big)}{\log_2k}\, -\, \frac1{\log_2 k}\\
					&> \, 1\,-\,\frac{\sum_{i=1}^m \big(3B/k\,+\,\epsilon/k\big)}{\log_2k}\, -\, \frac1{\log_2 k}\\
					&=\, 1\,-\, \frac{3mB}{k\log_2 k} \,-\, \frac{m\epsilon}{k\log_2 k} \, -\, \frac1{\log_2 k}\\
					&=\, 1\,-\, \frac{3}{\log_2 k} \,-\, \frac{\epsilon}{B\log_2 k} \, -\, \frac1{\log_2 k}\\
					&\ge\, 1\,-\, \frac{4}{10} \,-\, \frac{\epsilon}{B\log_2 k}\\
					&\ge\, 1\,-\, \frac{4}{10} \,-\, \frac1{10}\\
					&=\, \frac12,
				\end{split}
			\end{equation}
			where the first inequality is by the Fano's inequality, the first equality follows from the definition of mutual information,
			the second equality is because the biased coin index $T$ has uniform distribution over $1,\ldots,k$,
			the second inequality is due to \eqref{eqa:subadd I}, the third inequality follows from Proposition~\ref{prop:I}, 
			the last equality is because of the assumption $k=mB$ in the Proposition,
			the fourth inequality is due to the assumption $k\ge 1024$ in \eqref{eqa:6k3le1/6},
			and the last inequality is due to \eqref{eqa:eps ineq}.
			Proposition~\ref{prop:coin flip main} then follows from \eqref{eqa:Pe 1/2 proof}.
		\end{proof}

		\medskip
		\begin{proof}[Proof of Proposition~\ref{prop:I}]
		}
		
		\section{Proof of Proposition~\ref{prop:I}} \label{app:coin fliping system}
		The proof comprises a series of lemmas whose proofs are given in the form of separate subsections at the end of this appendix, for improved readability.
		For simplicity of the notation, we drop the coin-flipper's index from all equations, and will write $S$, $W$, and $Q$ in places of $S^i$, $W^i$, and $Q^i$, respectively.
		Recall that for a coin flipper, $W$ is the $n\times k$ binary matrix of its coin flip outcomes, so that the $j$th column of $W$ corresponds to the outcomes of the $j$th coin, for $j=1,\ldots,k$. 
		We refer to the (possibly randomized) mapping (or coding) from $W$ to the $B$-bit signal $S$ by $Q$. 
		More concretely, $Q\big(S\mid W\big)$ denotes the probability that a machine outputs signal $S$ given the coin flipping outcomes $W$, for all $W\in \W$ and all $S\in \S$, where $\W$ is the set of all $n\times k$ matrices with $0$ and $1$ entries, and $\S$ is the set of all $B$-bit signals.
		We begin by showing that the mutual information $I(T;S)$ is maximized via a coding $Q$ that is  deterministic.
		We call a coding $Q$ deterministic if $Q(S\mid W)$ is either $0$ or $1$, for all $W\in\W$ and $S\in\S$.
		
		\begin{lemma} \label{lem:deterministic coding}
			Among all randomized codings $Q$, there exists a deterministic coding that maximizes $I(T;S)$.
		\end{lemma}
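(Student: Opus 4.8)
The plan is to exploit the fact that $I(T;S)$, for a fixed distribution of $T$ and a fixed channel from $T$ to $W$, is a \emph{convex} function of the conditional distribution $Q(\cdot\mid W)$ for each individual $W$, when the codings at the other inputs are held fixed. Concretely, write $P(S\mid T) = \sum_{W} P(W\mid T)\, Q(S\mid W)$; for a fixed $W_0$, the map $Q(\cdot\mid W_0) \mapsto I(T;S)$ is the composition of an affine map (into the space of joint distributions of $(T,S)$) with the mutual-information functional, and mutual information is convex in the channel $P(S\mid T)$ for fixed input distribution $P(T)$. Hence $I(T;S)$ is convex in $Q(\cdot\mid W_0)$ over the probability simplex on $\S$.

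First I would make this convexity statement precise, citing the standard convexity of $I(T;S)$ in the channel (e.g. from \cite{CoveT06}). Then I would argue coordinate-wise: enumerate the finitely many matrices $W\in\W$ as $W_1,\dots,W_{|\W|}$. Starting from an optimal randomized coding $Q^*$, fix all coordinates except $Q(\cdot\mid W_1)$; since $I(T;S)$ is convex on the simplex $\Delta(\S)$ in this coordinate, its maximum over that simplex is attained at a vertex, i.e. at a degenerate distribution that deterministically maps $W_1$ to a single signal. Replace $Q^*(\cdot\mid W_1)$ by such a vertex without decreasing $I(T;S)$. Repeat for $W_2$, then $W_3$, and so on; each step keeps $I(T;S)$ at least as large as before and makes one more input deterministic, and earlier-fixed coordinates stay deterministic. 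After $|\W|$ steps we obtain a fully deterministic coding $Q$ with $I(T;S)$ at least that of $Q^*$, hence optimal. (Because $\W$ and $\S$ are finite, a maximizer $Q^*$ exists by compactness, so there is no issue starting the argument.)

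The main obstacle — really the only subtle point — is getting the direction of convexity right: mutual information is \emph{concave} in the input distribution $P(T)$ but \emph{convex} in the channel $P(S\mid T)$ for fixed input, and here $T$ (and the channel $T\to W$) is fixed while we vary the channel $W\to S$, so the relevant composition is convex in $Q(\cdot\mid W_0)$ and the maximum is indeed at an extreme point. I would state this carefully and, if desired, include the one-line verification that $Q(\cdot\mid W_0)\mapsto P(\cdot\mid T)$ is affine so that convexity is preserved under composition. Everything else (finiteness of $\W,\S$, existence of a maximizer, the coordinate-ascent bookkeeping) is routine.
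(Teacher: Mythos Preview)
Your proposal is correct and rests on the same key fact the paper uses: convexity of $I(T;S)$ in the channel $P(S\mid T)$ for fixed $P(T)$ (Theorem~2.7.4 in \cite{CoveT06}). The route, however, is different. The paper proceeds globally: it writes any randomized coding $Q$ as a convex combination of \emph{all} deterministic codings, with weights $\alpha_g=\prod_{w\in\W}Q(g(w)\mid w)$ over $g\in\Q$, verifies $\sum_g\alpha_g=1$ and $P_Q(s\mid t)=\sum_g\alpha_g P_g(s\mid t)$, and then applies convexity once to conclude $I(T;S)\le\max_g I(T;S_g)$. You instead argue coordinate-wise, freezing all but one $Q(\cdot\mid W_i)$ and pushing it to a vertex of the simplex, then iterating. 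Your approach avoids the explicit mixture computation and is arguably more elementary; the paper's approach is a single clean application of convexity but pays for it with the combinatorial identity for the $\alpha_g$'s. Both are standard ways to show that a convex function on a product of simplices is maximized at a vertex, and either would serve here.
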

		The proof relies on a well-known result on the convexity of mutual information with respect to $\Pr(S\mid T)$, and is given in Appendix~\ref{app:proof lem deterministic coding}.
		
		In light of Lemma~\ref{lem:deterministic coding}, for the rest of the proof without loss of generality we assume that the coding $Q$ is deterministic. 
		Equivalently, corresponding to each $s\in\S$, we associate a subset of $\W$ whose elements are mapped to $s$. 
		With an abuse of notation, we denote this subset of $\W$ by $s$. 
		In other words, to any $s\in\S$, is associated a subset $s\subseteq \W$ containing all $w\in\W$ that are mapped to $s$ via the deterministic coding.
		For any $w\in\W$ and for $t=1,\ldots,k$, we denote the $t$th column of $w$ by $w_t$.
		Given $w\in\W$,  we let $P(w)$ be the probability that $w$ is the outcome matrix of coin-flips when $T$ is chosen uniformly at random from $1,\ldots,k$. Moreover, given $t\le k$, we let  $P(w\mid T=t)$ be the probability that $w$ is the outcome matrix of coin-flips when $T=t$.
		
		\begin{lemma} \label{lem:pw bound}
			There exists 
			a subset $\Wb\subseteq \W$ with $\Pr(\Wb)\ge 1-6k^{-3}$, such that for any $w\in\Wb$,
			\begin{equation}\label{eqa:lemI pwt}
				\Pr\big(W_t=w_t\mid T=t\big)\le \frac{5\times 2^{-n}}3 ,\qquad \textrm{ for } \,t=1,\ldots,k,
			\end{equation}
			and
			\begin{equation}\label{eqa:lemI pw}
				2^{-kn}\,(1-\delta)\,\le\, P(w)\,\le\, 2^{-kn}\,(1+\delta),
			\end{equation}
			where 
			\begin{equation} \label{eqa:def del}
				\delta\,\triangleq\, \frac{23}{\cn \sqrt{k}}\, +\, \frac1k.
			\end{equation}
		\end{lemma}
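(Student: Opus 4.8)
The plan is to build the set $\Wb$ as an intersection of two "good" events: one controlling the conditional probability $\Pr(W_t = w_t \mid T=t)$ for every column $t$, and one controlling the ratio $P(w)/2^{-kn}$. For the first event, fix $t$ and condition on $T=t$; then column $w_t$ is a string of $n$ i.i.d.\ flips of the biased coin with bias parameter $1/(2\cn\sqrt{n}\ln k)$. Writing $r$ for the number of $1$'s in $w_t$, the probability of that particular column is $(1/2+\beta)^r(1/2-\beta)^{n-r} = 2^{-n}(1+2\beta)^r(1-2\beta)^{n-r}$ with $\beta = 1/(2\cn\sqrt{n}\ln k)$. So I would first show that, with high probability over the flips, $|r - n/2|$ is at most something like $\sqrt{n\ln k}$ (a Hoeffding/Chernoff bound in Lemma~\ref{lemma:CI}), and then bound $(1+2\beta)^r(1-2\beta)^{n-r} \le \exp\!\big(2\beta(2r-n) + O(\beta^2 n)\big)$ using $\ln(1+x)\le x$ and $\ln(1-x)\le -x-x^2/2$; with $|2r-n|\lesssim\sqrt{n\ln k}$ and $\beta\sqrt{n}\asymp 1/(\cn\ln k)$, the exponent is $O(1/\cn)$, which under \eqref{eqa:lower condition 1} (i.e.\ $\cn\sqrt{\ln k}\ge 15$) is small enough to make $(1+2\beta)^r(1-2\beta)^{n-r}\le 5/3$. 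A union bound over the $k$ columns, with the per-column failure probability driven below $k^{-4}$ (again feasible since $\sqrt{n\ln k}$ deviations are exponentially rare and $k=mB\ge 10240$ by \eqref{eqa:lower condition 2}), gives \eqref{eqa:lemI pwt} on an event of probability at least $1 - k^{-3}$ (say).

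For the second event, I would compute $P(w) = \frac1k\sum_{t=1}^k P(w\mid T=t)$. Now $P(w\mid T=t) = 2^{-kn}\cdot(1+2\beta)^{r_t}(1-2\beta)^{n-r_t}$, where $r_t$ is the number of $1$'s in column $w_t$ — only the $t$th column is "biased," the other $k-1$ columns contribute exactly $2^{-n}$ each. Hence $P(w)/2^{-kn} = \frac1k\sum_{t=1}^k g(r_t)$ with $g(r) = (1+2\beta)^r(1-2\beta)^{n-r}$. On the high-probability set from the first step we already have $g(r_t)\le 5/3$ for all $t$; to get the two-sided bound $1-\delta\le \frac1k\sum g(r_t)\le 1+\delta$ I would observe that $g(r_t) = \exp(O(\beta(2r_t-n)) + O(\beta^2 n))$, Taylor-expand, and note that averaging over $t$ the linear term $\frac1k\sum_t \beta(2r_t - n)$ is itself small with high probability: each $2r_t-n$ is a centered sum bounded by $n$ in magnitude but typically $O(\sqrt{n})$, so its contribution, after multiplying by $\beta\asymp 1/(\cn\sqrt{n}\ln k)$ and averaging, is of order $1/(\cn\sqrt{k}\,)$ up to logs; combined with the quadratic term $O(\beta^2 n) = O(1/(\cn^2\ln^2 k))$ this yields a total deviation at most $\delta = 23/(\cn\sqrt{k}) + 1/k$ as claimed, on an event of probability at least $1 - O(k^{-3})$ by another concentration bound (now on the average of the $r_t$'s). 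Intersecting the two events gives $\Wb$ with $\Pr(\Wb)\ge 1 - 6k^{-3}$.

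The main obstacle is the careful bookkeeping in the second bound: getting the \emph{precise} constant $23$ and the exact form $\delta = 23/(\cn\sqrt k) + 1/k$ requires tracking the Taylor remainders in $g(r_t) = \exp(2\beta(2r_t-n) - 2\beta^2 n\cdot(1+o(1)) + \cdots)$ and combining a deviation bound for a single column ($|2r_t-n|$ large, costing the $5/3$ pointwise bound and the $1/k$ term) with a deviation bound for the average $\frac1k\sum_t(2r_t-n)$ (which is where the $1/\sqrt k$ scaling and the bulk of the constant come from). Condition \eqref{eqa:lower condition 3} is presumably exactly what is needed to keep $\delta\le 1/7$ so that later steps (e.g.\ the logarithms $\ln(1\pm\delta)$ in the mutual-information estimate) stay controlled. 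The first bound, by contrast, is a routine Hoeffding-plus-union-bound argument.
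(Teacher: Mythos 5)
Your plan is on the right track and your first bound (per-column Hoeffding on $|r_t - n/2|$, then bounding $(1+2\beta)^{r_t}(1-2\beta)^{n-r_t}$) is essentially the paper's argument. For the second bound, though, you take a genuinely different route: you propose Taylor-expanding $g(r_t)=(1+2\beta)^{r_t}(1-2\beta)^{n-r_t}$ and then concentrating the linear term $\frac1k\sum_t(2r_t-n)$. The paper instead sidesteps the Taylor bookkeeping entirely: it truncates the random variables $y_t := \min(\max(2^n\Pr(W_t=w_t\mid T=t), 1-\tfrac{8}{\cn\sqrt{\ln k}}), 1+\tfrac{10}{\cn\sqrt{\ln k}}))$, notes they are \emph{independent across $t$} (they depend on disjoint columns) and live in an interval of length $18/(\cn\sqrt{\ln k})$, applies Hoeffding directly to $\frac1k\sum_t y_t$, and separately bounds $|\E[y_t]-1|$ ($\le 2k^{-4}$ for $t\ne t_0$, $\le 2/3$ for the biased column — this is where the $1/k$ term comes from). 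That is where the $23/(\cn\sqrt{k})$ comes from: a $1/\sqrt{k}$ concentration width times the interval length $\asymp 1/(\cn\sqrt{\ln k})$, with the $\sqrt{\ln k}$ absorbed into demanding error probability $k^{-3}$.

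Two things your plan glosses over, which the truncation handles cleanly. First, Hoeffding needs a.s.-bounded summands, but $g(r_t)$ is unbounded on the tail where $|2r_t-n|$ is large; you cannot just apply a concentration bound to $\frac1k\sum g(r_t)$ without first capping or conditioning. The truncation to $y_t$ is exactly what makes the concentration rigorous at the stated precision, while the cost of truncation is pushed into the negligible $\E[y_t]$ error (Claim~\ref{claim:beta}). Second, if you Taylor-expand instead, you need the second-order term $2\beta^2(2r_t-n)^2$ to exactly cancel against the $-\tfrac{n}{2}\ln(1-4\beta^2)$ prefactor in expectation (this is the identity $\E_{W_t\sim\text{fair}}[2^n\Pr(W_t=u\mid T=t)]=1$), and you must also argue all higher-order terms average out — doable, but it is precisely the ``careful bookkeeping'' you flag as the obstacle, and it never needs to be done in the paper's version. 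Your route would work, but the paper's truncated-Hoeffding argument is shorter and avoids that entire hazard.
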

		The proof relies on concentration inequalities, and is presented in Appendix~\ref{app:proof lem pw bound}.
		For the rest of this appendix, we fix the constant $\delta$ and the set $\Wb$ as defined in Lemma~\ref{lem:pw bound}.
		
		Recall the convention that for any $s\in\S$, we denote the subset of $\W$ that is mapped to $s$, also by $s$. 
		For the simplicity of notation, for the rest of the proof, for any $s\in \S$, we let $\bs\triangleq s\bigcap \Wb$ and $P(\bs\mid T=t)\triangleq P(w\in \bs\mid T=t)$.
		We make the convention that $0/0=1$.
		
		\begin{lemma}\label{lem:HI}
			\begin{itemize}
				\item[a)] The entropy $H(S)$ of signal $S$ satisfies
				\begin{equation}
					H(S) \,\ge \, \left(\sum_{s\in\S} P(\bs)\log_2\frac{1}{P(\bs)}\right)\, -\,\frac{ 9}{k^{3}}.
				\end{equation}
				
				\item[b)] The mutual information $I(T;S)$ satisfies
				\begin{equation}
					I(T;S) \,\le\, \frac1{k \ln 2} \sum_{s\in\S} P(\bs) \sum_{t=1}^k \left( \frac{P(\bs\mid T=t)}{P(\bs)} \,-1 \right)^2 \quad+\, \frac{9+6B + 15\sqrt{k}}{k^3}.
				\end{equation}
			\end{itemize}  
		\end{lemma}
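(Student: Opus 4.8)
The plan is to bound $H(S)$ from below and $I(T;S) = H(S) - H(S\mid T)$ from above by restricting attention to the high-probability set $\Wb$ from Lemma~\ref{lem:pw bound}, where the per-coin and total outcome probabilities are controlled, and by a careful local (second-order) analysis of the dependence of $P(\bs\mid T=t)$ on $t$. Throughout, the key quantity is the ``defect'' $P(\bs\mid T=t)/P(\bs) - 1$, which measures how much conditioning on coin $t$ being biased tilts the probability that $W$ lands in the cell $\bs$; since the bias is tiny, of order $1/(\cn\sqrt n\ln k)$, these defects are small and amenable to Taylor expansion of $\log_2(1+x) = x/\ln 2 - x^2/(2\ln 2) + O(x^3)$ and $(1+x)\log_2(1+x) = x/\ln 2 + x^2/(2\ln 2) + O(x^3)$.

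For part (a), I would write $H(S) = \sum_{s\in\S} P(s)\log_2(1/P(s))$ and compare it term by term with $\sum_{s} P(\bs)\log_2(1/P(\bs))$. The discrepancy comes from the mass $P(s) - P(\bs) = P(s\cap \Wb^c)$ sitting outside $\Wb$; since $\Pr(\Wb^c)\le 6k^{-3}$ and $x\log_2(1/x)$ is concave and small for small $x$, this lost mass contributes at most something like $6k^{-3}\log_2(\cdot)$, which one bounds by the $9/k^3$ term after using $k$ large enough (conditions \eqref{eqa:lower condition 1}--\eqref{eqa:lower condition 3}). The monotonicity/concavity estimate $P(s)\log_2\frac1{P(s)} \ge P(\bs)\log_2\frac1{P(\bs)} - (\text{small})$ is the routine calculation here; a clean way is to split $\S$ into the $s$ with $P(\bs)$ above or below a threshold and bound each part, or to use $|a\log a - b\log b| \le |a-b|(1 + |\log \min(a,b)|)$-type inequalities together with $\sum_s P(s\cap\Wb^c) = \Pr(\Wb^c)$.

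For part (b), I would use $I(T;S) = H(S) - H(S\mid T) = \sum_{t}\frac1k\sum_s P(s\mid T=t)\log_2\frac{P(s\mid T=t)}{P(s)}$ (since $T$ is uniform), then restrict each inner sum to $\bs$, paying a bounded price for the mass outside $\Wb$ using \eqref{eqa:lemI pwt} and $\Pr(\Wb^c)\le 6k^{-3}$ — this is the origin of the $6B/k^3$ and $15\sqrt k/k^3$ corrections (the $B$ enters because $|\log_2 P(s)|\le B$ on the signal alphabet, and the $\sqrt k$ from the per-coin concentration slack). Writing $P(\bs\mid T=t) = P(\bs)(1 + x_{s,t})$ with $x_{s,t} := P(\bs\mid T=t)/P(\bs) - 1$, the restricted sum becomes $\frac1k\sum_t\sum_s P(\bs)(1+x_{s,t})\log_2(1+x_{s,t})$; using $\sum_t x_{s,t}$-type cancellations (the average of $P(\bs\mid T=t)$ over $t$ is close to $P(\bs)$, from \eqref{eqa:lemI pw}) to kill the first-order term, and the Taylor bound $(1+x)\log_2(1+x)\le x/\ln2 + x^2/(2\ln2) + Cx^2|x|$ with $|x_{s,t}|$ uniformly small (again using $\cn\sqrt{\ln k}\ge 15$ and $k$ large), the dominant surviving contribution is $\frac1{k\ln2}\sum_s P(\bs)\sum_t x_{s,t}^2$, with all linear and cubic remainders absorbed into the additive $O(1/k^3)$-scale error terms. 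Combining the lower bound on $H(S)$ from (a) with this upper bound on $\sum_s P(s\mid T=t)\log_2\frac{P(s\mid T=t)}{P(s)}$ gives the stated inequality for $I(T;S)$ after collecting constants.

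The main obstacle I anticipate is the bookkeeping of the error terms: one must show that every correction arising from (i) truncating to $\Wb$, (ii) the $\delta$-slack in \eqref{eqa:lemI pw}, and (iii) the cubic Taylor remainder in $(1+x)\log_2(1+x)$, is genuinely of order $1/k^{3}$ (or $B/k^3$, $\sqrt k/k^3$) rather than something larger like $1/(k\sqrt k)$ in a way that would spoil the final $1/k$ scaling of the leading term. This requires knowing that $|x_{s,t}|$ is small \emph{uniformly} over $s$ and $t$, which is exactly what the bound \eqref{eqa:lemI pwt} on $\Pr(W_t = w_t\mid T=t)$ is for: it prevents any single cell $\bs$ from being too concentrated, so that flipping one coin's bias cannot move $P(\bs\mid T=t)$ by more than a small multiplicative factor. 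The actual estimate $\sum_s P(\bs)\sum_t x_{s,t}^2 = O(\text{poly}(1/\cn) + \cdots)$ — i.e., turning the per-cell defect bound into the $(313/\cn)^2 + 94^2/(\cn\sqrt k) + \cdots$ expression — is where the remaining lemmas in the appendix will presumably do the heavy lifting, and I would defer that to a subsequent lemma rather than attempt it inline.
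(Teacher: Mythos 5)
Your high-level plan (restrict to $\Wb$, decompose $I(T;S)$ into per-$t$ KL terms, and extract a quadratic bound in the tilt $x_{s,t} = P(\bs\mid T=t)/P(\bs)-1$) matches the paper's, but the execution as you've sketched it has a step that would fail. You propose to control the KL terms via the Taylor-type bound $(1+x)\log_2(1+x)\le x/\ln 2 + x^2/(2\ln 2) + Cx^2|x|$ and then absorb the cubic into the error, on the grounds that $|x_{s,t}|$ is \emph{uniformly small} because of \eqref{eqa:lemI pwt}. That claim is not justified: \eqref{eqa:lemI pwt} only controls the per-coin column distribution on $\Wb$; what it ultimately yields (via the $\alpha_u$ construction further along in the paper) is merely $|x_{s,t}|\le 1+8\delta$, i.e.\ \emph{bounded by a constant near $2$}, not small. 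A cell $\bs$ that strongly identifies coin $t$ can have $P(\bs\mid T=t)$ of the same order as $P(\bs)$ itself, so the tilt is $O(1)$. The paper sidesteps this entirely by replacing the Taylor expansion with the global inequality $\log\alpha\le \alpha-1$ (equivalently $(1+x)\log(1+x)\le x+x^2$ for all $x>-1$), which is exactly the algebraic bound $y\log_2 y - x\log_2 x\le (y-x)\log_2(xe)+(x-y)^2/(x\ln 2)$ used in \eqref{eqa:lemHI***}; the first-order piece then vanishes \emph{exactly} (not approximately, and with no appeal to \eqref{eqa:lemI pw}) because $\tfrac1k\sum_t P(\bs\mid T=t)=P(\bs)$ is an identity from $T$ being uniform. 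So no control on $|x_{s,t}|$ is needed at this stage at all.

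Two smaller issues in your sketch are also worth flagging. In part (a), the Lipschitz-type bound $|a\log a - b\log b|\le |a-b|(1+|\log\min(a,b)|)$ is useless when $P(\bs)$ is tiny or zero (and $\bs$ can well be empty), since the factor $|\log\min(a,b)|$ blows up; the paper instead uses the one-sided derivative bound $\tfrac{d}{dx}\big(x\log_2\tfrac1x\big) = \log_2\tfrac1x - \log_2 e \ge -\log_2 e \ge -1.5$ valid for all $x\in(0,1]$, which directly gives $H(S)\ge \sum_s P(\bs)\log_2\tfrac1{P(\bs)} - 1.5\,P(\Wb^c)$. In part (b), the statement ``$|\log_2 P(s)|\le B$'' is false for small $P(s)$; what is actually used is $|\S|=2^B$ together with Jensen's inequality to bound $\sum_s P(\tilde s)\log_2\tfrac1{P(\tilde s)}\le P(\Wb^c)\log_2\tfrac{|\S|}{P(\Wb^c)}$, and this is where the $6B/k^3 + 15\sqrt k/k^3$ correction actually comes from. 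If you replace the Taylor bound by $(1+x)\log(1+x)\le x+x^2$, drop the appeal to ``uniform smallness,'' use the exact averaging identity for the linear term, and fix the two points above, your argument lines up with the paper's.
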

		The proof  is given in Appendix~\ref{app:proof lem HI}.
		Our next lemma provides a bound on the weighted sum of a probability mass function in terms of its entropy.
		
		\begin{lemma}\label{lem:alpha}
			Consider an integer $n\ge1$ and a set $\big\{\alpha_u\mid u\in\{0,1\}^n\big\}$ of real numbers such that $\alpha_u\in [-1,1]$, for all $u\in\{0,1\}^n$, and $\sum_{u\in\{0,1\}^n}\alpha_u=0$. 
			Let $U$ be a random variable on $\{0,1\}^n$ with probability distribution $P$, such that for any $u\in \{0,1\}^n$, we have $U=u$ with probability $P(u)$. Then,
			\begin{equation}\label{eqa:alpha 1.5}
				\left( \sum_{u\in\{0,1\}^n} \alpha_u P(u)\right)^2\, \le\, 1.5\big(n-H(U)\big),
			\end{equation}
			where $H(U)$ is the entropy of $U$. 
		\end{lemma}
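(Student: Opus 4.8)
The plan is to reduce the problem to a statement comparing the entropy deficit $n - H(U)$ to the squared $\ell_1$-type deviation of $P$ from the uniform distribution, and then bound the linear functional $\sum_u \alpha_u P(u)$ by that deviation.

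\textbf{Step 1: Reduce to a deviation bound.} Write $P(u) = 2^{-n}(1 + \epsilon_u)$, where $\sum_u 2^{-n}\epsilon_u = 0$, i.e. $\sum_u \epsilon_u = 0$. Since $|\alpha_u|\le 1$ and $\sum_u \alpha_u = 0$, we have
\begin{equation*}
	\sum_{u\in\{0,1\}^n}\alpha_u P(u) \,=\, 2^{-n}\sum_u \alpha_u(1+\epsilon_u) \,=\, 2^{-n}\sum_u \alpha_u \epsilon_u,
\end{equation*}
so by Cauchy--Schwarz,
\begin{equation*}
	\left(\sum_u \alpha_u P(u)\right)^2 \,\le\, \left(2^{-n}\sum_u |\epsilon_u|\right)^2 \,=\, \left(\sum_u \big|P(u) - 2^{-n}\big|\right)^2 \,=\, \big(2\|P - \U\|_{TV}\big)^2 \cdot \text{(const)},
\end{equation*}
where $\U$ is the uniform law; more precisely the right side is $\big(\sum_u|P(u)-2^{-n}|\big)^2 = \|P-\U\|_1^2$. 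So it suffices to show $\|P-\U\|_1^2 \le 1.5\,(n - H(U))$.

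\textbf{Step 2: Entropy deficit versus $\ell_1$ distance.} This is a reverse-Pinsker / uniform-reference bound: $n - H(U) = D(P\,\|\,\U)$, the KL divergence from $P$ to the uniform distribution on $\{0,1\}^n$, since $D(P\|\U) = \sum_u P(u)\log_2\frac{P(u)}{2^{-n}} = \sum_u P(u)\log_2 P(u) + n = n - H(U)$. By Pinsker's inequality (in bits), $D(P\|\U) \ge \frac{1}{2\ln 2}\|P-\U\|_1^2 \ge \frac{2}{3}\|P-\U\|_1^2$, since $\frac{1}{2\ln 2} = 0.7213\ldots \ge 2/3$. Rearranging gives $\|P-\U\|_1^2 \le \frac{3}{2}(n - H(U)) = 1.5\,(n-H(U))$, and combining with Step~1 yields \eqref{eqa:alpha 1.5}. (The convention $0/0 = 1$ and the degenerate case where some $P(u)=0$ are handled by the standard conventions in the definition of KL divergence and entropy; the bound $|\epsilon_u|$ can be $1$ there, which is fine.)

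\textbf{Main obstacle.} There is no serious obstacle here; the only thing to be careful about is getting the constant right — one must use Pinsker's inequality in the natural-log-to-bits form and check that $\frac{1}{2\ln 2} \ge \frac23$, rather than using a cruder constant. If the authors instead want a self-contained argument avoiding the name ``Pinsker,'' one can prove the needed inequality $D(P\|\U)\ge \frac{2}{3}\|P-\U\|_1^2$ directly by the standard reduction to the binary case (grouping $u$ into $\{P(u)\ge 2^{-n}\}$ and its complement) followed by the scalar inequality $p\log_2\frac{p}{q} + (1-p)\log_2\frac{1-p}{1-q} \ge \frac{2}{3}(2|p-q|)^2$ for $p,q\in[0,1]$, which is a one-variable calculus check. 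Either route is routine.
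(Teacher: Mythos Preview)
Your proof is correct and actually cleaner than the paper's, but the route is genuinely different. You bound $\big|\sum_u \alpha_u P(u)\big|$ by $\|P-\U\|_1$ (using $|\alpha_u|\le 1$ and $\sum_u\alpha_u=0$; this is just the triangle inequality, not Cauchy--Schwarz, but that is only a naming slip), then invoke Pinsker's inequality in bits to get $\|P-\U\|_1^2 \le 2\ln 2\,(n-H(U)) < 1.5\,(n-H(U))$.

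The paper instead argues combinatorially: it observes that under the constraints on the $\alpha_u$, the linear form $\sum_u\alpha_u P(u)$ is maximised by setting $\alpha_u=+1$ on the $2^{n-1}$ heaviest atoms and $-1$ on the rest, giving the bound $(2\theta-1)^2$ with $\theta$ the mass of the heavy half. It then uses the grouping axiom to get $H(U)\le h(\theta)+n-1$, and finishes with a one-variable concavity check showing $(2\theta-1)^2+1.5\,h(\theta)\le 1.5$. Your ``self-contained'' alternative in the last paragraph is essentially this same reduction-to-binary idea.

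What each approach buys: yours is shorter, recovers the sharp constant $2\ln 2$ (of which $1.5$ is a rounding), and makes transparent that the lemma is really Pinsker in disguise. The paper's argument is fully elementary and avoids citing Pinsker, at the cost of the ad hoc partition and the calculus check on $f(x)=(2x-1)^2+1.5\,h(x)$.
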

		The proof is presented in Appendix~\ref{app:proof lem alpha}. 
		We now have all the required lemmas, and are ready to prove Proposition~\ref{prop:I}.
		
		For any $t\le k$, any $u\in \{0,1\}^n$, and any $s\in\S$, let $N^{\bs}_t(u)$ be the number of $w\in\bs$ such that $w_t=u$. Also, let
		$|\bs|$ be the size of the set $\bs$. 
		Then, for any $u\in \{0,1\}^n$, 
		\begin{equation}\label{eqa:prop2 *1}
			\begin{split}
				\Pr\big(W_t=u\mid W\in \bs\big) \, &=\, \frac{\sum_{\substack{w\in\bs \\ w_t=u}} P(w)}{\sum_{w\in\bs} P(w)}\\
				&\le\,\frac{\sum_{\substack{w\in\bs \\ w_t=u}} (1+\delta)2^{-kn}}{\sum_{w\in\bs} (1-\delta)2^{-kn}}\\
				& =\, \frac{1+\delta}{1-\delta}\,\frac{N^{\bs}_t(u)}{|\bs|},
			\end{split}
		\end{equation}
		where the inequality follows from \eqref{eqa:lemI pw}. 
		In the same vein,  for any $u\in \{0,1\}^n$, 
		\begin{equation} \label{eqa:prop2 *2}
			\Pr\big(W_t=u\mid W\in \bs\big) \,\ge \, \frac{1-\delta}{1+\delta}\,\frac{N^{\bs}_t(u)}{|\bs|}.
		\end{equation}
		
		Let 
		\begin{equation}\label{eqa:def U 53}
			\U\,\triangleq \left\{u \in \{0,1\}^n\Big| \quad \Pr\big(W_1=u\mid T=1\big)\le\frac{5\times 2^{-n}}3\right\}.
		\end{equation}
		It follows from \eqref{eqa:lemI pwt} that 
		\begin{equation}\label{eqa:Wb then U}
			\textrm{if}\quad w\in\Wb, \quad \textrm{then}\quad w_t\in\U, \quad \mbox{for} \quad t=1,\ldots,k.
		\end{equation}
		Therefore, for any $u\in\{0,1\}^n$, any $t\le k$,  and any $s\in\S$,
		\begin{equation}\label{eqa:notinU then Nu=0}
			\textrm{if}\quad u\not\in\U, \quad \textrm{then}\quad N_t^{\bs}(u) =0.
		\end{equation}
		Let
		\begin{equation}\label{eqa:def gamma}
			\gamma\triangleq \sum_{u\in \U} \Pr\big(W_1=u\mid T=1\big).
		\end{equation}
		Then, for the random outcome matrix $W$ of the coin flipping, we have
		\begin{equation}	\label{eqa:gamma 56}
			\gamma 
			\,=\, \Pr\big(W_1\in\U\mid T=1\big)
			\,\ge\, \Pr\big(W\in \Wb \mid T=1\big)
			\,=\, \Pr\big(W\in \Wb\big)
			\,=\, P(\Wb) \,\ge\, 1-6k^{-3} \,\ge\, \frac56,
		\end{equation}
		where the first inequality follows from \eqref{eqa:Wb then U}, the first equality is due to the symmetry and invariance of the set $\Wb$ with respect to permutation of different columns, the second inequality is due to Lemma~\ref{lem:pw bound}, and the third inequality is because $6k^{-3}\le 1/6$ (see~\eqref{eqa:lower condition 2} with identification $k=mB$). 
		For any $u\in\{0,1\}^n$, let
		\begin{equation}\label{eqa:def alpha u}
			\alpha_u\,\triangleq\,\begin{cases}
				\frac{2^n}{\gamma} \Pr\big(W_1=u\mid T=1\big)-1\quad& \textrm{if } u\in\U, \\ -1& \textrm{if } u\not\in\U .
			\end{cases}
		\end{equation}
		It follows from \eqref{eqa:gamma 56} and the definition of $\U$ in \eqref{eqa:def U 53} that for any $u\in\U$,  we have $2^n P(W_1=u\mid T=1)/\gamma \le 2^n P(W_1=u\mid T=1) \times 6/5\le 2$. Therefore, $\alpha_u\in[-1,1]$, for all $u\in\{0,1\}^n$. 
		Moreover,
		\begin{equation*}
			\sum_{u\in\{0,1\}^n} \alpha_u \,=\, -2^n+\frac{2^n}{\gamma}\sum_{u\in\U} \Pr\big(W_1=u\mid T=1\big)
			\,=\, -2^n + \frac{2^n}{\gamma} \times\gamma
			\,=\, 0,
		\end{equation*}
		where the second equality is from the definition of $\gamma$ in \eqref{eqa:def gamma}.
		Hence, the set of numbers $\alpha_u$, for $u\in\{0,1\}^n$, satisfies all of the conditions in Lemma~\ref{lem:alpha}. Therefore, it follows from Lemma~\ref{lem:alpha} that for any $s\in\S$ and any $t\le k$,
		\begin{equation}\label{eqa:prop2 ***}
			\left( \sum_{u\in\{0,1\}^n} \alpha_u  \Pr\big(W_t=u\mid W\in\bs\big) \right)^2\, \le\, 1.5\Big(n-H\big(W_t\mid W\in\bs\big)\Big).
		\end{equation}

		In what follows, we try to derive a bound on $P\big(\bs\mid t\big)/P(\bs)$ in terms of $\alpha_u$. We then use \eqref{eqa:prop2 ***} and Lemma~\ref{lem:HI} to obtain the desired bound on $I(T;S)$. We now elaborate on $P\big(\bs\mid t\big)$, 
		\begin{equation}\label{eqa:ps|t equality}
			\begin{split}
				P\big(\bs\mid T=t\big) \,&=\, 2^{-n(k-1)} \sum_{w\in\bs} P\big(w_t\mid T=t\big)\\
				&=\, 2^{-n(k-1)} \sum_{u\in\{0,1\}^n} \Pr\big(W_t=u\mid T=t\big) N_t^{\bs}(u)\\
				&=\, 2^{-nk} \sum_{u\in \{0,1\}^n} \Big(2^n \Pr\big(W_t=u\mid T=t\big)\Big) N_t^{\bs}(u)\\
				&=\, 2^{-nk} \sum_{u\in \{0,1\}^n} \Big((\alpha_u+1)\gamma\Big) N_t^{\bs}(u),
			\end{split}
		\end{equation}
		where the last equality is due to \eqref{eqa:notinU then Nu=0} and the definition of $\alpha_u$ in \eqref{eqa:def alpha u}.
		On the other hand, since $P(\bs)=\sum_{w\in\bs}P(w)$, it follows from \eqref{eqa:lemI pw} that 
		\begin{equation}\label{eqa:bs bound beta 1}
			\begin{split}
				P(\bs)\, &\le\, \sum_{w\in\bs} (1+\delta)\,2^{-kn} \,=\, 2^{-kn}\, (1+\delta) \,|\bs|,      \\
				P(\bs)\, &\ge\, \sum_{w\in\bs} (1-\delta)\,2^{-kn} \,=\, 2^{-kn}\, (1-\delta)\,|\bs|.
			\end{split}
		\end{equation}
		Combining \eqref{eqa:ps|t equality} and \eqref{eqa:bs bound beta 1}, 
		we obtain
		\begin{equation}\label{eqa:prop2 **}
			\begin{split}
				\frac{P\big(\bs\mid T=t\big)}{P(\bs)} 
				\,&\le\,\frac{1}{1-\delta} \sum_{u\in\{0,1\}^n} (1+\alpha_u)\frac{N_t^{\bs}(u)}{|s|}, \\
				\frac{P\big(\bs\mid T=t\big)}{P(\bs)} \,&\ge\,\frac{1}{1+\delta} \sum_{u\in\{0,1\}^n} (1+\alpha_u)\frac{N_t^{\bs}(u)}{|s|}.
			\end{split}
		\end{equation}
		It then follows from \eqref{eqa:prop2 **} and \eqref{eqa:prop2 *1} that 
		\begin{equation}\label{eqa:*a1}
			\begin{split}
				\frac{P\big(\bs\mid T=t\big)}{P(\bs)} \,&\ge\,\frac{1}{1+\delta} \sum_{u\in\{0,1\}^n} (1+\alpha_u)\frac{N_t^{\bs}(u)}{|s|}\\
				&\ge\,\frac{1-\delta}{(1+\delta)^2} \sum_{u\in\{0,1\}^n} \big(1+\alpha_u\big) \Pr\big(W_t=u\mid W\in \bs\big)\\
				&\ge\, \big(1-4\delta)\, \sum_{u\in\{0,1\}^n} \big(1+\alpha_u\big) \Pr\big(W_t=u\mid W\in \bs\big) \\
				&\ge\, \sum_{u\in\{0,1\}^n} \big(1+\alpha_u\big) \Pr\big(W_t=u\mid W\in \bs\big) \, -\,4\delta \sum_{u\in\{0,1\}^n} 2\Pr\big(W_t=u\mid W\in \bs\big)\\
				&=\, \sum_{u\in\{0,1\}^n} \big(1+\alpha_u\big) \Pr\big(W_t=u\mid W\in \bs\big) \, -\,8\delta\\
				&=\, \sum_{u\in\{0,1\}^n} \alpha_u \Pr\big(W_t=u\mid W\in \bs\big) \, +\,1\,-\,8\delta,
			\end{split}
		\end{equation}
		where the first inequality is from~\eqref{eqa:prop2 **}, the second inequality follows from \eqref{eqa:prop2 *1}, the fourth inequality is because $\alpha_u\le 2$ for all $u\in\{0,1\}^n$, and the third inequality is due to the assumption that $\delta\le 1/7$ (see \eqref{eqa:lower condition 3}) and the following inequality (which is easy to verify with a computer program)
		\begin{equation*} 
			\frac{1-x}{(1+x)^2} \ge 1-4x \quad \textrm{and}  \quad \frac{1+x}{(1-x)^2} \le 1+4x,\qquad \forall x \in[0,1/7].
		\end{equation*}
		Following a similar line of arguments and using \eqref{eqa:prop2 *2} instead of \eqref{eqa:prop2 *1}, we obtain
		\begin{equation} \label{eqa:*a2}
			\frac{P\big(\bs\mid T=t\big)}{P(\bs)} \,\le\, \sum_{u\in\{0,1\}^n} \alpha_u \Pr\big(W_t=u\mid W\in \bs\big) \, +\,1\,+\,8\delta.
		\end{equation}
		Combining \eqref{eqa:*a1} and \eqref{eqa:*a2}, we obtain
		\begin{equation}\label{eqa:*a3}
			\begin{split}
				\left(\frac{P\big(\bs\mid T=t\big)}{P(\bs)} -1  \right)^2 \, &\le\, \left( \Big|\sum_{u\in\{0,1\}^n} \alpha_u \Pr\big(W_t=u\mid W\in \bs\big)\Big| \, +\,8\delta   \right)^2\\
				&\le\, 2\left( \sum_{u\in\{0,1\}^n} \alpha_u \Pr\big(W_t=u\mid W\in \bs\big)    \right)^2\, +\,2(8\delta)^2\\
				&\le\,  3\Big(n-H\big(W_t\mid W\in\bs\big)\Big)  \, +\,128\delta^2,
			\end{split}
		\end{equation}
		where the first inequality is due to \eqref{eqa:*a1} and \eqref{eqa:*a2}, the second inequality is because $(a+b)^2\le 2a^2+2b^2$, for all $a,b\in\R$, and the last inequality follows from \eqref{eqa:prop2 ***}.
		
		On the other hand, 
		\begin{equation}\label{eqa:prop2 *4}
			\begin{split}
				H\big(W\mid W\in\bs\big)\,&=\, \sum_{w\in\bs} P\big(w\mid w\in\bs\big)\, \log_2\frac1{P\big(w\mid w\in\bs\big)}\\
				&=\, \sum_{w\in\bs} \frac{P(w)}{P(\bs)}\, \log_2\frac{P(\bs)}{P(w)}\\
				&\ge\, \sum_{w\in\bs} \frac{P(w)}{P(\bs)}\, \log_2\frac{P(\bs)}{(1+\delta)\, 2^{-kn}}\\
				&=\,  \log_2\frac{P(\bs)}{(1+\delta)\, 2^{-kn}}\\
				&=\, kn\,+ \, \log_2 P(\bs) \, -\, \log_2 (1+\delta)\\
				&\ge\, kn\, -\, 1.5\delta \,+ \, \log_2 P(\bs) ,
			\end{split}
		\end{equation}
		where the first inequality is due to Lemma~\ref{lem:pw bound}, and the last inequality is because $\log_2 (1+x) \le 1.5 x$, for all $x> -1$.
		Moreover,
		\begin{equation}\label{eqa:ent subadd}
			H\big(W\mid W\in\bs\big) \, =\, H\big(W_1,\ldots,W_k\mid W\in\bs\big) \,\le\, \sum_{t=1}^{k} H\big(W_t\mid W\in\bs\big),
		\end{equation}
		where the inequality is from the sub-additive property of the entropy (see \cite{CoveT06}, page 41).
		Plugging \eqref{eqa:prop2 *4} into \eqref{eqa:ent subadd}, we obtain
		\begin{equation}\label{eqa:prop2 5*}
			\sum_{t=1}^{k} H\big(W_t\mid W\in\bs\big)\, \ge\, 	H\big(W\mid W\in\bs\big)\, \ge\, \log_2 P(\bs) \,+ \,  kn\, -\, 1.5\delta.
		\end{equation}
		Combining everything together, we finally have
		\begin{equation*}
			\begin{split}
				I(T;S)\,&\le\, \frac1{k \ln 2} \sum_{s\in\S} P(\bs) \sum_{t=1}^k \left( \frac{P(\bs\mid T=t)}{P(\bs)} \,-1 \right)^2 \,+\, \frac{9+6B + 15\sqrt{k}}{k^3}\\
				&\le\,\frac1{k \ln 2} \sum_{s\in\S} P(\bs) \sum_{t=1}^k \Big( 3n-3H\big(W_t\mid W_t\in\bs\big) +128\delta^2 \Big) \,+\, \frac{9+6B + 15\sqrt{k}}{k^3}\\
				&=\, \frac{3n}{k \ln 2}\,-\,\frac3{k \ln 2} \sum_{s\in\S} P(\bs) \sum_{t=1}^k H\big(W_t\mid W_t\in\bs\big)  \, +\,\frac{128\delta^2}{\ln 2} + \frac{9+6B + 15\sqrt{k}}{k^3}\\
				&\le\, \frac{3n}{k \ln 2}\,-\,\frac3{k \ln 2} \sum_{s\in\S} P(\bs)  \Big(\log_2 P(\bs) \,+\,nk-1.5\delta \Big)  \, +\,185\delta^2 + \frac{9+6B + 15\sqrt{k}}{k^3}\\
				&=\, \frac3{k \ln 2} \sum_{s\in\S} P(\bs) \log_2 \frac1{P(\bs)} \,+\, \frac{4.5\delta}{k \ln 2}  \, +\,185\delta^2 + \frac{9+6B + 15\sqrt{k}}{k^3}\\
				&\le\, \frac3{k \ln 2}\, H(S) \,+\, \frac{27}{k^3 \ln 2} \,+\, \frac{6.5\delta}{ k}  \, +\,185\delta^2 + \frac{9+6B + 15\sqrt{k}}{k^3}\\
				&\le\, \frac{3B}{k \ln 2}\,+\, \frac{40}{k^3 } \,+\, \frac{6.5\delta}{ k}  \, +\,185\delta^2 + \frac{9+6B + 15\sqrt{k}}{k^3}\\
				&< \, \frac{3B}{k \ln 2} \,+\, \frac1k\,\left[ \left(\frac{313}{\cn}\right)^2\,+\, \frac{94^2}{\cn \sqrt{k}} \,+\, \frac{192}k \,+\frac{15}{k^{1.5}} \,+\, \frac{49+6B}{k^2}  \right],
			\end{split}
		\end{equation*}
		where the first inequality follows from Lemma~\ref{lem:HI}~(b), the second inequality is due to \eqref{eqa:*a3}, the third inequality is from \eqref{eqa:prop2 5*}, the fourth inequality results from Lemma~\ref{lem:HI}~(a), the fifth inequality is because $S$ is a $B$-bit signal and as a result, $H(S)\le B$, and the last inequality is by substituting $\delta$ from \eqref{eqa:def del} and simple calculations. This implies \eqref{eqa:bound I} and completes the proof of Proposition~\ref{prop:I}.

		\medskip
		
		\subsection{Proof of Lemma~\ref{lem:deterministic coding}} \label{app:proof lem deterministic coding}
		The proof relies on a known property of mutual information (see Theorem~2.7.4 of \cite{CoveT06} on page 33), according to which 
		\begin{equation}\label{eqa:I is conv}
			I(S;T) \textrm{ is a convex function with respect to } P\big(S\mid T\big).
		\end{equation}
		Let $Q$  be a randomized coding, under which a machine outputs signal $S$ given the coin-flipping outcome vector $W$ with probability $Q\big(S\mid W\big)$.
		For any $s\in\S$ and $t=1,\ldots,k$, let
		\begin{equation}\label{eqa:def PF}
			P_Q\big(s\mid t\big)\,\triangleq\, \sum_{w\in\W} P\big(w\mid t\big)\, Q\big(s\mid w\big)
		\end{equation}
		be the probability of signal $s$ given the biased coin index $t$.
		Let $\Q$ be the set of all deterministic mappings (or functions) from $\W$ to $\S$. Corresponding to any $g\in\Q$, we consider a deterministic coding $Q_g$ as follows
		\begin{equation}\label{eqa:def Fg}
			Q_g\big(s\mid w \big)\,=\, \begin{cases}
				1& \quad \textrm{if } g(w)=s,\\
				0& \quad \textrm{otherwise.}
			\end{cases}
		\end{equation}
		We also let 
		\begin{equation}\label{eqa:def Pg}
			P_g\big(s\mid t\big)\,\triangleq\, \sum_{w\in\W} P\big(w\mid t\big)\, Q_g\big(s\mid w\big)
		\end{equation}
		be the probability of signal $s$ given the biased coin index $t$, under the coding $Q_g$.
		We will show that for any stochastic coding $Q$, $P_Q(\cdot)$ is a convex combination of  $P_g(\cdot)$, for $g\in\Q$, in the sense that there exist non-negative coefficients $\alpha_g$, for $g\in\Q$, such that $\sum_{g\in\Q}\alpha_g=1$ and 
		\begin{equation}\label{eqa:conv comb1}
			P_Q\big(s\mid t\big) \, =\, \sum_{g\in\Q}\alpha_g\, P_g\big(s\mid t\big),\qquad \forall s\in\S,\quad t=1,\ldots,k.
		\end{equation}
		Once we establish \eqref{eqa:conv comb1}, it follows from \eqref{eqa:I is conv} that\footnote{Please note that $I(S;T)$ can be seen as a convex function of a vector $\alpha=(\alpha_1,\cdots,\alpha_{|\Q|})$ where $\sum_{g\in\Q}\alpha_g=1$. Moreover, the value of this function at the standard basis vector $e_i$, $1\leq i \leq |\Q|$, would be $I\big(S_g;T\big)$. Thus, the value of $I(S;T)$ is less than the linear combination of values of this function at basis vectors with weights given in $\alpha$.   }
		\begin{equation}\label{eqa:conv I last}
			I\big(S;T\big) \,\le\, \sum_{g\in\Q}\alpha_g I\big(S_g;T\big) \, \le\, \max_{g\in\Q} I\big(S_g;T\big),
		\end{equation}
		where $S$ is a random signal generated via coding $Q$, and for $g\in\Q$, $S_g$ is a random signal generated under coding $Q_g$. As a result, there exists a $g\in\Q$ such that the mutual information under deterministic coding $Q_g$ is no smaller than the mutual information under the randomized coding $Q$. This shows that the mutual information is maximized under a deterministic coding, which in turn implies the lemma.
		In the rest of the proof, we will establish \eqref{eqa:conv comb1}.
		
		Lets fix a randomized coding $Q$. We enumerate the set $\W$ and let $\W=\big\{w^{1},\ldots,w^{2^{kn}}\big\}$. 
		For any $g\in\Q$ let
		\begin{equation}\label{eqa:def alpha g}
			\alpha_g \,\triangleq\,  \prod_{w\in\W}Q\big( g(w) \mid w\big)\, = \,  \prod_{i=1}^{2^{kn}} Q\big( g(w^i) \mid w^i\big).
		\end{equation}
		Then,
		\begin{equation}
			\begin{split}
				\sum_{g\in\Q}\alpha_g \,&=\, \sum_{g\in\Q}\, \prod_{i=1}^{2^{kn}} Q\big( g(w^i) \mid w^i\big)\\
				&=\, \sum_{s_1\in\S}\cdots\sum_{s_{2^{kn}}\in\S}\, \prod_{i=1}^{2^{kn}} Q\big( s_i \mid w^i\big)\\
				&=\, \left(\sum_{s_1\in\S} Q\big( s_1 \mid w^1\big)\right) \times \cdots \times \left(\sum_{s_{2^{kn}}\in\S} Q\big( s_{2^{kn}} \mid w^{2^{kn}}\big)\right)\\
				&=\, 1\times\cdots\times 1\\
				&=\, 1,
			\end{split}
		\end{equation}
		where the second equality is because $\Q$ is the set of all deterministic functions from $\W$ to $\S$ and for any $s_1,\ldots,s_{2^{kn}}\in \S$, there exists a $g\in\Q$ such that $g(w^i)=s_i$ for $i=1,\ldots,2^{kn}$; and the last inequality is because for any $w\in\W$, $Q\big(\cdot\mid w\big)$ is a probability mass function over $\S$.
		
		On the other hand, for any $s\in\S$,
		\begin{equation}
			\begin{split}
				\sum_{g\in\Q}\alpha_g  Q_g\big( s \mid w^1\big) \,&=\, \sum_{\substack{g\in\Q \\ g(w^1)=s}} \alpha_g\\
				&=\, \sum_{\substack{g\in\Q \\ g(w^1)=s}} \prod_{i=1}^{2^{kn}} Q\big( g(w^i) \mid w^i\big)\\
				&=\, Q\big( s \mid w^1\big)\,\sum_{\substack{g\in\Q \\ g(w^1)=s}} \prod_{i=2}^{2^{kn}}  Q\big( g(w^i) \mid w^i\big)\\
				&=\, Q\big( s \mid w^1\big)\,\sum_{s_2\in\S}\cdots\sum_{s_{2^{kn}}\in\S} \, \prod_{i=2}^{2^{kn}}  Q\big( g(w^i) \mid w^i\big)\\
				&=\, Q\big( s \mid w^1\big)\,\left(\sum_{s_2\in\S} Q\big( s_2 \mid w^2\big)\right) \times \cdots \times \left(\sum_{s_{2^{kn}}\in\S} Q\big( s_{2^{kn}} \mid w^{2^{kn}}\big)\right)\\
				&=\,  Q\big( s \mid w^1\big)\times 1\times\cdots\times 1\\
				&=  Q\big( s \mid w^1\big),
			\end{split}
		\end{equation}
		where the first equality follows from the definition of $Q_g$ in \eqref{eqa:def Fg}, the fourth equality is  because for any $s_1,\ldots,s_{2^{kn}}\in \S$, there exists a $g\in\Q$ such that $g(w^i)=s_i$ for $i=1,\ldots,2^{kn}$, and the sixth equality is because for any $w\in\W$, $Q\big(\cdot\mid w^1\big)$ is a probability mass function over $\S$.
		In the same vein, for any $w\in\W$ and any $s\in\S$, we have 
		\begin{equation} \label{eqa:FFg}
			Q\big( s \mid w\big)\,=\,\sum_{g\in\Q}\alpha_g  Q_g\big( s \mid w\big) . 
		\end{equation}
		Therefore, for $t=1,\ldots, k$ and for any $s\in\S$,
		\begin{equation}\label{eqa:conv comb2}
			\begin{split}
				P_Q\big(s\mid t\big) \, &=\,  \sum_{w\in\W} P\big(w\mid t\big)\, Q\big(s\mid w\big)\\
				&=\,\sum_{w\in\W} P\big(w\mid t\big)\, \sum_{g\in\Q}\alpha_g\,  Q_g\big(s\mid w\big)\\
				&=\,\sum_{g\in\Q}\alpha_g\, \sum_{w\in\W} P\big(w\mid t\big)\,  Q_g\big(s\mid w\big)\\
				&=\,\sum_{g\in\Q}\alpha_g  P_g\big(s\mid t\big),
			\end{split}
		\end{equation}
		where the first equality is from the definition of $P_Q(\cdot)$ in \eqref{eqa:def PF},
		the second equality follows from \eqref{eqa:FFg},
		and the last equality is due to the definition of $P_g(\cdot)$ in \eqref{eqa:def Pg}. 
		This implies \eqref{eqa:conv comb1}. Lemma~\ref{lem:deterministic coding} then follows from the argument following \eqref{eqa:conv I last}.

		\medskip
		
		\subsection{Proof of Lemma~\ref{lem:pw bound}} \label{app:proof lem pw bound}
		Fix a $t_0\le k$ and let $\psi$ be a random  outcome of the coin flipping matrix generated via distribution $P(W\mid T=t_0)$.
		For $t=1,\dots,k$ let $\delta_t$  denote the number of $1$s in the $t$th column of $\psi$.
		Therefore,
		\begin{equation}\label{eqa:E deltat0}
			\Exp\big[\delta_{t_0}\big] \, = \frac{n}2\,+\,\frac{\sqrt{n}}{2\cn \,\ln k},
		\end{equation}
		and for any $t\ne t_0$,
		\begin{equation}\label{eqa:E deltat}
			\Exp\big[\delta_{t}\big] \, = \frac{n}2.
		\end{equation}
		
		We now capitalizing on the Hoeffding's inequality (see Lemma~\ref{lemma:CI}~(a)) to obtain
		\begin{equation}
			\begin{split}
				\Pr\left(\big|\delta_{t_0} - \frac{n}2 \big| \ge 2.5\sqrt{n\ln k} \right)\,
				&\le \, \Pr\left(\big|\delta_{t_0} - \Big(\frac{n}2+\frac{\sqrt{n}}{2\cn \ln k}\Big) \big| \ge 2.5\sqrt{n\ln k} -\frac{\sqrt{n}}{2\cn \ln k}\right)\\
				&= \, \Pr\left(\big|\delta_{t_0} - \Exp[\delta_{t_0}] \big| \ge 2.5\sqrt{n\ln k} -\frac{\sqrt{n}}{2\cn \ln k}\right)\\
				&\le \, \Pr\left(\big|\delta_{t_0} - \Exp[\delta_{t_0}] \big| \ge 2\sqrt{n\ln k}\right)\\
				&\le\,2\exp\left( \frac{-8n \ln k}{n}   \right)\\
				&\le\,2\exp\big( -8 \ln k \big)\\
				&\le\, \frac{2}{k^4},
			\end{split}
		\end{equation}
		where the first equality is from \eqref{eqa:E deltat0} and the third inequality is due to the  Hoeffding's inequality.
		In the same vein, for any $t\ne t_0$,
		\begin{equation}
			\begin{split}
				\Pr\left(\big|\delta_{t} - \frac{n}2 \big| \ge 2.5\sqrt{n\ln k} \right)\,
				&= \, \Pr\left(\big|\delta_{t} - \Exp[\delta_{t}] \big| \ge 2.5\sqrt{n\ln k} \right)\\
				&\le\,2\exp\left( \frac{-12.5n \ln k}{n}   \right)\\
				&\le\, \frac{2}{k^4},
			\end{split}
		\end{equation}
		where the equality is due to \eqref{eqa:E deltat}
		Therefore, for $t=1,\ldots,k$,
		\begin{equation}\label{eqa:deltan/2k4}
			\Pr\left(\big|\delta_{t} - \frac{n}2 \big| \ge 2.5\sqrt{n\ln k} \right)\,\le\, \frac{2}{k^4}.
		\end{equation}

		It is easy to verify via a simple computer program that $e^x\le 1+4x/3$, for all  $x\in [0,0.5]$. It then follows from \eqref{eqa:lower condition 1} with $k=mB$ that
		\begin{equation}\label{eqa:lemI A*}
			\exp\left(\frac{15}{2\cn \sqrt{\ln k}}  \right)\,\le\, 1\,+\,\frac{10}{\cn\,\sqrt{\ln k}}.
		\end{equation}
		Let
		\begin{equation} \label{eqa:LemI def eps}
			\epsilon \,\triangleq\, \frac{1}{\cn \sqrt{n}\, \ln k}.
		\end{equation} 
		In the same vein, we have  $(1+x)/(1-x)\le e^{3x}$, for all $x\in [0,1/3]$.
		Therefore, in view of \eqref{eqa:lower condition 1}, $\epsilon\le 1/3$, and hence,
		\begin{equation}\label{eqa:lemI *3}
			\frac{1+\epsilon}{1-\epsilon}\,\le\, e^{3\epsilon}.
		\end{equation}
		Moreover, for any $x\in[0,0.5]$, we have $1-x\ge e^{-2x}$. Consequently,
		\begin{equation}\label{eqa:LemI *5}
			\left(1-\epsilon^2\right) ^{n/2} \,\ge\,\big( \exp(-2\epsilon^2)\big)^{n/2} \,=\,\exp\big(-n\epsilon^2\big)\,=\, \exp\left(\frac{-1}{\cn^2 \,\ln^2 k}\right).
		\end{equation}

		Once again, we emphasize that $\psi$ is sampled from a distribution in which the $t_0$th coin is biased. Then, for $t=1,\ldots,k$,
		\begin{equation}\label{eqa:lemI *2}
			\begin{split}
				\Pr\big(W_t=\psi_t \mid T=t\big)\, &=\,  \left(\frac{1}2 \,+ \frac{\epsilon}2\right)^{\delta_t} \,\left(\frac{1}2 \,- \frac{\epsilon}2\right)^{n-\delta_t}\\
				&=\,  2^{-n}\left(1+ \epsilon\right)^{\frac{n}2 + (\delta_t-\frac{n}2)} \,\left(1- \epsilon\right)^{\frac{n}2 - (\delta_t-\frac{n}2)}\\
				&=\,  2^{-n}\left(1- \epsilon^2\right)^{\frac{n}2 } \,\left(\frac{1+ \epsilon}{1-\epsilon}\right)^{\delta_t-\frac{n}2},
			\end{split}
		\end{equation}
		where the first equality is due to \eqref{eqa:prob of biased coin} and the definition of $\epsilon$ in \eqref{eqa:LemI def eps}.
		Assuming $|\delta_t-n/2|\le 2.5 \sqrt{n\ln k}$, \eqref{eqa:lemI *2} simplifies to 
		\begin{equation}\label{eqa:lemI *4}
			\begin{split}
				P\big(W_t=\psi_t \mid T=t\big)\, &\le\,  2^{-n}\left(\frac{1+ \epsilon}{1-\epsilon}\right)^{\delta_t-\frac{n}2}\\
				&\le\, 2^{-n}\left(e^{3\epsilon}\right)^{|\delta_t-\frac{n}2|}\\
				&\le\, 2^{-n}\exp\left(7.5 \epsilon \sqrt{n \ln k}\right)\\
				&=\, 2^{-n}\exp\left(\frac{15}{2\cn \sqrt{\ln k}}\right)\\
				&\le\, 2^{-n}\,\left(1\,+\,\frac{10}{\cn \sqrt{\ln k}}\right),
			\end{split}
		\end{equation}
		where the second inequality follows from \eqref{eqa:lemI *3}, the third inequality is due to the assumption $|\delta_t-n/2|\le 2.5 \sqrt{n\ln k}$, the equality is by the definition of $\epsilon$ in \eqref{eqa:LemI def eps}, and the last inequality is from \eqref{eqa:lemI A*}.
		
		In the same vein, assuming $|\delta_t-n/2|\le 2.5 \sqrt{n\ln k}$, \eqref{eqa:lemI *2} can be simplified as
		\begin{equation} \label{eqa:lemI *6}
			\begin{split}
				P\big(W_t=\psi_t \mid T=t\big)\, &=\, 2^{-n}\left(1- \epsilon^2\right)^{\frac{n}2 } \,\left(\frac{1+ \epsilon}{1-\epsilon}\right)^{\delta_t-\frac{n}2}	\\
				&\ge\,  2^{-n} \,\exp\left(\frac{-1}{\cn^2 \,\ln^2 k}\right) \, \left(\frac{1+ \epsilon}{1-\epsilon}\right)^{-|\delta_t-\frac{n}2|}\\
				&\ge\,  2^{-n} \,\exp\left(\frac{-1}{\cn^2 \,\ln^2 k}\right) \, \exp\left(3\epsilon\right)^{-|\delta_t-\frac{n}2|}\\
				&\ge\,  2^{-n} \,\exp\left(\frac{-1}{\cn^2 \,\ln^2 k} \, -\, 7.5\epsilon\sqrt{n \ln k}\right) \\
				&=\,  2^{-n} \,\exp\left(\frac{-1}{\cn^2 \,\ln^2 k} \, -\, \frac{15}{2\cn \sqrt{\ln k}}\right) \\
				&\ge\,  2^{-n} \,\exp\left(\frac{-8}{\cn \sqrt{\ln k}}\right) \\
				&\ge\,  2^{-n} \,\left(1 -\,\frac{8}{\cn \sqrt{\ln k}}\right),
			\end{split}
		\end{equation}
		where the first inequality follows from \eqref{eqa:LemI *5}, the second inequality is due to \eqref{eqa:lemI *3}, the third inequality is from the assumption that $|\delta_t-n/2|\le 2.5 \sqrt{n\ln k}$, the second equality is from the definition of $\epsilon$ in \eqref{eqa:LemI def eps}, the fourth inequality is because $\cn \ln k\ge 2$ (see \eqref{eqa:lower condition 1} with identification $k=mB$), and the last inequality is because $e^{-x}\ge 1-x$, for all $x\in\R$.
		
		Combining \eqref{eqa:deltan/2k4}, \eqref{eqa:lemI *4}, and \eqref{eqa:lemI *6}, it follows that 
		for $t=1,\ldots,k$,  with probability at least $1-2k^{-4}$ we have
		\begin{equation} \label{eqa:lemI *7}
			\Pr\big(W_t=\psi_t \mid T=t  \big)\, \in\, 2^{-n}\,\times\, \left( 1 -\,\frac{8}{\cn \sqrt{\ln k}},\, 1 +\,\frac{10}{\cn \sqrt{\ln k}}   \right).
		\end{equation}
		
		Let $\Wb_1$ be a subset of $\W$ that contains all $w\in \W$ for which $|\delta_t-n/2|\le 2.5 \sqrt{n\ln k}$, for $t=1,\ldots, k$. Then, from \eqref{eqa:lemI *4} and \eqref{eqa:lower condition 1},  for any $w\in\Wb_1$, we obtain:
		\begin{equation}\label{eqa:Pt 2}
			\Pr\big(W_t=w_t \mid T=t  \big)\, \le\, 2^{-n}  \left(1+\frac{10}{15}\right)\,=\, \frac{5\times 2^{-n}}{3}.
		\end{equation}
		Moreover, it  follows from \eqref{eqa:deltan/2k4} and the union bound that 
		\begin{equation}\label{eqa:lemI *8}
			P\big(\Wb_1\big) \, \ge\, 1-2k^{-3}.
		\end{equation}
		
		We now proceed to prove the second part of the lemma, i.e. \eqref{eqa:lemI pw}.
		Again, fix a $t_0\le k$ and let $\psi$ be a random matrix of coin-flip outcomes in which the biased coin has index $T=t_0$.
		In this case, the columns $\psi_1,\ldots,\psi_k$ of $\psi$ are independent random vectors. 
		For $t=1,\ldots,k$, let 
		\begin{equation}\label{eqa:yt}
			y_t\, =\, f(\psi_t)\, \triangleq \, \min\Bigg(\max\left( 2^n \Pr\big(W_t=\psi_t \mid T=t  \big),\,\,  1 -\,\frac{8}{\cn \sqrt{\ln k}}\right),\,\,  1 +\,\frac{10}{\cn \sqrt{\ln k}} \Bigg).
		\end{equation}
		Since each $y_t$ is only a function of $\psi_t$, it follows that  $y_1,\ldots, y_k$ are independent random variables. Moreover, every $y_t$ lies in an interval of length ${18}/\cn \sqrt{\ln k}$. 
		On the other hand, it follows from \eqref{eqa:lemI *7} that for  $t=1,\ldots,k$, with probability at least $1-2k^{-4}$, we have $y_t=2^n \Pr\big(W_2=\psi_t \mid T=t  \big)$. The union bound then implies that with probability at least $1-2k^{-3}$,
		\begin{equation}\label{eqa:lemI *5kahi}
			y_t=2^n \Pr\big(W_t=\psi_t \mid T=t  \big),\qquad \textrm{ for } t=1,\ldots,k.
		\end{equation}
		Therefore, for the random matrix $\psi$ sampled from a distribution with biased coin index $T=t_0$, we have
		\begin{equation}
			\begin{split}
				P(\psi)\,&=\,\Pr(W=\psi)\, \\&=\, \frac1{k} \sum_{t=1}^{k} \Pr\big(W=\psi\mid T=t\big)\\
				&=\, \frac1{k} \sum_{t=1}^{k} 2^{-(k-1)n}\, \Pr\big(W_t=\psi_t\mid T=t\big)\\
				&=\, \frac{2^{-kn}}{k} \sum_{t=1}^{k} 2^{n}\, \Pr\big(W_t=\psi_t\mid T=t\big)\\
			\end{split}
		\end{equation}
		It then follows from \eqref{eqa:lemI *5kahi} that with probability at least $1-2k^{-3}$, 
		\begin{equation}\label{eqa:lemI **8.5}
			P(\psi) \,=\, \frac{2^{-kn}}{k} \sum_{t=1}^{k} y_t.
		\end{equation}
		Let 
		\begin{equation}\label{eqa:lemI def beta}
			\beta\triangleq \E\left[\frac{2^{-kn}}{k} \sum_{t=1}^{k} y_t\right].
		\end{equation}
		\begin{claim}\label{claim:beta}
			$|\beta-2^{-kn}|\le 2^{-kn}/k$.
		\end{claim}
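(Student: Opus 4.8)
The plan is to write $\beta$ as the exact mean of $P(\psi)$ plus a negligible truncation correction, and to bound each piece separately. Set $Z_t\triangleq 2^n\Pr(W_t=\psi_t\mid T=t)$ for the \emph{unclipped} quantity, so that $y_t$ in \eqref{eqa:yt} is just the projection of $Z_t$ onto the interval $[1-8/(\cn\sqrt{\ln k}),\,1+10/(\cn\sqrt{\ln k})]$. Conditioning on $T=t$ leaves the other $k-1$ columns uniform, so $\Pr(W=\psi\mid T=t)=2^{-kn}Z_t$, and hence, for \emph{every} $\psi$, $P(\psi)=\frac1k\sum_{t=1}^k\Pr(W=\psi\mid T=t)=\frac{2^{-kn}}{k}\sum_{t=1}^k Z_t$ (this is the deterministic computation preceding \eqref{eqa:lemI **8.5}, before truncation is introduced). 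Taking the expectation over $\psi\sim P(\cdot\mid T=t_0)$ gives
\begin{equation*}
	\beta \;=\; \E_\psi\big[P(\psi)\big] \;+\; \frac{2^{-kn}}{k}\sum_{t=1}^k \E\big[y_t-Z_t\big],
\end{equation*}
so it suffices to show $\big|\E_\psi[P(\psi)]-2^{-kn}\big|\le\tfrac{1}{2}\cdot 2^{-kn}/k$ and $\big|\tfrac{2^{-kn}}{k}\sum_t\E[y_t-Z_t]\big|\le\tfrac{1}{2}\cdot 2^{-kn}/k$.

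For the main term I would compute $\E_\psi[P(\psi)]$ in closed form. Writing $q$ for the law of the $n$ flips of one biased coin, we have $P(w\mid T=t)=2^{-(k-1)n}q(w_t)$, so expanding $P(w)=\frac1k\sum_{t'}P(w\mid T=t')$ and summing out columns one at a time with the product identities $\sum_u q(u)=1$ and $\sum_u q(u)^2=\big(\tfrac12+\tfrac{\epsilon^2}{2}\big)^n=2^{-n}(1+\epsilon^2)^n$, the $t'=t_0$ contribution equals $2^{-kn}(1+\epsilon^2)^n$ and each $t'\ne t_0$ contribution equals $2^{-kn}$; thus $\E_\psi[P(\psi)]=2^{-kn}\big(1+((1+\epsilon^2)^n-1)/k\big)$. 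Since $n\epsilon^2=1/(\cn^2\ln^2 k)$ and \eqref{eqa:lower condition 1}--\eqref{eqa:lower condition 2} force $\cn^2\ln^2 k\ge 225\ln k>2000$, we get $0\le(1+\epsilon^2)^n-1\le e^{1/2000}-1<10^{-3}$, hence $\big|\E_\psi[P(\psi)]-2^{-kn}\big|<10^{-3}\cdot 2^{-kn}/k$.

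For the truncation correction, the key observation is that $y_t=Z_t$ unless $Z_t$ leaves the clipping interval, and by \eqref{eqa:lemI *4}--\eqref{eqa:lemI *6} that forces the event $B_t=\{|\delta_t-n/2|>2.5\sqrt{n\ln k}\}$, which by \eqref{eqa:deltan/2k4} has probability at most $2k^{-4}$. On $B_t$ one has $|y_t-Z_t|\le|y_t|+Z_t\le\tfrac{5}{3}+Z_t$ (using $\cn\sqrt{\ln k}\ge15$ from \eqref{eqa:lower condition 1}). The delicate point — and the only real obstacle — is that $Z_t$ is \emph{not} bounded by a constant (in the worst case $Z_t$ is as large as $(1+\epsilon)^n$, exponential in $n$), so a crude estimate $\E[Z_t\mathbf{1}_{B_t}]\le\text{const}\cdot\Pr(B_t)$ is useless. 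I would instead bound it by a second moment: the same product computation gives $\E[Z_t^2]=(1+\epsilon^2)^n$ for $t\ne t_0$ and $\E[Z_t^2]=(1+3\epsilon^2)^n$ for $t=t_0$, both at most $e$ because $n\epsilon^2$ is tiny, so Cauchy--Schwarz yields $\E[Z_t\mathbf{1}_{B_t}]\le\sqrt{\E[Z_t^2]\,\Pr(B_t)}\le\sqrt{2e}\,k^{-2}$. Hence $\E|y_t-Z_t|\le\tfrac{5}{3}\cdot 2k^{-4}+\sqrt{2e}\,k^{-2}<3k^{-2}$, and $\big|\tfrac{2^{-kn}}{k}\sum_t\E[y_t-Z_t]\big|\le 3\cdot 2^{-kn}k^{-2}<10^{-3}\cdot 2^{-kn}/k$ using $k\ge10240$ from \eqref{eqa:lower condition 2}.

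Adding the two bounds gives $|\beta-2^{-kn}|<2\cdot 10^{-3}\cdot 2^{-kn}/k\le 2^{-kn}/k$, proving the claim with large slack. In short, the exact evaluation of $\E_\psi[P(\psi)]$ is routine bookkeeping with product measures and the whole estimate is comfortably loose; the one step that needs care is controlling the truncation error, where the unboundedness of the unclipped likelihood ratio $Z_t$ has to be absorbed through its uniformly bounded second moment rather than through a worst-case bound.
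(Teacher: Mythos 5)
Your proof is correct, and it takes a genuinely different route from the paper's. The paper never introduces the unclipped quantity $Z_t$ globally; instead it bounds $|\E[y_t]-1|$ column by column, treating $t\ne t_0$ and $t=t_0$ asymmetrically. For $t\ne t_0$ it exploits the fact that $\psi_t$ is uniform, so in the expansion $\E[y_t]=2^{-n}\sum_u f(u)$ the weight $2^{-n}$ exactly cancels the potentially huge factor $2^n\Pr(W_t=u\mid T=t)$ hidden inside $f$, and the two clipping corrections reduce to tail probabilities bounded by $2k^{-4}$ each — no second moment or Cauchy--Schwarz needed. For $t=t_0$ (where $\psi_{t_0}$ is not uniform and this cancellation fails) the paper simply gives up on a sharp estimate and uses the crude interval bound $|\E[y_{t_0}]-1|\le 2/3$; the final inequality $(k-1)\cdot 2k^{-4}+2/3\le 1$ then requires the hypothesis $k^{-3}\le 1/6$ to close. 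Your decomposition $\beta=\E[P(\psi)]+\tfrac{2^{-kn}}{k}\sum_t\E[y_t-Z_t]$ treats all $k$ columns symmetrically, computes $\E[P(\psi)]=2^{-kn}\bigl(1+((1+\epsilon^2)^n-1)/k\bigr)$ exactly via the product-measure identities $\sum_u q(u)^2=2^{-n}(1+\epsilon^2)^n$ and $\sum_u q(u)^3=4^{-n}(1+3\epsilon^2)^n$, and controls the truncation error uniformly through $\E[Z_t^2]\le e$ and Cauchy--Schwarz. The price is a second-moment computation and Cauchy--Schwarz that the paper avoids; the payoff is a much stronger numerical bound ($\lesssim 10^{-3}\cdot 2^{-kn}/k$ versus the paper's exactly-$2^{-kn}/k$) and a proof that does not hinge on the ad hoc asymmetry at $t_0$. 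You also correctly identify the one genuine subtlety — $Z_t$ is unbounded, growing like $(1+\epsilon)^n$ in the worst case — and the second-moment route is the right way to absorb it once you drop the paper's column-by-column cancellation trick.
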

		\begin{proof}
			Temporarily,  fix  a $t\ne t_0$ and let
			\begin{align}
				\U^+ \,&\triangleq\,  \left\{ u\in \{0,1\}^n \quad\Big|\quad 2^n\Pr\big(\psi_t=u\mid T=t \big) > 1+\frac{10}{\cn\sqrt{\ln k}}\right\}, \label{eqa:c def U+}\\
				\U^- \,&\triangleq\,  \left\{ u\in \{0,1\}^n \quad\Big|\quad 2^n\Pr\big(\psi_t=u\mid T=t \big) < 1-\frac{8}{\cn\sqrt{\ln k}}\right\}. \label{eqa:c def U-}
			\end{align}
			Then, it follows from \eqref{eqa:lemI *7} that
			\begin{equation} \label{eqa:c *}
				\sum_{u\in\U^+} \Pr\big(\psi_t=u\mid T=t \big) \,\le\, 2k^{-4}.
			\end{equation}
			On the other hand, \eqref{eqa:lemI *6} implies that for any $u\in \U^-$
			\begin{equation}\label{eqa:c **1}
				|\delta(u)-n/2|\ge 2.5 \sqrt{n\ln k}, 
			\end{equation}
			where  $\delta(u)$ is the number of $1$s in the binary vector $u$.
			Let $z_1,\ldots,z_n$ be i.i.d. binary outcomes of a fair coin flip, and let $Z=z_1+\cdots+z_n$. Then,
			\begin{equation}\label{eqa:c **}
				\begin{split}
					\sum_{u\in\U^-}\frac1{2^n} \, &\le\, \sum_{\substack{u\in\{0,1\}^n \\ |\delta(u)-n/2|\ge 2.5 \sqrt{n\ln k}}} \frac1{2^n}\\
					&=\, \Pr\left( \big|Z-\frac{n}2\big|\,\ge\, 2.5 \sqrt{n\ln k}  \right)\\
					&\le\, 2\exp\left( \frac{2\times \left(2.5 \sqrt{n\ln k}\right)^2}{n}  \right)\\
					&\le\, 2\exp\big( -4 \ln k  \big)\\
					&=\, 2k^{-4},
				\end{split}
			\end{equation}
			where the first inequality follows from \eqref{eqa:c **1}, the first equality is because $Z$ has uniform distribution over $\{0,1\}^n$, and the second inequality is due to the Hoeffding's inequality.
			
			We now expand $\Exp[y_t]$ as follows. From \eqref{eqa:yt}, we have
			\begin{equation}\label{eqa:c ***}
				\begin{split}
					\Exp[y_t]\, &=\, \Exp\big[f(\psi_t)\big]\\
					&=\, \sum_{u\in\{0,1\}^n}\Pr\big(\psi_t=u\big) f(u)    \\
					&=\, \frac{1}{2^n}\sum_{u\in\{0,1\}^n} f(u)    \\
					&=\, \frac{1}{2^n}\sum_{u\in\{0,1\}^n} \min\Bigg(\max\left( 2^n \Pr\big(\psi_t =u\mid T=t  \big),\,\,  1 -\,\frac{8}{\cn \sqrt{\ln k}}\right),\,\,  1 +\,\frac{10}{\cn \sqrt{\ln k}} \Bigg)  \\
					&=\,  \frac{1}{2^n}\,\Bigg( \sum_{u\in\{0,1\}^n} 2^n \Pr\big(\psi_t =u\mid T=t  \big)\\
					\,&\qquad+\, \sum_{u\in\U^+} \left[ \left(1+\frac{10}{\cn \sqrt{\ln k}}\right) \,-\, 2^n \Pr\big(\psi_t =u\mid T=t  \big)\right]\\
					\,&\qquad+\, \sum_{u\in\U^-} \left[ \left(1- \frac{8}{\cn \sqrt{\ln k}}\right) \,-\, 2^n \Pr\big(\psi_t =u\mid T=t  \big)\right]	 \Bigg)\\
					&=\,  1 \,-\, \sum_{u\in\U^+} \left[ \Pr\big(\psi_t =u\mid T=t  \big)\,-\,2^{-n}\left(1+\frac{10}{\cn \sqrt{\ln k}}\right) \right]\\
					\,&\qquad+\, \frac{1}{2^n} \sum_{u\in\U^-} \left[ \left(1- \frac{8}{\cn \sqrt{\ln k}}\right) \,-\, 2^n \Pr\big(\psi_t =u\mid T=t  \big)\right]	 \Bigg),
				\end{split}
			\end{equation}
			where the third equality is because $t\ne t_0$ and as a result $\Pr\big(\psi_t=u\big)=2^{-n}$ for all $u\in\{0,1\}^n$, and the fifth equality follows from the definitions of $\U^+$ and $\U^-$ in \eqref{eqa:c def U+} and \eqref{eqa:c def U-}, respectively.
			From \eqref{eqa:c ***}, we have
			\begin{equation}\label{eqa:c *3}
				\Exp[y_t] \,\ge\, 1\,-\,\sum_{u\in\U^+}  \Pr\big(\psi_t =u\mid T=t  \big) \,\ge\, 1-2k^{-4},
			\end{equation}
			where the second inequality is due to \eqref{eqa:c *}.
			Moreover, it follows from \eqref{eqa:c ***} that
			\begin{equation}\label{eqa:c *4}
				\Exp[y_t] \,\le\, 1\,+\, \frac{1}{2^n} \sum_{u\in\U^-}  \left(1- \frac{8}{\cn \sqrt{\ln k}}\right)
				\,\le \, 1\,+\, \sum_{u\in\U^-} \frac{1}{2^n} 
				\,\le\, 1+2k^{-4},
			\end{equation}
			where the first inequality is due to \eqref{eqa:c ***}, and the last inequality is form \eqref{eqa:c **}.
			Combining  \eqref{eqa:c *3} and \eqref{eqa:c *4}, it follows that for any $t\ne t_0$,
			\begin{equation}\label{eqa:c *5}
				\big|\Exp[y_t] -1   \big|\, \le\, 2k^{-4}.
			\end{equation}
			On the other hand, \eqref{eqa:lower condition 1} implies that $\cn \sqrt{\ln k}\ge 15$. Therefore, from the definition of $y_t$, we have $y_{t_0}\in\big(1-8/15,\,1+10/15\big)$. Therefore,
			\begin{equation}\label{eqa:c *6}
				\big|\Exp[y_{t_0}] -1   \big|\, \le\, \frac23.
			\end{equation}
			Combining \eqref{eqa:c *5} and \eqref{eqa:c *6}, we obtain
			\begin{equation}
				\begin{split}
					\big|\beta - 2^{-kn}    \big| \, &=\, 2^{-kn} \, \Big|  \frac1k \sum_{t=1}^{k}\Exp[y_{t}]\,-\,1  \Big|\\
					&\le\, \frac{2^{-kn}}k \,\sum_{t=1}^{k}\big|  \Exp[y_{t}]-1  \big|\\
					&\le\, \frac{2^{-kn}}k \, \left[(k-1)\times 2k^{-4}\,+\, \frac23    \right]\\
					&\le\, \frac{2^{-kn}}k,
				\end{split}
			\end{equation}
			where the first equality is from the definition of $\beta$ in \eqref{eqa:lemI def beta}, the third inequality follows from \eqref{eqa:c *5} and \eqref{eqa:c *6}, and the last inequality is due to the assumption $k^{-3}\le1/6$ (see \eqref{eqa:lower condition 2} with identification $k=mB$).
			This completes the proof of Claim~\ref{claim:beta}.
		\end{proof}
		
		We proceed with the proof of the lemma.
		Since $y_1,\ldots,y_k$ are independent random variables over an interval of length ${18}/\cn \sqrt{\ln k}$, employing the Hoeffding's inequality we have 
		\begin{equation} \label{eqa:lemI *9}
			\begin{split}
				\Pr\bigg( \big| P(\psi) - 2^{-kn}  \big| \ge\, &\frac{23\,\times\,2^{-nk}}{\cn \sqrt{k}} +\frac{2^{-kn}}k  \bigg)\, 
				\le\,\Pr\left( \big| P(\psi) -\beta  \big| \ge\, \frac{23\,\times\,2^{-nk}}{\cn \sqrt{k}}   \right)\\
				&\le\, \Pr\left( \big| P(\psi) -\beta  \big| \ge\, \frac{23\,\times\,2^{-nk}}{\cn \sqrt{k}} \,\,\bigg|\,\, P(\psi) = \frac{2^{-kn}}{k} \sum_{t=1}^{k} y_t \right)\\
				&\qquad\, + \Pr\left( P(\psi) \ne \frac{2^{-kn}}{k} \sum_{t=1}^{k} y_t \right)\\
				&\le\, \Pr\left( \Big| \frac1{k} \sum_{t=1}^{k} y_t -2^{kn}\beta  \Big| \ge\, \frac{23}{\cn \sqrt{k}}   \right) \,+\, 2k^{-3}\\
				&\le\, 2 \exp\left( \frac{-2k\, \big(23/\cn \sqrt{k}\big)^2}{\big(18/\cn \sqrt{\ln k}\big)^2}   \right) \,+\, 2k^{-3}\\
				&=\, 2 \exp\left( \frac{-2\times 23^2}{18^2}\, \ln k   \right) \,+\, 2k^{-3}\\
				&\le\, 2 \exp\big( -3 \ln k   \big) \,+\, 2k^{-3}\\
				&\le\, 4k^{-3},
			\end{split}
		\end{equation}
		where the first inequality follows from Claim~\ref{claim:beta}, the third inequality is due to  \eqref{eqa:lemI **8.5}  and the fourth inequality follows from the Hoeffding's inequality (see Lemma~\ref{lemma:CI}~(a))     and the definition of $\beta$ in \eqref{eqa:lemI def beta}.
		
		Since $t_0$ was chosen arbitrarily, \eqref{eqa:lemI *9} holds when the biased coin has any index in $1,\ldots,k$, and as a result it also holds when the biased coin is chosen uniformly at random from $1,\ldots,k$.
		Finally, we	 define a subset $\Wb_2\subset \W$ as 
		\begin{equation} \label{eqa:lemI *10}
			\Wb_2\,\triangleq \, \left\{ w\in \W\,: \quad \big| P(\psi) -2^{-kn}  \big| \le\, \frac{23\,\times\,2^{-nk}}{\cn \sqrt{k}} + \frac{2^{-kn}}k  \right\},
		\end{equation}
		and let $\Wb = \Wb_1\bigcap\Wb_2$. Employing a union bound on \eqref{eqa:lemI *8} and \eqref{eqa:lemI *9}, it follows that $P\big(\Wb\big)\ge 1-6k^{-3}$. 
		Moreover, Equations \eqref{eqa:lemI pwt} and \eqref{eqa:lemI pw} in the lemma statement follow from \eqref{eqa:Pt 2} and \eqref{eqa:lemI *10}, respectively.
		This completes the proof of Lemma~\ref{lem:pw bound}.

		\medskip
		
		\subsection{Proof of Lemma~\ref{lem:HI}} \label{app:proof lem HI}
		In this appendix, we present the proof of Lemma~\ref{lem:HI}.  
		For Part~(a), let $f(x)=x\log_2(1/x)$. Then $f'(x)=\log_2(1/x) - \log_2e$, where $e\simeq 2.718$ is the basis of the natural logarithm. As a result, $f'(x)\ge -1.5$, for all $x\in (0,1]$. Consequently, for any $s\in \S$,
		\begin{equation}\label{eqa:small firstodrder bound}
			f\big(P(s)\big) \,\ge\, f\big(P(\bs)\big) - 1.5 \big(P(s)-P(\bs)\big)
		\end{equation}
		Then,
		\begin{equation*}
			\begin{split}
				H(S) \,&= \, \sum_{s\in\S}  P(s)\log_2\frac1{P(s)} \\
				&= \, \sum_{s\in\S}  f\big(P(s)\big)\\
				&\ge \, \sum_{s\in\S} \Big( f\big(P(\bs)\big) - 1.5 \big(P(s)-P(\bs)\big) \Big)\\
				&= \, \left(\sum_{s\in\S} f\big(P(\bs)\big)\right) \,-\,  1.5 \left(\sum_{s\in\S} P(s)\,-\, \sum_{s\in\S}P(\bs)\right)\\
				&= \, \left(\sum_{s\in\S} P(\bs)\log_2\frac1{P(\bs)}\right) \,-\,  1.5 \big(1-P(\Wb)\big)\\
				\,&\ge\,\left(\sum_{s\in\S}  P(\bs)\log_2\frac1{P(\bs)}\right)\,-\, 9k^{-3},
			\end{split}
		\end{equation*}
		where the first equality is from the definition of entropy (see \cite{CoveT06}, page 14), and the inequalities are due to \eqref{eqa:small firstodrder bound} and Lemma~\ref{lem:pw bound}, respectively.
		This completes the proof of Part~(a) of the lemma.

		\hide{
			For part~(a), note that for $x\in[0,1/e]$,  $x\log_2{\frac1x}$ is an increasing function of $x$, where $e\simeq 2.718$ is the basis of the natural logarithm. Therefore, for any $x,y\in [0,1/e]$ with $x\le y$, we have
			\begin{equation}\label{eqa:log increasing}
				x\log_2\frac1x \,\le \, y\log_2\frac1y.
			\end{equation}
			We consider three cases. In the first case, we have $P(s)\le 0.5$, for all $s\in\S$. In this case, it follows from \eqref{eqa:log increasing} that for any $s\in\S$, $P(\bs)\log_21/P(\bs)\le P(s)\log_21/P(s)$. Therefore,
			\begin{equation}
				H(S) \,= \, \sum_{s\in\S}  P(s)\log_2\frac1{P(s)} \,\ge\,  \, \sum_{s\in\S}  P(\bs)\log_2\frac1{P(\bs)},
			\end{equation}
			and part (a)  of the lemma follows in this case.
			
			In the second case, there exists exactly one $s_0\in\S$ with $P(s_0)>1/e$. 
			Let $f(x)=x\log_2(x)$. Then $f'(x)=\log_21/x - \log_2e$. As a result, $f'(x)\ge -1.5$, for all $x\in (0,1]$.
			Then, 
			\begin{equation} \label{eqa:psb09k3}
				\begin{split}
					P(\bs_0)\log_2\frac{1}{P(\bs_0)}
					\,&=\, f\big(P(\bs_0)\big)\\
					\,&\ge\, f\big(P(s_0)\big)\,-\,1.5\big(P(s_0)-P(\bs_0)\big)\\
					\,&\ge\, f\big(P(s_0)\big)\,-\,1.5P(\Wb)\\
					\,&\ge\, P(\bs_0)\log_2\frac{1}{P(\bs_0)}\,-\, 9k^-3.
				\end{split}
			\end{equation}
			Therefore, 
			\begin{equation}
				\begin{split}
					H(S) \,&= \, \sum_{s\in\S}  P(s)\log_2\frac1{P(s)} \\
					\,&\ge\,  \, P(s_0)\log_2\frac1{P(s_0)} + \sum_{\substack{s\in\S \\ s\ne_s0}}  P(\bs)\log_2\frac1{P(\bs)}\\
					\,&\ge\,  \, P(\bs_0)\log_2\frac1{P(\bs_0)} - 9k^-3 + \sum_{\substack{s\in\S \\ s\ne_s0}}  P(\bs)\log_2\frac1{P(\bs)}\\
					\,&= \, \sum_{\bs\in\S}  P(\bs)\log_2\frac1{P(\bs)} \,-\, 9k^-3 
				\end{split}
			\end{equation}
			
			In our third case, there exist $s_1,s_2\in\S$ with $P(s_1),P(s_2) \in (1/e,1-1/2]$, and  $P(s)\le 1/e$ for all $s\not\in \{s_1,s_2\}$. 
			Then, similar to \eqref{eqa:psb09k3}, we have
			
			\begin{equation} \label{eqa:psb1210k3}
				\begin{split}
					P(\bs_1)\log_2\frac{1}{P(\bs_1)} \,+\, P(\bs_2)\log_2\frac{1}{P(\bs_2)}
					\,&=\, f\big(P(\bs_1)\big) \,+\, f\big(P(\bs_2)\big)\\
					\,&\ge\, \Big(f\big(P(s_1)\big)\,-\,1.5\big(P(s_1)-P(\bs_1)\big)\Big)\,+\, \Big(f\big(P(s_2)\big)\,-\,0.8\big(P(s_2)-P(\bs_2)\big)\Big)\\
					\,& =\, f\big(P(s_1)\big) \,+\, f\big(P(s_2)\big) \,-\,1.5 \Big[ \big(P(s_1)+P(s_2)\big) -  \big(P(\bs_1)+P(\bs_2)\big)    \Big]\\
					\,&\ge\, f\big(P(s_1)\big) \,+\, f\big(P(s_2)\big)\,-\,1.5P(\Wb)\\
					\,&\ge\, f\big(P(s_0)\big)\,-\, 9k^-3.
				\end{split}
			\end{equation}

			\begin{equation} \label{eqa:pbs 031}
				P(\bs_0)\,= \, P(s_0)- \big(P(s_0)-P(\bs_0)\big) \,\ge\, P(s_0) - P(\Wb) \,\ge \,\frac1e - 6k^3\,>\, \frac1e -\frac{6}{125} > 0.31,
			\end{equation}
			where the last inequality is because of assumption $k\ge 5$ (see \eqref{eqa:kge3}).
			Let $f(x)=x\log_2(x)$. Then $f'(x)=\log_21/x - \log_2e$. Therefore, it is easy to check that  $f'(x)<1/4$ for all $x\ge 0.31$. 
			Eq. \eqref{eqa:pbs 031} then implies that $f'(x)<1/4$ for all $x\ge P(\bs_0)$. 
			It then follows from the mean value theorem that there exists a $\xi \ge P(\bs_0)$ such that 
			\begin{equation}
				f\big(P(s_0)\big) \,=\, f\big(P(\bs_0)\big) \,+\, \big(P(s_0)-P(\bs_0)\big) f'(\xi)
				\le\, f\big(P(\bs_0)\big)  \,+\, \frac{P(s_0)-P(\bs_0)}4
				\le\, f\big(P(\bs_0)\big)  \,+\, \frac{P(\Wb)}4
				\le\, f\big(P(\bs_0)\big) \,+\, 1.5 k^{-3}.
			\end{equation}
			Therefore,  
			
			it follows from the Lipschitz continuity of $x\log_2 x$
		}

		We now proceed to the proof of Part~(b). Following similar steps as in the proof of Part~(a), it can be shown than for $t=1,\ldots,k$, 
		\begin{equation}\label{eqa:lemHI **}
			\begin{split}
				H\big(S\mid T=t\big) \, &\triangleq\, \sum_{s\in\S} P\big(s\mid T=t\big)\log_2 \frac{1}{P\big(s\mid T=t\big)} \\
				&\ge\, \left(\sum_{s\in\S} P\big(\bs\mid T=t\big)\log_2 \frac{1}{P\big(\bs\mid T=t\big)}\right) \,- \, \frac9{k^3}.
			\end{split}
		\end{equation}
		Let $\Wb^c$ be the complement of the set $\Wb$, and for any $s\in\S$, let $\tilde{s}\triangleq s\bigcap \Wb^c$. Then, from Lemma~\ref{lem:pw bound}, we have
		\begin{equation} \label{eqa:stilde pwc}
			\sum_{s\in\S } P(\tilde{s}) \, =\, P\big(\Wb^c\big) \,\le\, 6k^{-3}.
		\end{equation}
		It is easy to verify that $x\log_2(1/x)\le 3.2x^{5/6}$, for all $x\ge0$. Then, 
		\begin{equation}\label{eqa:3.2k2.5}
			6k^{-3}\log_2 \frac{k^{3}}6 \, \le\, 3.2\,\big(6k^{-3}\big)^{5/6} \,\le\,  15 k^{-2.5}
		\end{equation}
		
		Let $|\S|$ be the number of elements in $\S$. Since $\S$ comprises the set of all $B$-bit signals, we have $|\S|=2^B$.
		It follows from the Jensen's inequality (see \cite{CoveT06}, page 25) that for fixed $\sum_{s\in\S } P(\tilde{s})$, the value of $\sum_{s\in\S } P(\tilde{s})\log_2\big(1/P(\tilde{s})\big)$ is maximized when all $P(\tilde{s})$, for $s\in\S$, have equal probability. 
		Therefore, 
		\begin{equation}\label{eqa:bound sum stilde}
			\begin{split}
				\sum_{s\in\S } P(\tilde{s})\log_2\frac1{P(\tilde{s})} \, &\le \, \sum_{s\in\S} \frac{\sum_{s\in\S } P(\tilde{s})}{|S|}\, \log_2 \frac{|S|}{\sum_{s\in\S } P(\tilde{s})}\\
				&= \, P\big(\Wb^c\big)\, \log_2 \frac{|S|}{P\big(\Wb^c\big)}\\
				&= \, P\big(\Wb^c\big) B \, +\, P\big(\Wb^c\big)\log_2\frac1{P\big(\Wb^c\big)}\\
				&\le \, \frac{6B}{k^3} \, +\, P\big(\Wb^c\big)\log_2\frac1{P\big(\Wb^c\big)}\\
				&\le \, \frac{6B}{k^3} \, +\, 6k^{-3}\log_2 \frac{k^{3}}6\\
				&\le \, \frac{6B}{k^3} \, +\, \frac{15}{k^{2.5}}\\
				&= \, \frac{6B + 15\sqrt{k}}{k^3},
			\end{split}
		\end{equation}
		where the first equality follows from \eqref{eqa:stilde pwc}, the second equality is because $|\S|=2^B$,  the second inequality is due to \eqref{eqa:stilde pwc},  the third inequality is again because of \eqref{eqa:stilde pwc} and the fact that $x\log_2(1/x)$ is an increasing function for $x\in [0,1/e]$, and the last inequality follows from \eqref{eqa:3.2k2.5}.
		Consequently,
		\begin{equation} \label{eqa:lemHI *A}
			\begin{split}
				H(S)\, &=\, \sum_{s\in\S } P(s)\log_2\frac1{P(s)}\\
				&=\, \sum_{s\in\S } \big(P(\bs)+P(\tilde{s})\big)\,\log_2\frac1{P(\bs)+P(\tilde{s})}\\
				&\le\, \sum_{s\in\S } P(\bs)\log_2\frac1{P(\bs)} \, +\, \sum_{s\in\S } P(\tilde{s})\log_2\frac1{P(\tilde{s})}\\
				&\le\, \sum_{s\in\S } P(\bs)\log_2\frac1{P(\bs)} \, +\, \frac{6B + 15\sqrt{k}}{k^3},
			\end{split}
		\end{equation}
		where the last inequality is due to \eqref{eqa:bound sum stilde}.
		
		On the other hand, for any $x,y>0$, we have
		\begin{equation}\label{eqa:lemHI***}
			\begin{split}
				y\log_2y - x\log_2 x\,&= \, (y-x)\log_2x \,+\, y\big(\log_2 y - \log_2x\big)\\
				&=\, (y-x)\log_2x \,+\, y\log_2 \frac{y}{x}\\
				&\le\, (y-x)\log_2x \,+\, \frac{y}{\ln2}\,\left(\frac{y}{x}-1\right)\\
				&=\, (y-x)\log_2x \,+\, \frac{1}{\ln2}\left[\frac{y^2-yx}{x}\,+\,\frac{x^2-yx}{x}\,-\,\frac{x^2-yx}{x} \right]\\
				&=\, (y-x)\log_2x \,+\, \frac{1}{\ln2}\left[\frac{x^2+y^2-2yx}{x}\,-\,(x-y) \right]\\
				&=\, (y-x)\log_2(xe) \,+\, \frac{(x-y)^2}{x \ln2},
			\end{split}
		\end{equation}
		where the inequality is because $\log_2\alpha\le  (\alpha-1)/\ln2$, for all $\alpha>0$.
		Combining \eqref{eqa:lemHI **}, \eqref{eqa:lemHI *A}, and \eqref{eqa:lemHI***}, we obtain
		\begin{equation*}
			\begin{split}
				I(T;S)\,&=\, H(S)\,-\, \sum_{t=1}^k P\big(T=t\big) H\big(S\mid T=t\big)\\
				&=\, H(S)\,-\, \frac1k \sum_{t=1}^k H\big(S\mid T=t\big)\\
				&\le\, H(S)\,-\, \frac1k \sum_{t=1}^k  \left(\sum_{s\in\S} P\big(\bs\mid t\big)\log_2 \frac{1}{P\big(\bs\mid t\big)} \,- \, \frac9{k^3}\right)\\
				&\le\, \sum_{s\in\S } P(\bs)\log_2\frac1{P(\bs)} \, +\, \frac{6B + 15\sqrt{k}}{k^3}\\
				&\qquad-\, \frac1k \sum_{t=1}^k  \left(\sum_{s\in\S} P\big(\bs\mid t\big)\log_2 \frac{1}{P\big(\bs\mid t\big)} \,- \, \frac9{k^3}\right)\\
				&=\, \frac1k \sum_{t=1}^k  \sum_{s\in\S}\Big(  P\big(\bs\mid t\big)\log_2 P\big(\bs\mid t\big) \,-\, P(\bs)\log_2P(\bs)\Big) \, +\, \frac{9+6B + 15\sqrt{k}}{k^3}  \\
				&\le\, \frac1k \sum_{t=1}^k  \sum_{s\in\S}\left[  \Big(P\big(\bs\mid t\big)-P(\bs)\Big) \log_2\big(P(\bs)e\big) \,+\, \frac{\big(P(\bs)-P(\bs\mid t) \big)^2}{P(\bs)\ln2}\right] \\
				&\qquad +\, \frac{9+6B + 15\sqrt{k}}{k^3}  \\
				&=\, \sum_{s\in\S}  \log_2\big(P(\bs)e\big)   \left[ \left(\frac1k \sum_{t=1}^k P\big(\bs\mid t\big)\right)\,-\,P(\bs)\right]\\
				&\qquad+\,\frac1{k\ln2}   \sum_{s\in\S}P(\bs)\sum_{t=1}^k  \left(\frac{P(\bs)-P(\bs\mid t) }{P(\bs)}\right)^2 \, +\, \frac{9+6B + 15\sqrt{k}}{k^3}  \\
				&=\, \sum_{s\in\S}  \log_2\big(P(\bs)e\big)  \times 0\\
				&\qquad+\,\frac1{k\ln2}   \sum_{s\in\S}P(\bs)\sum_{t=1}^k  \left(\frac{P(\bs\mid t) }{P(\bs)}-1\right)^2 \, +\, \frac{9+6B + 15\sqrt{k}}{k^3}  \\
				&=\frac1{k\ln2}   \sum_{s\in\S}P(\bs)\sum_{t=1}^k  \left(\frac{P(\bs\mid T=t) }{P(\bs)}-1\right)^2 \, +\, \frac{9+6B + 15\sqrt{k}}{k^3},
			\end{split}
		\end{equation*}
		where the first equality is from the definition of mutual information (see \cite{CoveT06}, page 20), the first inequality is due to \eqref{eqa:lemHI **}, the second inequality is from \eqref{eqa:lemHI *A}, and the third inequality follows from \eqref{eqa:lemHI***}. 
		This completes the proof of Lemma~\ref{lem:HI}.

		\medskip
		
		\subsection{Proof of Lemma~\ref{lem:alpha}} \label{app:proof lem alpha}
		Let $\U^+$ be a subset of $\{0,1\}^n$ that contains the $2^{n-1}$ elements $u\in \{0,1\}^n$ with largest values of $P(u)$. 
		Also let  $\U^-$ be a subset of $\{0,1\}^n$ that contains the $2^{n-1}$ elements $u\in \{0,1\}^n$ with smallest values of $P(u)$. 
		Then $\U^+$ and $\U^-$ are disjoint sets with $\U^+\bigcup\U^-= \{0,1\}^n$. Moreover, for any $u\in \U^+$ and any $v\in \U^-$, we have $P(u)\ge P(v)$.
		Let $\theta\triangleq P(\U^+)=\sum_{u\in \U^+}P(u)$. Then, $P(\U^-)= 1-\theta$. 
		Since $\sum_{u\in\{0,1\}^n}\alpha_u=0$ and $\alpha_u\in[-1,1]$, for all $u\in\{0,1\}^n$, it is easy to see that $\sum_{u\in\{0,1\}^n}\alpha_u P(u)$ is maximized for the following choice of alpha: 
		\begin{equation}
			\alpha_u=\begin{cases}
				1,& u\in \U^+ \\ -1, & u\in\U^-.
			\end{cases}
		\end{equation}
		Therefore, for any choice of $\alpha_u$,  $u\in\{0,1\}^n$, that satisfy the conditions in the lemma statement, we have
		\begin{equation}\label{eqa:lemalpha 1}
			\begin{split}
				\left( \sum_{u\in\{0,1\}^n} \alpha_u P(u)  \right)^2\, &\le\, \left( \sum_{u\in \U^+} P(u)  - \sum_{u\in \U^-} P(u)\right)^2\\
				&=\, \big(P(\U^+)  - P(\U^-) \big)^2 \\
				&=\, \big(2\theta-1 \big)^2.
			\end{split}
		\end{equation}
		
		Since each of $\U^+$ and $\U^-$ has $2^{n-1}$ elements, it follows that
		\begin{equation}
			H\big(U \mid U\in\U^+\big) \le n-1 \quad \textrm{and}\quad H\big(U \mid U\in\U^-\big) \le n-1.
		\end{equation}
		It then follows from the \emph{grouping axiom} (see \cite{Ash90}, page 8) that 
		\begin{equation}\label{eqa:lemalpha 2}
			\begin{split}
				H(U) \,&=\, h(\theta)\, + \, \theta H\big(U \mid U\in\U^+\big) \, +\, (1-\theta)H\big(U \mid U\in\U^-\big)\\
				&\le\,  h(\theta)\,+\, \theta(n-1)\,+\, (1-\theta)\, (n-1)\\
				&=\, h(\theta)\,+\, n-1,
			\end{split}
		\end{equation}
		where $h(\theta)=\theta\log_2(1/\theta)+(1-\theta)\log_2\big(1/(1-\theta)\big)$ is the entropy of a binary random variable that equals $1$ if $U\in\U^+$ and equals $0$ otherwise.
		
		Consider the function $f(x)=(2x-1)^2 + 1.5 h(x)$, defined for $x\in[0,1]$. Then, for any  $x\in(0,1)$,
		\begin{equation}
			f''(x) \,=\, 8 - \frac{1.5}{x\ln2} - \frac{1.5}{(1-x)\ln2}  \, \le\, 8 - 2\left(\frac{1}{x} + \frac{1}{1-x}\right) \,\le\, 0.
		\end{equation}
		Hence, $f$ is a concave function and is symmetric over $[0,1]$. Therefore, $f(x)$ takes its maximum at $x=1/2$. As a result, for any $x\in[0,1]$,
		\begin{equation}\label{eqa:lemalpha 3}
			\big(2x-1\big)^2\, +\, 1.5h(x)\,=\, f(x) \,\le\, f(1/2)\, =\, 1.5.
		\end{equation}
		Combining \eqref{eqa:lemalpha 1}, \eqref{eqa:lemalpha 2}, and \eqref{eqa:lemalpha 3}, we obtain
		\begin{equation}
			\begin{split}
				\left( \sum_{u\in\{0,1\}^n} \alpha_u P(u)  \right)^2\, +\, 1.5H(U)\, &\le\, \big(2\theta-1 \big)^2 \, +\, 1.5H(U)\\
				&\le\, \big(2\theta-1 \big)^2 \, +\, 1.5h(\theta)\,+\, 1.5(n-1)\\
				&\le\, 1.5\,+\, 1.5(n-1)\\
				&=\, 1.5n,
			\end{split}
		\end{equation}
		where the inequalities are respectively due to \eqref{eqa:lemalpha 1}, \eqref{eqa:lemalpha 2}, and \eqref{eqa:lemalpha 3}. This implies \eqref{eqa:alpha 1.5} and completes the proof of Lemma~\ref{lem:alpha}.


		\medskip
		
		\section{Proofs of Lemmas for the Centralized Lower Bound Proof in Section~\ref{subsec:pr0}}
		\subsection{Proof of Lemma~\ref{lem:9 coin}}\label{app:proof lem 9 coins}
		For $i=1,\ldots,9$ and $j=1,\ldots, mn$, let  $x_{j}^i\in \{-1,1\}$ be the outcome of $j$th flip of the $i$th coin.
		For $i=1,\ldots,9$, let  $N^i = (x^i_1+1)/2+\cdots+(x^i_{mn}+1)/2$ be the total number of observed $1$s for the $i$th coin. 
		We assume that the index of the biased coin is unknown and has a uniform prior. 
		According to the 
		Neyman-Pearson lemma (see page 59 in \cite{Lehm06}), the \emph{most powerful} test is the the likelihood ratio test that outputs a coin index $\hat{T}=i$ with the maximum value of $N^i$.
		Below, we derive a lower bound on the error probability of the above test, i.e., $\Pr\big(\hat{T}\ne T\big)$. 
		
		Without loss of generality assume that $T=1$.  Then, $\E[x^1_1]=1/2\sqrt{mn}$, $\var(x^1_1) = 1-1/4mn$, and $\E\big[ x^1_1-\E[x^1_1] \big]^3 = 1-1/16m^2n^2$.
		Let
		\begin{equation} \label{eq:def Y iid sum}
			Y^1 \,=\, \frac{\sum_{j=1}^{mn} \big(x^1_j - \E[x^1_j]\big)}{\sqrt{mn \,\var(x^1_1)} },
		\end{equation}
		and for $i=2,\ldots,9$ let $Y^i=x^i_1+\ldots+ x^i_{mn}$. 
		Then, 
		\begin{equation}\label{eq:N1}
			N^1 \,=\, \frac{mn}2 \,+\, \frac{\sqrt{mn}\sqrt{1-1/4mn}\, Y^1 + \sqrt{mn}/2}2,
		\end{equation}
		and for $i=2,\ldots,9$,
		\begin{equation}\label{eq:N2}
			N^i \,=\, \frac{mn}2 \,+\, \frac{\sqrt{mn}\, Y^i}2.
		\end{equation}
		It then follows from the Berry-Esseen theorem (see  \cite{Serf09}, page 33) that for any $i\le9$ and any $t\in\R$,
		\begin{equation}\label{eq:ber ess}
			\big| \Pr\big( Y^i>t \big) -Q(t)  \big| \,\le\, \frac{33}4 \,\frac{\E\big[ x^i_1-\E[x^i_1] \big]^3}{\var(x^i_1)^{1.5}\, \sqrt{mn}},
		\end{equation}
		where $Q(\cdot)$ is the Q-function of the standard normal distribution. 
		Therefore, ,
		\begin{equation} \label{eq:ber ess Y1}
			\begin{split}
				\Pr\left(N^1 > \frac{mn}2  +  0.4 \sqrt{mn}\right) \, &=\, \Pr\left( \frac{\sqrt{mn}\sqrt{1-1/4mn}\, Y^1 + \sqrt{mn}/2}2 >   0.4 \sqrt{mn}\right)\\
				&=\, \Pr\left( Y^1  >   \frac{0.3}{\sqrt{1-1/4mn}} \right)\\
				&\le\, Q\left(  \frac{0.3}{\sqrt{1-1/4mn}} \right) + \frac{33}4 \,\frac{1+1/4mn}{(1-1/4mn)^{1.5} \sqrt{mn}}\\
				&\le \, 0.3961,
			\end{split}
		\end{equation}
		where the first equality is due to \eqref{eq:N1}, the first inequality follows from \eqref{eq:ber ess}, and the last inequality is from the assumption $mn\ge350000$ in \eqref{eq:mn bound}. 
		In the same vein, for $i=2,\ldots,9$,
		\begin{equation} \label{eq:ber ess Y2}
			\begin{split}
				\Pr\left(N^i \le \frac{mn}2  +  0.4 \sqrt{mn}\right) \, &=\, \Pr\left(  \frac{\sqrt{mn}\, Y^i}2 \le   0.4 \sqrt{mn}\right)\\
				&=\, \Pr\left( Y^i  \le   0.8 \right)\\
				&\le\, 1-Q\left(  0.8 \right) + \frac{33}{4 \sqrt{mn}}\\
				&\le \, 0.8021,
			\end{split}
		\end{equation}
		where the first equality is due to \eqref{eq:N2}, the first inequality follows from \eqref{eq:ber ess}, and the last inequality is from the assumption $mn\ge350000$ in \eqref{eq:mn bound}. 
		Consequently,
		\begin{equation}\label{eq:8286}
			\Pr\left(\max\big(N^2,\ldots,N^9\big) >  \frac{mn}2  +  0.4 \sqrt{mn} \right) \,=\, 1- \Pr\left(N^2 \le \frac{mn}2  +  0.4 \sqrt{mn}\right)^8
			\, \ge\, 1- 0.8021^8 \,>\, 0.8286.
		\end{equation}
		Finally, for the error probability of the aforementioned maximum likelihood test, we have
		\begin{equation}
			\begin{split}
				\Pr\big(\hat{T}\ne T\big) &\, =\, \Pr\Big(\max\big(N^2,\ldots,N^9\big) > N^1 \Big)\\
				&\,\ge\, \Pr\left(\max\big(N^2,\ldots,N^9\big) >  \frac{mn}2  +  0.4 \sqrt{mn} \quad \mbox{and}\quad N^1\le \frac{mn}2  +  0.4 \sqrt{mn} \right)\\
				&\,=\, \Pr\left(\max\big(N^2,\ldots,N^9\big) >  \frac{mn}2  +  0.4 \sqrt{mn} \right)\,\times\, \Pr\left( N^1\le \frac{mn}2  +  0.4 \sqrt{mn} \right)\\
				&\,\ge\, 0.8286\,\times\, \Pr\left( N^1\le \frac{mn}2  +  0.4 \sqrt{mn} \right)\\
				&\,\ge\,  0.8286\,\times\,  (1-0.3961)\\
				&\,>\,\frac12,
			\end{split}
		\end{equation}
		where the second equality is due to the independence of different coins, the second inequality follows from \eqref{eq:8286}, and the third inequality is from \eqref{eq:ber ess Y1}.
		This completes the proof of Lemma~\ref{lem:9 coin}.

		\subsection{Proof of Lemma~\ref{lem:centralized}} \label{app:proof centralized bound}
		Consider a function $\tilde{h}:\R^d\to\R$ as follows. For any $\theta\in\R^n$,
		\begin{equation*}
			h(\theta)\,=\, \begin{cases}
				1/2 - \|\theta\|\quad& \textrm{if } \|\theta\| \le 1/2,\\
				0& \textrm{otherwise.}
			\end{cases}
		\end{equation*}
		Let $\tilde\G=\{-1,0,1\}^2$ be the integer grid with 9 points inside $[-1,1]^2$. 
		To any function $\sigma:\tilde\G\to\{-1,1\}$, we associate a  function $\tilde{f}_\sigma(\theta)\,\triangleq\, \sum_{p\in\tilde\G} \sigma(p) \,\tilde{h}(\theta-p)$ for all $\theta\in\R^n$.

		For any $p\in\tilde\G$, we define a  probability distribution $\tilde{P}_p$ over  functions $\tilde{f}_\sigma$ as follows. For any $\sigma:{\tilde\G}\to\{-1,1\}$,
		\begin{equation*}
			\tilde{P}_p(\tilde{f}_\sigma)\, = 2^{-9}\,\left(1-\frac{\sigma(p)}{2\sqrt{mn}}\right).
		\end{equation*}
		Intuitively, when a function $\tilde{f}_\sigma$ is sampled from $\tilde{P}_p$, it is as if for every $q\in\tilde\G$ with $q\ne p$, we have $\Pr\big(\sigma(q)=1\big)=\Pr\big(\sigma(q)=-1\big)=1/2$, and for $q=p$ we have $\Pr\big(\sigma(p)=1\big)=1/2- 1/\big(4\sqrt{mn}\big)$.
		This is like, the values of $\sigma(q)$ for $q\ne p$ are chosen independently at random according to the outcome of a fair coin flip, while the value of $\sigma(p)$ is the outcome of an unfair coin flip with bias $-1/\big(4\sqrt{mn}\big)$.
		Similar to \eqref{eq:F h}, it is easy to show that $F(\theta)=h\big(\theta-p\big)/2\sqrt{mn}$.
		Therefore,  under probability distribution $P_p$, $\theta^*=p$ is the global minimizer of $F(\cdot)$, and for any $\theta\in \R^n$ with $\|\theta-p\|\ge 1/2$, we have $F(\theta)\,\ge\,F(\theta^*) + 1/4\sqrt{mn}$.
		Therefore, if there exists an estimator under which $F(\hat\theta)\,<\,F(\theta^*) + 1/4\sqrt{mn}$, with probability at least $1/2$, then we have $\|\hat\theta-p\|<1/2$ with probability at least $1/2$.
		In this case, $p$ is the closest grid-point of $\G$ to $\hat\theta$, and we can recover $p$ from $\hat\theta$, with probability at least $1/2$.
		This contradict Lemma~\ref{lem:9 coin}. Consequently, under any estimator, we have $F(\hat\theta)\,\ge\,F(\theta^*) + 1/4\sqrt{mn}$, with probability at least $1/2$.
		This completes the proof of Lemma~\ref{lem:centralized}.



		
		\medskip
		\section{Proof of Lemmas for the Upper Bound Proof in Section~\ref{sec:proof main alg c}}
		\subsection{Proof of Lemma~\ref{lem:prob of E2 upper}} \label{app:proof lem E2 upper}
		We begin with a simple inequality:
		for any $x\in[0,1]$ and any $k>0$,
		\begin{equation}\label{eq:power exp min}
			1-(1-x)^k \, \ge\, 1- e^{-kx} \, \ge\, \frac12 \min\big(kx,1\big).
		\end{equation}
		
		Let $Q_{p}$ be the  probability that $p$ appears in the $p$-component of at least one of the sub-signals of machine $i$.
		Then, for $p \in G^l$,
		\begin{equation*} \label{eq:Qp and qp for p}
			\begin{split}
				Q_{p} \,&=\, 1-\left(1-2^{-dl}\times\frac{2^{(d-2)l}}{\sum_{j=1}^t 2^{(d-2)j}}\right)^{\lfloor B/d\log_2 mn  \rfloor} \\
				\, &\ge\, \frac12 \min\left(\frac{2^{-2l}\,\big\lfloor B/(d\log_2 mn)  \big\rfloor}{\sum_{j=1}^t 2^{(d-2)j}},\,1\right)\\
				\, &\ge\, \frac12 \min\left(\frac{2^{-2l} B}{2d\ln (mn) \,\sum_{j=1}^t 2^{(d-2)j}},\,1\right),
			\end{split}
		\end{equation*}
		where the equality is due to the probability of a point $p$ in   $G^l$ (see \eqref{eq:prob choose p c})  and the number  $\lfloor B/(d\log_2 mn)  \rfloor$  of sub-signals per machine, and the first inequality is due to \eqref{eq:power exp min}. Then, 
		\begin{equation}
			\begin{split}
				\mathbb{E}\big[N_{p}\big]\,&=\,  Q_{p} m
				\,\geq\, \min\left(\frac{2^{-2l} m B}{4d\ln (mn) \, \sum_{j=1}^t 2^{(d-2)j}},\, \frac{m}{2}\right).
			\end{split}
			\label{eq:8star 1}
		\end{equation}

		We now  bound the two terms on the right hand side of \eqref{eq:8star 1}.
		For the second term on the right hand side of \eqref{eq:8star 1}, we have
		\begin{equation}
			\begin{split}
				\frac{m}{2} \,& =\, \frac{m\epsilon^2}{2\epsilon^2} \\
				\,&\ge\,\frac{16 md\,\ln^4 mn}{2 mn \epsilon^2} \\
				\,&=\, \frac{8d\,\ln^4 mn}{ n \epsilon^2},
			\end{split}
			\label{eq:8star 3}	
		\end{equation}
		where the first inequality is from the definition of $\epsilon$ in \eqref{eq:def eps upper}.
		For the first term at the right hand side of \eqref{eq:8star 1}, note that
		\begin{equation}\label{eq:1delta vs m}
			t =\log_2(1/\delta) \,\le\, \log_2\left(\frac{\sqrt{m}}{\ln mn}\right) \,<\, \ln m.
		\end{equation}
		It follows that for any $d\ge 1$,
		\begin{equation*}\label{eq:t 2tmax bound}
			\begin{split}
				\sum_{j=1}^t 2^{(d-2)j} \,&\le\, t  2^{t(d-2)}\\ \,&\le\, \ln(mn)\, 2^{t(d-2)}\\
				&=\, \ln(mn)\, \left(\frac{1}\delta\right)^{(d-2)}\\
				&=\, \ln(mn)\,\delta^2\, \left(\frac{1}\delta\right)^{d}\\
				&\le\, \ln(mn)\,\delta^2\, \frac{mB}{\ln^{2d} mn}\\
				&=\, \ln(mn)\times \frac{n\epsilon^2}{16d\ln^2{mn}} \times \frac{mB}{\ln^{2d} mn}\\
				&\le\, \frac{nmB\epsilon^2}{16d \ln^5 mn},
			\end{split}
		\end{equation*}
		where the second inequality is due to \eqref{eq:1delta vs m}, the third inequality follows from the definition of $\delta$, the third equality is from the definition of $\epsilon$ in \eqref{eq:def eps upper}, and the last inequality is because of the assumption $d\ge2$.
		Then,
		\begin{equation} \label{eq:mb over n sigma bound c}
			\begin{split}
				\frac{2^{-2l} m B}{4d\ln (mn) \, \sum_{j=1}^t 2^{(d-2)j}}\,&\ge\, 
				\frac{2^{-2l} m B}{4d\ln (mn) } \times \frac{16d \ln^5 mn}{nmB\epsilon^2}\\
				&=\,\frac{4\ln^4(mn)\, 2^{-2l}}{n\epsilon^2}.
			\end{split}
		\end{equation}
		Consequently,
		\begin{equation} \label{eq:mb over n sigma bound c 22}
			\begin{split}
				\frac{2^{-2l} m B}{4d\ln (mn) \, \sum_{j=1}^t 2^{(d-2)j}}\,&\ge\, 
				\frac{4\ln^4(mn)\, 2^{-2l}}{n\epsilon^2}\\
				&\, \ge\, \frac{4\ln^4(mn)\, 2^{-2t}}{n\epsilon^2} \\
				&\, =\, \frac{4\ln^4(mn)\, \delta^2}{n\epsilon^2} \\
				&\, =\, \frac{4\ln^4(mn)\, \delta^2}{16d \delta^2 \,\ln^2 mn} \\
				&\,=\, \frac{\ln^2 (mn)}{4d}, 
			\end{split}
		\end{equation}
		where the first equality is due to the definition of $t=\ln_2 (1/\delta)$, and the second equality is from the definition of $\epsilon$.
		Plugging \eqref{eq:8star 3} and \eqref{eq:mb over n sigma bound c} into \eqref{eq:8star 1}, it follows that for  $l=1,\ldots,t$ and for any $p\in {G}^l$,
		\begin{equation}\label{eq:8stat s}
			\mathbb{E}\big[N_{p}\big] \,\ge\, \frac{4\ln^4(mn)\, 2^{-2l}}{n\epsilon^2}.
		\end{equation}	
		Moreover, plugging   \eqref{eq:mb over n sigma bound c 22}  into \eqref{eq:8star 1}, we obtain
		\begin{equation}\label{eq:frac to log 2}
			\begin{split}
				\frac18 \mathbb{E}\big[N_{p}\big] \,&\ge\, \frac18\,\min\left(  \frac{\ln^2 (mn)}{4d} ,\, \frac{m}2\right) \\
				&\, \ge\, \frac18\, \min\left(  \frac{\ln^2 (mn)}{4d} ,\, \frac{\ln^2 mn}2\right) \\
				&\, \ge\,\frac{\ln^2 (mn)}{32d},
			\end{split}
		\end{equation}
		where the second inequality is because of the assumption $m\ge \ln^2 mn$ in \eqref{eq:assum1}.
		Then, for  $l\in 1,\ldots,t$ and any $p\in \tilde{G}_{s^*}^l$,
		\begin{equation}\label{eq:pr NP log4}
			\begin{split}
				\Pr\left(N_p\leq \frac{2\ln^4(mn)\, 2^{-2l}}{n\epsilon^2}\right) \,&\leq\, \Pr\left(N_p\leq \frac{\mathbb{E}[N_p]}{2} \right)\\
				\,& \leq \,\exp\left(-(1/2)^2\mathbb{E}[N_p]/2\right)\\
				\,&\le\,\exp\big({-\ln^2(mn)/32d}\big),
			\end{split}
		\end{equation}
		where the inequalities are due to \eqref{eq:8stat s}, Lemma~\ref{lemma:CI}~(b), and \eqref{eq:frac to log 2}, respectively.
		Then,
		\begin{align*}
			\Pr\big(\mathcal{E} \big)\, 
			&=\, \Pr\left(N_p\geq  \frac{2\ln^4(mn)\, 2^{-2l}}{n\epsilon^2}   , \quad  \forall p\in G^l\mbox{ and for }l=1,\ldots,t\right)\\
			&\ge\,
			1- \sum_{l=1}^{t} \sum_{p\in G^l} \Pr\left(N_p< \frac{2\ln^4(mn)\, 2^{-2l}}{n\epsilon^2}\right)\\
			&\geq\, 1- t2^{dt} \exp\big(-\ln^2(mn)/(32d)\big)\\
			&=\, 1- \ln(1/\delta)\left(\frac1\delta\right)^d \exp\big(-\ln^2(mn)/(32d)\big)\\
			&\ge\, 1- {\ln(mn)}\,\frac{m^{d/2}}{\ln^d mn} \exp\big(-\ln^2(mn)/32d\big)\\
			&\ge\, 1- m^{d/2} \exp\big(-\ln^2(mn)/32d\big),
		\end{align*}
		where the first equality is by the definition of $\mathcal{E}$, the first inequality is from union bound, the second inequality is due to \eqref{eq:pr NP log4}, and the third inequality follows from \eqref{eq:1delta vs m} and the definition of $\delta$ in \eqref{eq:def delta c}.
		This completes the proof of Lemma~\ref{lem:prob of E2 upper}.


		\medskip
		\subsection{Proof of Lemma~\ref{lem:prob E3 upper}} \label{app:proof E3 upper}
		For any $l\le t$ and any $p\in G^l$, let
		$$\hat{\Delta}(p)= \frac{1}{N_p}\,\, \sum_{\substack{\mbox{\scriptsize{Subsignals of the form }} \\ (p,\Delta,\cdot,\cdot)\\ \mbox{\scriptsize{after redundancy elimination}}}} \Delta,$$
		and let $\Delta^*(p)= \mathbb{E}[\hat{\Delta}(p)]$.
		
		\hide{
			We first consider the case of $l=0$. Note that $G^0$ consists of a single point $p=s^*$.
			Moreover, the component $\Delta$ in each signal is the average over the gradient of $n/2$ independent functions. 
			Then, $\hat{\Delta}(p)$ is the average over the gradient of $N_p \times n/2$ independent functions.
			Given event $\mathcal{E}$, it follows from Hoeffding's inequality (Lemma~\ref{lemma:CI}~(a)) that
			\begin{equation}
				\begin{split}
					\Pr&\left(\big|\hat{F}(s^*)-F(s^*)\big|\geq \frac{\epsilon}{8\ln(mn)}\right)\\
					&\qquad\leq\, \exp\left(-N_{s^*}n \times \left(\frac{\epsilon}{8\ln(mn)}\right)^2  \right)\\
					&\qquad\leq \exp\left(-n\times \frac{\ln^{2d-6}(mn)(mB)^{2/d}}{4d} \times\frac{\epsilon^2}{64\, \ln^2(mn)}\right)\\
					&\qquad=\exp\left(\frac{-\ln^2(mn)}{512d}\right)\\
					&\qquad=\, \exp\Big(-\Omega\big(\ln^2(mn)\big)\Big).
				\end{split}
				\label{eq:10star}
			\end{equation}
		}
		
		For $l\geq 1$, consider a grid point $p\in G^l$ and let $p'$ be the parent of $p$.
		Then, $\|p-p'\|=\sqrt{d}\,2^{-l}$.
		Furthermore, by definition, for any function $f\in \F$, we have $|f(p)-f(p')|\leq \|p-p'\|$.
		Therefore, $\hat{\Delta}(p)$ is the average of $N_p \times n/2$ independent variables with absolute values no larger than $\sqrt{d}\, 2^{-l}$. 
		Given event $\mathcal{E}$, it then follows from the Hoeffding's inequality that
		\begin{align*}
			\Pr&\left(\big|\hat{\Delta}(p)-\Delta^*(p)\big|
			\geq\, \frac{\epsilon}{8\ln(mn)}\right)\\
			&\leq\, 2\exp\left(-{nN_p}\times\frac1{(2\sqrt{d}\,2^{-l})^2}\times \left(\frac{\epsilon}{8\ln mn}\right)^2\right)\\
			&\leq\, 2\exp\left(-n \times \frac{2\ln^4(mn)\, 2^{-2l}}{n\epsilon^2}\times \frac{1}{4d\,2^{-2l}}\times \frac{\epsilon^2}{64 \ln^2 mn}\right)\\
			&=\,2\exp\big(-\ln^2(mn)/128d\big),
		\end{align*}
		
		Recall from \eqref{eq:page17 c} that for $l=1,\ldots, t$ and any $p \in {G}^l$ with parent $p'$,
		$$\hat{F} (p)- F(p)\,=\,\hat{F} (p')-F(p')+\hat{\Delta}(p)-\Delta^*(p).$$
		Then,
		\begin{align*}
			\Pr&\left(|\hat{F} (p)-F(p)|>\frac{l\epsilon}{8\ln mn}\right)\\
			&\leq\,\Pr\left(|\hat{F} (p')-F(p')|>\frac{(l-1)\epsilon}{8\ln mn}\right)
			\, +\,\Pr\left(|\hat{\Delta}(p)-\Delta^*(p)|>\frac{\epsilon}{8\ln mn}\right)\\
			&\leq\,\Pr\left(|\hat{F} (p')-F(p')|>\frac{(l-1)\epsilon}{8\ln mn}\right) +2\exp\big(-\ln^2(mn)/128d\big).
		\end{align*}
		Employing an induction on $l$, we obtain for any $l\le t$ and any $p\in G^l$,
		$$\Pr\left(|\hat{F} (p)-F(p)|>\frac{l\epsilon}{8\ln mn}\right)
		\,\le\,2  l \,\exp\big(-\ln^2(mn)/128d\big).$$
		Therefore, 
		\begin{equation}\label{eq:this}
			\begin{split}
				\Pr\left(|\hat{F} (p)- F(p)|>\frac{\epsilon}{8}\right)\,
				&\le\,\Pr\left(|\hat{F} (p)-F(p)|>\frac{l\epsilon}{8\ln mn}\right)\\
				&\le\,2  \ln(m) \,\exp\big(-\ln^2(mn)/128d\big),
			\end{split}
		\end{equation} 
		where the inequalities are due to \eqref{eq:1delta vs m}.
		It then follows from the union bound that
		\begin{equation} \label{eq:E''' given E''}
			\begin{split}
				\Pr\big(\mathcal{E'} \mid \mathcal{E}\big)\, &\geq\,  1- \sum_{l=1}^{t} \sum_{p\in {G}^l}\Pr\left(|\hat{F} (p)-F(p)|>\frac{\epsilon}{8}\right)\\
				&\geq\, 1- t2^{dt} \times 2  \ln(m) \,\exp\big(-\ln^2(mn)/128d\big)\\
				&\ge\, 1- \ln(m) \times \left(\frac1\delta\right)^d \times 2  \ln(m) \,\exp\big(-\ln^2(mn)/128d\big)\\
				&\ge\, 1- \ln(m) \times \frac{m^{d/2}}{\ln^d mn}\times 2  \ln(m) \,\exp\big(-\ln^2(mn)/128d\big)\\
				&\ge \,1-2  m^{d/2} \,\exp\big(-\ln^2(mn)/128d\big),
			\end{split}
		\end{equation} 
		where the second inequality is due to \eqref{eq:this}, the third inequality follows from \eqref{eq:1delta vs m}, and the fourth inequality is from the definition of $\delta$.
		On the other hand, we have from Lemma~\ref{lem:prob of E2 upper} that $\Pr\big(\mathcal{E} \big)= 1-m^{d/2} \exp\big(-\ln^2(mn)/8d\big)$.
		Then, $\Pr\big(\mathcal{E'} \big)\ge 1-m^{d/2} \exp\big(-\ln^2(mn)/32d\big)-2m^{d/2} \exp\big(-\ln^2(mn)/128d\big)$ and Lemma~\ref{lem:prob E3 upper} follows.

		

		\medskip
		\subsection{Proof of Lemma~\ref{lem:prob E4 upper}} \label{app:proof E4 upper}
		Fix a machine $i$ and let $g(\theta)=(F^i(\theta)-F^i(p))-(F(\theta)-F(p))$, for all $\theta\in [-1,1]^d$.
		Note that for any function $f\in \mathcal{F}$, any $p\in G^t$ and any $\theta\in cell_p$, we have $|f(\theta)-f(p)|\leq \|\theta-p\|\leq \sqrt{d} \delta$. Then, $F^i(\theta)-F^i(p)$ is the average over $n/2$ randomly chosen such functions $f(\theta)-f(p)$ with the expected value $F(\theta)-F(p)$. It follows from Hoeffding's inequality (Lemma \ref{lemma:CI}) that:
		\begin{equation}\label{eq:g}
			\begin{split}
				\Pr\Big(\big|g(\theta)\big|>\frac{\epsilon}{16}\Big) &= \Pr\Big(\big|(F^i(\theta)-F^i(p))-(F(\theta)-F(p))\big|>\frac{\epsilon}{16}\Big) \\
				&\leq 2\exp\Bigg(-\frac{2\times n/2\times(\epsilon/16)^2}{(2\sqrt{d}\delta)^2}\Bigg)\\
				&=2\exp\Bigg(-n\Bigg(\frac{4\delta \sqrt{d}\ln(mn)}{16\sqrt{n}\times 2\sqrt{d}\delta}\Bigg)^2\Bigg)\\
				&=2\exp\Bigg(-\frac{\ln^2(mn)}{64}\Bigg),
			\end{split}
		\end{equation} 
		where the first equality is due to the definition of $\epsilon=\delta \sqrt{d}\ln(mn)/\sqrt{2n}$. 
		
		Consider  a regular grid $\mathcal{D}$ with edge size $\epsilon/16\sqrt{d}$ over $cell_p$. Then, 
		$$|\mathcal{D}| =\left(\frac{2\delta}{\epsilon/16\sqrt{d}}\right)^d = \left(\frac{32\delta \sqrt{d} \sqrt{n}}{4\delta \sqrt{d} \ln mn}\right)^d = \left(\frac{8 \sqrt{n}}{\ln mn}\right)^d \le n^{d/2},$$
		where the second inequality is due to the definition of $\epsilon$, and the last inequality is due to the assumption $\ln mn \ge 8\sqrt{d}$ in \eqref{eq:assum1}
		It then follows from \eqref{eq:g} and the union bound that with probability at least $1-2  n^{d/2} \exp\big(-\ln^2(mn)/64\big)$, we have 
		\begin{equation}\label{eq:**}
			|g(\theta)| \le  \frac{\epsilon}{16}, \quad \forall \theta \in \mathcal{D}.
		\end{equation}
		On the other hand the function $g(\theta)=(F^i(\theta)-F^i(p))-(F(\theta)-F(p))$ is the sum of two Lipschitz continuous functions, and is therefore  Lipschitz continuous with constant 2. 
		Consider an arbitrary $\theta\in cell_p$ and let $\theta'$ be the closest grid point in $\mathcal{D}$ to $\theta$. Then, $\|\theta-\theta'\| \le \epsilon/32$. 
		Then, assuming \eqref{eq:**}, we have
		\begin{equation}\label{eq:***}
			\begin{split}
				|g(\theta)| \,&\le\, |g(\theta)| + \big|g(\theta') - g(\theta)\big|\\
				&\le\,\frac{\epsilon}{16} + \big|g(\theta') - g(\theta)\big|\\
				&\le\,\frac{\epsilon}{16} + 2 \big\|\theta'-\theta\big\|\\
				&\le\,\frac{\epsilon}{16} + \frac{2\epsilon}{32}\\
				&=\,\frac{\epsilon}{8},
			\end{split}
		\end{equation}
		where the second inequality is due to \eqref{eq:**} and the third inequality follows from the Lipschitz continuity of $g$ with constant 2. 
		Employing union bound over all machines  $i$  and all cells $cell_p$ for $p\in G^t$,
		it follows from \eqref{eq:**} and \eqref{eq:***} that $\mathcal{E}''$ holds true with probability at least $1-2  n^{d/2} m^{1+d/2}\exp\big(-\ln^2(mn)/64\big)$.
		This completes the proof of Lemma~\ref{lem:prob E4 upper}.

		\ifCLASSOPTIONcaptionsoff
		\newpage
		\fi

	\end{document}